\newcommand*{\defeq}{\stackrel{\text{def}}{=}}
\title{Entropic Neural Optimal Transport \\ via Diffusion Processes}
\theoremstyle{plain}
\newtheorem{theorem}{Theorem}[section]
\newtheorem{proposition}[theorem]{Proposition}
\newtheorem{lemma}[theorem]{Lemma}
\newtheorem{corollary}[theorem]{Corollary}
\theoremstyle{definition}
\theoremstyle{remark}
\def\eqref#1{(\ref{#1})}
\def\1{\bm{1}}
\def\eps{{\epsilon}}
\DeclareMathAlphabet{\mathsfit}{\encodingdefault}{\sfdefault}{m}{sl}
\SetMathAlphabet{\mathsfit}{bold}{\encodingdefault}{\sfdefault}{bx}{n}
\def\sP{{\mathbb{P}}}
\def\sQ{{\mathbb{Q}}}
\def\sR{{\mathbb{R}}}
\DeclareMathOperator*{\arginf}{\arg\!\inf}
\DeclareMathOperator*{\argsup}{\arg\!\sup}
\author{%
    Nikita Gushchin \\
   Skoltech\thanks{Skolkovo Institute of Science and Technology} \\ 
   \textit{Moscow, Russia} \\
  \texttt{n.gushchin@skoltech.ru} \\
  \And
  Alexander Kolesov \\
  Skoltech$^{*}$ \\
  \textit{Moscow, Russia} \\
  \texttt{a.kolesov@skoltech.ru} \\
  \And
  Alexander Korotin \\
  Skoltech$^{*}$\hspace{-1.5mm}, AIRI\thanks{Artificial Intelligence Research Institute} \\
  \textit{Moscow, Russia} \\
  \texttt{a.korotin@skoltech.ru} \\
  \AND
  Dmitry Vetrov \\
  HSE University\thanks{HSE University}, AIRI$^{\dagger}$ \\
  \textit{Moscow, Russia} \\
  \texttt{vetrovd@yandex.ru} \\
  \And
  Evgeny Burnaev \\
  Skoltech$^{*}$\hspace{-1.5mm}, AIRI$^{\dagger}$ \\
  \textit{Moscow, Russia} \\
  \texttt{e.burnaev@skoltech.ru} \\
}
\begin{document}

\maketitle

\vspace{-7mm}
\begin{abstract}
We propose a novel neural algorithm for the fundamental problem of computing the entropic optimal transport (EOT) plan between continuous probability distributions which are accessible by samples. Our algorithm is based on the saddle point reformulation of the dynamic version of EOT which is known as the Schrödinger Bridge problem. In contrast to the prior methods for large-scale EOT, our algorithm is end-to-end and consists of a single learning step, has fast inference procedure, and allows handling small values of the entropy regularization coefficient which is of particular importance in some applied problems. Empirically, we show the performance of the method on several large-scale EOT tasks. The code for the ENOT solver can be found at \url{https://github.com/ngushchin/EntropicNeuralOptimalTransport}.
\end{abstract}

\begin{figure*}[h]
    \vspace{-2mm}
     \centering
     \includegraphics[width=0.99\linewidth]{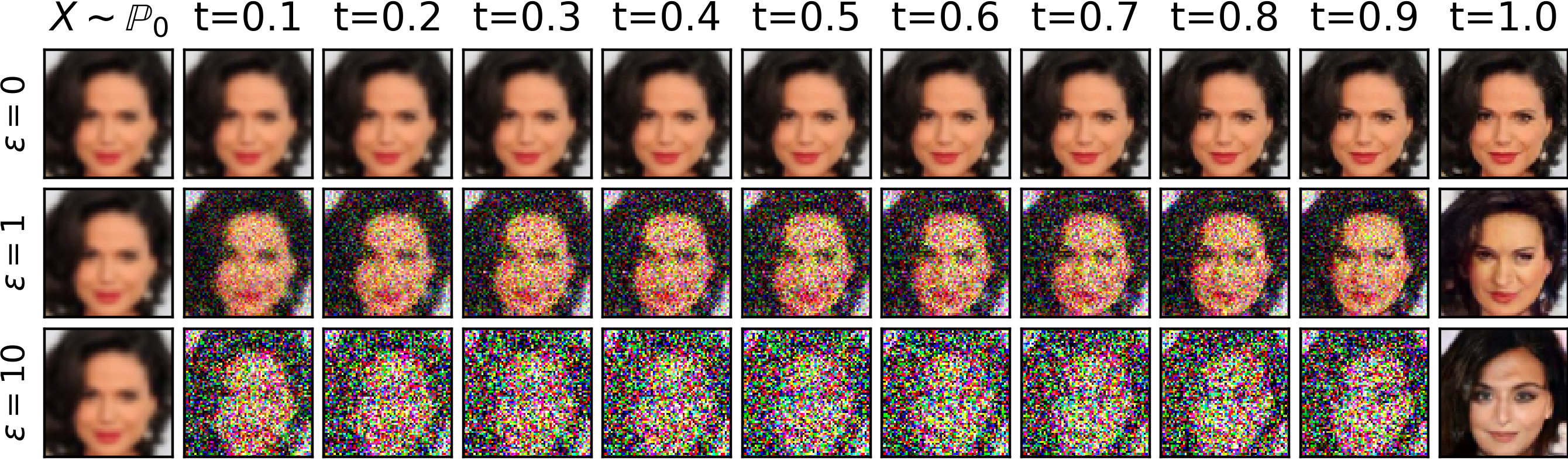}
     \caption{Trajectories of samples learned by our Algorithm \ref{neural-schrodinger-bridge} for Celeba deblurring with $\epsilon=0,1,10$.}
     \label{fig:celeba-trajectories}
     \vspace{-2mm}
\end{figure*}

\vspace{-2mm}
\section{Introduction}
\vspace{-2mm}
Optimal transport (OT) plans are a fundamental family of  alignments between probability distributions. The majority of scalable neural algorithms to compute OT are based on the dual formulations of OT, see \cite{korotin2021neural} for a survey. Despite the success of such formulations in generative modeling \cite{rout2022generative,korotin2023neural}, these dual form approaches can hardly be generalized to the popular \textbf{entropic OT}  \cite{cuturi2013sinkhorn}. This is due to the numerical instability of the dual entropic OT problem \cite{daniels2021score} which appears for small entropy regularization values which are suitable for downstream generative modeling tasks. \color{black} At the same time, entropic OT is useful as it allows to learn one-to-many stochastic mappings with tunable level of sample diversity. This is particularly important for ill-posed problems such as super-resolution \cite{lugmayr2021ntire}.

\color{black}

\textbf{Contributions}. We propose a saddle-point reformulation of the Entropic OT problem via using its dynamic counterpart known as the Schrödinger Bridge problem (\wasyparagraph{\ref{method_1}}). Based on our new reformulation, we propose a novel end-to-end neural algorithm to solve the related entropic OT problem for a pair of continuous distributions accessible by samples (\wasyparagraph{\ref{method_2}}). Unlike many predecessors, our method allows handling small entropy coefficients. This enables practical applications to ${\textit{data} \rightarrow \textit{data}}$ mapping tasks that require slight variability in the learned maps.  Furthermore, we provide an error analysis for solving the suggested saddle point optimization problem through duality gaps which are the errors of solving the inner and outer optimization problems (\wasyparagraph\ref{sec:error-bounds-theorem}). Empirically, we illustrate the performance of the method on several toy and large-scale EOT tasks (\wasyparagraph\ref{sec-experiments}).
\color{black}

\color{black}
\vspace{-2mm}
\section{Background} \label{sec-background}
\vspace{-2mm}
Optimal Transport (OT) and Schrödinger Bridge (SB) problems imply finding an efficient way to transform some initial distribution $\sP_0$ to target distribution $\sP_1$. While the solution of OT only gives the information about which part of $\sP_0$ is transformed to which part of $\sP_1$, SB implies finding a stochastic process that describes the entire evolution from $\sP_0$ to $\sP_1$. Below we give an introduction to OT and SB problems and show how they are related. For a detailed overview of OT, we refer to \cite{villani2008optimal,peyre2019computational}
and of SB -- to \cite{leonard2013survey, chen2021stochastic}.
\vspace{-2mm}
\subsection{Optimal Transport (OT)}
\vspace{-2mm}
\textbf{Kantorovich's OT formulation} (with the quadratic cost).
We consider $D$-dimensional Euclidean spaces $\mathcal{X}$, $\mathcal{Y}$ and use $\mathcal{P}_2(\mathcal{X})=\mathcal{P}_2(\mathcal{Y})$ to denote the respective sets of Borel probability distributions on them which have finite second moment. For two distributions ${\sP_0 \in  \mathcal{P}_2(\mathcal{X})}$, ${\sP_1 \in \mathcal{P}_2(\mathcal{Y})}$, consider the following minimization problem:
\begin{equation}\label{kantarovich-ot}
\inf_{\pi \in \Pi(\sP_0, \sP_1)} \int_{\mathcal{X} \times \mathcal{Y}} \frac{||x-y||^2}{2} d\pi(x,y),
\end{equation}
where ${\Pi(\sP_0, \sP_1)} \subset \mathcal{P}_2(\mathcal{X} \times \mathcal{Y})$ is the set of probability distributions on $\mathcal{X} \times \mathcal{Y}$ with marginals $\sP_0$ and $\sP_1$.
Such distributions $\pi\in\Pi(\mathbb{P}_{0},\mathbb{P}_{1})$ are called the transport plans between $\mathbb{P}_{0}$ and $\mathbb{P}_{1}$.
The set ${\Pi(\sP_0, \sP_1)}$ is non-empty as it always contains the trivial plan ${\sP_0 \times \sP_1}$. A minimizer $\pi^{*}$ of \eqref{kantarovich-ot} always exists and is called an OT plan. If $\mathbb{P}_0$ is absolutely continuous, then $\pi^{*}$ is deterministic: its conditional distributions are degenerate, i.e., $\pi^{*}(\cdot|x)=\delta_{T^{*}(x)}$ for some $T^{*}:\mathcal{X}\rightarrow\mathcal{Y}$ (OT map).

\textbf{Entropic OT formulation.}
We use $H(\pi)$ to denote the differential entropy of distribution $\pi$ and $\text{KL}(\pi || \pi')$ to denote the Kullback-Leibler divergence between distributions $\pi$ and $\pi'$. Two most popular entropic OT formulations regularize \eqref{kantarovich-ot} with the entropy $H(\pi)$ or KL-divergence between plan $\pi$ and the trivial plan $\sP_0 \times \sP_1$, respectively ($\epsilon>0$):
\begin{gather}
    \inf_{\pi \in \Pi(\sP_0, \sP_1)} \int_{\mathcal{X} \times \mathcal{Y}}\hspace{-2mm} \frac{||x-y||^2}{2} d\pi(x,y) - \epsilon H(\pi), \label{entropy-reg-ot}  \\
    \inf_{\pi \in \Pi(\sP_0, \sP_1)} \int_{\mathcal{X} \times \mathcal{Y}}\hspace{-2mm} \frac{||x-y||^2}{2} d\pi(x, y) + \epsilon \text{KL}(\pi || \sP_0 \times \sP_1).  \label{kl-reg-ot}
\end{gather}
Since $\pi \in \Pi(\sP_0, \sP_1)$, it holds that KL$(\pi || \sP_0 \! \times \! \sP_1) \!=\! -H(\pi) \!+\! H(\sP_0) \!+\! H(\sP_1)$, i.e., both formulations are equal up to an additive constant when $\sP_0$ and $\sP_1$ are absolutely continuous. The minimizer of \eqref{entropy-reg-ot} is \textbf{unique} since the functional is strictly convex in $\pi$ thanks to the strict convexity of $H(\pi)$. This unique minimizer $\pi^*$ is called the entropic OT plan. 

\vspace{-2mm}
\subsection{Schrödinger Bridge (SB)}\label{sec:sb}
\vspace{-2mm}
\textbf{SB with the Wiener prior.}
Let $\Omega$ be the space of $\sR^D$ valued functions of time $t\in [0,1]$ describing some trajectories in $\sR^D$, which start at time ${t=0}$ and end at time ${t=1}$. We use ${\mathcal{P}(\Omega)}$ to denote the set of probability distributions on $\Omega$. We use $dW_t$ to denote the differential of the standard Wiener process. Let ${W^{\epsilon} \in \mathcal{P}(\Omega)}$ be the Wiener process with the variance $\epsilon$ which starts at $\sP_0$. This diffusion process can be represented via the following stochastic differential equation (SDE):
\begin{gather}
    W^{\epsilon} : dX_t = \sqrt{\epsilon}dW_t, \quad X_0 \sim \sP_0.
\end{gather}
We use $\text{KL}(T || Q)$ to denote the Kullback-Leibler divergence between stochastic processes $T$ and $Q$. The Schrödinger Bridge problem was initially proposed in 1931/1932 by Erwin Schrödinger \cite{schrodinger1931umkehrung}. It can be formulated as follows \cite[Problem 4.1] {chen2021stochastic}:
\begin{equation}\label{eq:general-SB}
\inf_{T \in \mathcal{F}(\sP_0, \sP_1)}\text{KL}(T || W^{\epsilon}),
\end{equation}
where $\mathcal{F}(\sP_0, \sP_1) \subset \mathcal{P}(\Omega)$ is the set of probability distributions on $\Omega$ having marginal distributions $\sP_0$ and $\sP_1$ at ${t=0}$ and ${t=1}$, respectively. Thus, the Schrödinger Bridge problem implies finding a stochastic process $T$ with marginal distributions $\sP_0$ and $\sP_1$ at times ${t=0}$ and ${t=1}$, respectively, which has minimal $\text{KL}$ divergence with the prior process $W^{\epsilon}$.

\textbf{Link to OT problem.}  Here we recall how Scrödinger Bridge problem \eqref{eq:general-SB} relates to entropic OT problem \eqref{entropy-reg-ot}. \textit{This relation is well known (see, e.g., \cite{leonard2013survey} or \cite[Problem 4.2]{chen2021stochastic}), and we discuss it in detail because our proposed approach (\wasyparagraph{\ref{sec-algorithm}}) is based on it.} Let $\pi^{T}$ denote the joint distribution of a stochastic process $T$ at time moments ${t=0,1}$ and $\pi^{T}_0$, $\pi^{T}_1$ denote its marginal distributions at time moments $t=0,1$, respectively. Let $T_{|x, y}$ denote the stochastic processes $T$ conditioned on values $x,y$ at times ${t=0,1}$, respectively. One may decompose {KL$(T||W^{\epsilon})$} as \cite[Appendix C]{vargas2021solving}:
\begin{gather}
\text{KL}(T||W^{\epsilon}) = \text{KL}(\pi^{T} || \pi^{W^{\epsilon}}) + \int_{\mathcal{X} \times \mathcal{Y}} \text{KL}(T_{|x,y} || W^{\epsilon}_{|x,y})d\pi^{T}(x,y), \label{eq:disintegration}
\end{gather}
i.e., $\text{KL}$ divergence between $T$ and $W^{\epsilon}$ is a sum of two terms: the first represents the similarity of the processes at start and finish times $t=0$ and $t=1$, while the second term represents the similarity of the processes for intermediate times $t \in (0,1)$ conditioned on the values at $t=0,1$. For the first term, it holds
(see Appendix~\ref{sec:app-link-between-ot-and-sb} or \cite[Eqs 4.7-4.9]{chen2021stochastic}):
\begin{gather}
 \text{KL}(\pi^{T}||\pi^{W^{\epsilon}}) = \int_{\mathcal{X} \times \mathcal{Y}} \frac{||x-y||^2}{2\epsilon} d\pi^{T}(x,y)  - H(\pi^{T}) + C, \label{eq:kl-between-process-joints}
\end{gather}
where $C$ is a constant which depends only on $\sP_0$ and $\epsilon$.
In \cite[Proposition 2.3]{leonard2013survey}, the authors show that if $T^*$ is the solution to \eqref{eq:general-SB}, then $T^*_{|x, y} = W_{|x,y}^*$. Hence, one may optimize \eqref{eq:general-SB} over processes $T$ for which $T_{|x,y} = W_{|x,y}^{\epsilon}$ for every $x,y$ and set the last term in \eqref{eq:disintegration} to zero. In this case:
\begin{gather}
\inf_{T \in \mathcal{F}(\sP_0, \sP_1)}\text{KL}(T||W^{\epsilon}) = \inf_{T \in \mathcal{F}(\sP_0, \sP_1)}\!\! \text{KL}(\pi^{T} || \pi^{W^{\epsilon}})=\inf_{\pi^{T}\in \Pi(\sP_0, \sP_1)}\!\! \text{KL}(\pi^{T} || \pi^{W^{\epsilon}}). \label{eq:kl-eot-eqviv}
\end{gather}
i.e., it suffices to optimize only over joint distributions $\pi^{T}$ at time moments $t=0,1$. \textit{Hence, minimizing \eqref{eq:kl-eot-eqviv} is equivalent (up to an additive constant $C$) to solving EOT \eqref{entropy-reg-ot} divided by $\epsilon$, and their respective solutions $\pi^{T^{*}}$ and $\pi^*$ coincide.}  
Thus, SB problem \eqref{eq:general-SB} can be simplified to the entropic OT problem \eqref{entropy-reg-ot} with entropy coefficient $\epsilon$. 

\color{black}
\textbf{Dual form of EOT}. Entropic OT problem \eqref{eq:kl-eot-eqviv} has several dual formulations. Here we recall the one which is particularly useful to derive our algorithm. The dual formulation follows from the weak OT theory \cite[Theorem 1.3]{backhoff2019existence} and we explain it in detail in the proof of Lemma \ref{eot-relaxation}:
\begin{gather}
    \inf_{\pi\in \Pi(\sP_0, \sP_1)}\!\! \text{KL}(\pi || \pi^{W^{\epsilon}})=\sup_{\beta} \big\lbrace \int_{\mathcal{X}} \beta^{C}(x) d\sP_0(x)+\int_{\mathcal{Y}} \beta(y) d\sP_1(y) \big\rbrace,
    \nonumber
\end{gather}
where $\beta^{C}(x) \defeq \inf_{\nu\in\mathcal{P}_2(\mathcal{Y})} \big\lbrace \text{KL}\big(\nu || \pi^{W^{\epsilon}}(\cdot|x)\big) - \int_{\mathcal{Y}} \beta(y) d\nu(y)\big\rbrace$. The $\sup$ here is taken over $\beta$ belonging to the set of functions
$$\mathcal{C}_{b,2}(\mathcal{Y})\defeq \{\beta:\mathcal{Y}\rightarrow\mathbb{R} \text{ continuous s.t. }\exists u,v,w\in\mathbb{R}:\text{ }u\|\cdot\|^{2}+v\leq \beta(\cdot)\leq w\},$$
i.e., $\beta$ should be continuous with mild boundness assumptions. 
\color{black}

\textbf{Dynamic SB problem (DSB)}.
It is known that the solution to the SB problem \eqref{eq:general-SB} belongs to the class $\mathcal{D}(\sP_0)$ of finite-energy diffusions $T_f$ \cite[Proposition 4.1]{leonard2013survey} which are given by: 
\begin{gather}
    T_f: dX_t = f(X_t, t)dt + \sqrt{\epsilon}dW_t, \quad X_0 \sim \sP_0, \quad \mathbb{E}_{T_f}[\int_{0}^1 ||f(X_t, t)||^2 dt] < \infty, \label{eq:T_f-diff}
\end{gather}
where ${f: \sR^D \times [0, 1] \rightarrow \sR^D}$ is the drift function. The last inequality in \eqref{eq:T_f-diff} means that $T_f$ is a finite-energy diffusion. Hence, optimizing only over finite-energy diffusions rather than all possible processes is enough to solve SB problem \eqref{eq:general-SB}. For finite-energy diffusions, it is possible to rewrite the optimization objective. One can show that $\text{KL}(T_f || W^{\epsilon
})$ between processes $T_f$ and $W^{\epsilon}$ is \cite{pavon1991free}:
\begin{equation}
    \text{KL}(T_f || W^{\epsilon}) = \frac{1}{2\epsilon} \mathbb{E}_{T_f}[\int_{0}^1 ||f(X_t, t)||^2 dt]. \label{eq:kl-girsanov}
\end{equation}
By substituting \eqref{eq:kl-girsanov} in \eqref{eq:general-SB}, SB problem reduces to the following problem \cite[Problem 4.3]{chen2021stochastic}:
\begin{equation}\label{dynamic-schrodinger}
\inf_{T_f \in \mathcal{D}(\sP_0, \sP_1)} \text{KL}(T_f || W^{\epsilon}) = \inf_{T_f \in \mathcal{D}(\sP_0, \sP_1)} \frac{1}{2\epsilon} \mathbb{E}_{T_f}[\int_{0}^1 ||f(X_t, t)||^2 dt],
\end{equation}
where $\mathcal{D}(\sP_0, \sP_1)\subset \mathcal{P}(\Omega)$ is the set of finite-energy diffusion on $\Omega$ having marginal distributions $\sP_0$ and $\sP_1$ at $t=0$ and $t=1$, respectively. Note that $\mathcal{D}(\sP_0, \sP_1) \subset \mathcal{F}(\sP_0, \sP_1)$. Since the problem \eqref{dynamic-schrodinger} is the equivalent reformulation of \eqref{eq:general-SB}, its optimal value also equals \eqref{eq:kl-eot-eqviv}. However, solving this problem, as well as \eqref{eq:kl-eot-eqviv} is challenging as it is still hard to satisfy the boundary constraints.
\color{black}
\vspace{-2mm}
\section{Related Work}
\vspace{-2mm}
In this section, we overview the existing methods to compute the OT plan or map. To avoid any confusion, we emphasize that popular Wasserstein GANs \cite{arjovsky2017towards} compute only the OT cost but not the OT plan and, consequently, are \underline{out of scope} of the discussion.

\textbf{Discrete OT.} The majority of algorithms in computational OT are designed for the discrete setting where the inputs $\sP_{0},\sP_{1}$ have finite supports. In particular, the usage of entropic regularization \eqref{entropy-reg-ot} allows to establish efficient methods \cite{cuturi2014fast} to compute the entropic OT plan between discrete distributions with the support size up to ${10^5 \text{-} 10^6}$ points, see \cite{peyre2019computational} for a survey. For larger support sizes, such methods are  typically computationally intractable.

\textbf{Continuous OT.} Continuous methods imply computing the OT plan between distributions $\mathbb{P}_{0}$, $\mathbb{P}_{1}$ which are accessible by 
empirical samples. Discrete methods "as-is" are not applicable to the continuous setup in high dimensions because they only do a stochastic \textit{matching} between the train samples and do not provide out-of-sample estimation. In contrast, continuous methods employ neural networks to explicitly or implicitly \textit{learn} the OT plan or map. As a result, these methods can map unseen samples from $\mathbb{P}_{0}$ to $\sP_{1}$ according to the learned OT plan or map.

There exists many methods to compute OT plans \cite{xie2019scalable,lu2020large,makkuva2020optimal,korotin2019wasserstein,korotin2021neural,korotin2022kantorovich,rout2022generative,fan2023neural,gazdieva2022unpaired,gazdieva2023extremal} but they consider only unregularized OT \eqref{kantarovich-ot} rather than the entropic one \eqref{entropy-reg-ot}. In particular, they mostly focus on computing the deterministic OT plan (map) which may not exist.  Recent works \cite{korotin2023neural,korotin2023kernel} design algorithms to compute OT plans for weak OT \cite{gozlan2017kantorovich, backhoff2019existence}. Although weak OT technically subsumes entropic OT \color{black}, these works do not cover the entropic OT \eqref{entropy-reg-ot} because there is no simple way to estimate the entropy from samples. Below we discuss methods specifically for \textbf{EOT} \eqref{entropy-reg-ot} and \textbf{DSB} \eqref{dynamic-schrodinger}. 

\vspace{-2mm}
\subsection{Continuous Entropic OT}
\vspace{-2mm}
In LSOT \cite{seguy2018large}, the authors solve the dual problem to entropic OT \eqref{entropy-reg-ot}. The dual potentials are then used to compute the \textit{barycentric projection} ${x\mapsto \int_{\mathcal{Y}}y\hspace{0.5mm}d\pi^{*}(y|x)}$, i.e., the first conditional moment of the entropic OT plan. This strategy may yield a deterministic approximation of $\pi^{*}$ for small $\epsilon$ but does not recover the entire plan itself.

In \cite[Figure 3]{daniels2021score}, the authors show that the barycentric projection leads to the averaging artifacts which make it impractical in downstream tasks such as the unpaired image super-resolution. To solve these issues, the authors propose a method called SCONES. It recovers the entire conditional distribution $\pi^{*}(y|x)$ of the OT plan $\pi^{*}$ from the dual potentials. Unfortunately, this is costly. During the training phase, the method requires learning a score-based model for the distribution $\sP_1$. More importantly, during the inference phase, one has to run the Langevin dynamic to sample from $\pi^{*}(y|x)$.

The optimization of the above-mentioned entropic approaches requires evaluating the exponent of large values which are proportional to $\eps^{-1}$ \cite[Eq. 7]{seguy2018large}. Due to this fact, those methods are \textbf{unstable} for small values $\eps$ in \eqref{entropy-reg-ot}. In contrast, our proposed method (\wasyparagraph{\ref{sec-algorithm}}) resolves this issue: technically, it works even for $\epsilon=0$.

\vspace{-2mm}
\subsection{Approaches to Compute Schrödinger Bridges}
\vspace{-2mm}
Existing approaches to solve DSB mainly focus on generative modeling applications (\textit{noise} $\rightarrow$ \textit{data}). For example, FB-SDE \cite{chen2021likelihood} utilizes data likelihood maximization to optimize the parameters of forward and backward SDEs for learning the bridge. On the other hand, MLE-SB \cite{vargas2021solving} and DiffSB \cite{de2021diffusion} employ the iterative proportional fitting technique to learn the DSB. Another method proposed by \cite{wang2021deep} involves solving the Schrodinger Bridge only between the Dirac delta distribution and real data to solve the data generation problem.

Recent studies \cite{liu2022deep, lin2021alternating} have indicated that the Schrödinger Bridge can be considered as a specific instance of a more general problem known as the Mean-Field Game \cite{achdou2021mean}. These studies have also suggested novel algorithms for resolving the Mean-Field Game problem. However, the approach presented by \cite{lin2021alternating} cannot be directly applied to address the hard distribution constraints imposed on the start and final probability distribution as in the Schrödinger Bridge problem (see Appendix~\ref{sec:app-mf-games}). The approach suggested by \cite{liu2022deep} coincides with that proposed in FB-SDE \cite{chen2021likelihood} for the SB problem.
\color{black}
\vspace{-2mm}
\section{The Algorithm}\label{sec-algorithm}
\vspace{-2mm}

This section presents our novel neural network-based algorithm to recover the solution $T_{f^*}$ of the DSB problem \eqref{dynamic-schrodinger} and the solution $\pi^*$ of the related EOT problem \eqref{eq:kl-eot-eqviv}. In \wasyparagraph\ref{method_1}, we theoretically derive the proposed saddle point optimization objective. In \wasyparagraph\ref{method_2} we provide and describe the practical learning procedure to optimize it. In \wasyparagraph\ref{sec:error-bounds-theorem}, we perform the error analysis via duality gaps. In Appendix~\ref{sec:proofs}, we \underline{provide proofs} of all theorems and lemmas. 
\color{black}

\vspace{-2mm}
\subsection{Saddle Point Reformulation of EOT via DSB}\label{method_1}
\vspace{-2mm}
For two distributions $\mathbb{P}_0$ and $\mathbb{P}_1$ accessible by finite empirical samples, solving entropic OT \eqref{entropy-reg-ot} "as-is" is non-trivial. Indeed, one has to \textbf{(a)} enforce the marginal constraints $\pi\in\Pi(\mathbb{P}_{0},\mathbb{P}_{1})$ and \textbf{(b)} estimate the entropy $H(\pi)$ from empirical samples which is challenging. Our idea below is to employ the relation of EOT with DSB to derive an optimization objective which in practice can recover the entropic plan avoiding the above-mentioned issues. First, we introduce the functional
\begin{gather}
    \mathcal{L}(\beta,T_{f}) \defeq \Big\{\overbrace{\mathbb{E}_{T_{f}}\big[\frac{1}{2\epsilon} \int_{0}^1 ||f(X_t, t)||^2 dt\big]}^{=\text{KL}(T_f || W^{\epsilon})} - \int_{\mathcal{Y}} \beta(y) d\pi_1^{T_f}(y)+ \int_{\mathcal{Y}} \beta(y) d\sP_1(y)  \Big\}. \label{maxmin-lagrange-obj}
\end{gather}
\color{black}
This functional can be viewed as the Lagrangian for DSB \eqref{dynamic-schrodinger} with the relaxed constraint $d\pi_1^{T_f}(y)=d\sP_1(y)$, and function $\beta:\mathcal{Y}\rightarrow\mathbb{R}$ (\textit{potential}) playing the role of the Langrange multiplier.
\color{black}

\begin{theorem}[Relaxed DSB formulation]\label{eot-as-dynamic-sb} 
Consider the following saddle point optimization problem:
\begin{gather}\label{eq:maximin-dsb}
    \sup_{\beta} \inf_{T_{f}} \mathcal{L}(\beta,T_{f}) 
\end{gather}
where $\sup$ is taken over potentials $\beta\in\mathcal{C}_{b,2}(\mathcal{Y})$ and $\inf$ is taken over diffusion processes $T_{f}\in\mathcal{D}(\mathbb{P}_0)$. Then for every optimal pair ($\beta^*$, $T_{f^*})$ for \eqref{eq:maximin-dsb}, i.e., 
$$\beta^{*}\in\argsup_{\beta} \inf_{T_{f}}  \mathcal{L}(\beta,T_{f}) \qquad\text{and}\qquad  T_{f^{*}}\in \arginf_{T_{f}}  \mathcal{L}(\beta^{*},T_{f}),$$
it holds that the process $T_{f}^{*}$ is the solution to SB \eqref{dynamic-schrodinger}.
\end{theorem}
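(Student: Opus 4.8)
The plan is to show that the saddle-point problem \eqref{eq:maximin-dsb} is equivalent to the constrained DSB problem \eqref{dynamic-schrodinger} via a standard Lagrangian-duality argument, using the dual form of EOT recalled in the excerpt to certify that strong duality holds (no gap). First I would analyze the inner problem $\inf_{T_f\in\mathcal{D}(\mathbb{P}_0)}\mathcal{L}(\beta,T_f)$ for a fixed potential $\beta\in\mathcal{C}_{b,2}(\mathcal{Y})$. Since the $\int_{\mathcal{Y}}\beta(y)\,d\mathbb{P}_1(y)$ term does not depend on $T_f$, minimizing $\mathcal{L}(\beta,\cdot)$ amounts to minimizing $\text{KL}(T_f\|W^\epsilon)-\int_{\mathcal{Y}}\beta(y)\,d\pi_1^{T_f}(y)$ over finite-energy diffusions with fixed initial marginal $\mathbb{P}_0$ and free terminal marginal. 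Here I would use the disintegration identity \eqref{eq:disintegration}: the conditional term $\int \text{KL}(T_{f|x,y}\|W^\epsilon_{|x,y})\,d\pi^{T_f}(x,y)$ is nonnegative and can be driven to zero by taking $T_{f|x,y}=W^\epsilon_{|x,y}$ (Brownian bridges), which leaves only a problem over joint distributions $\pi\in\Pi(\mathbb{P}_0,\cdot)$, i.e. $\inf_{\pi:\,\pi_0=\mathbb{P}_0}\{\text{KL}(\pi\|\pi^{W^\epsilon})-\int\beta\,d\pi_1\}$. Disintegrating $\pi(dx,dy)=\mathbb{P}_0(dx)\,\nu_x(dy)$ and optimizing the inner conditional over $\nu_x\in\mathcal{P}_2(\mathcal{Y})$ gives exactly $\int_{\mathcal{X}}\beta^C(x)\,d\mathbb{P}_0(x)$ with $\beta^C$ as defined in the excerpt; hence $\inf_{T_f}\mathcal{L}(\beta,T_f)=\int_{\mathcal{X}}\beta^C\,d\mathbb{P}_0+\int_{\mathcal{Y}}\beta\,d\mathbb{P}_1$.

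Next I would take the $\sup$ over $\beta\in\mathcal{C}_{b,2}(\mathcal{Y})$. By the dual form of EOT quoted just before the theorem (from weak OT duality, \cite[Theorem 1.3]{backhoff2019existence}),
\[
\sup_{\beta\in\mathcal{C}_{b,2}(\mathcal{Y})}\Big\{\int_{\mathcal{X}}\beta^C\,d\mathbb{P}_0+\int_{\mathcal{Y}}\beta\,d\mathbb{P}_1\Big\}=\inf_{\pi\in\Pi(\mathbb{P}_0,\mathbb{P}_1)}\text{KL}(\pi\|\pi^{W^\epsilon}),
\]
and by \eqref{eq:kl-eot-eqviv} the right-hand side equals the optimal value of DSB \eqref{dynamic-schrodinger}. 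So $\sup_\beta\inf_{T_f}\mathcal{L}(\beta,T_f)$ equals the DSB optimum; this pins down the value. It remains to transfer optimality to the pair $(\beta^*,T_{f^*})$. Given an optimal pair, $T_{f^*}\in\arginf_{T_f}\mathcal{L}(\beta^*,T_f)$ means $T_{f^*}$ attains the inner infimum for $\beta^*$, so from the first step its conditionals are Brownian bridges and its joint law $\pi^{T_{f^*}}=\mathbb{P}_0(dx)\nu^*_x(dy)$ attains $\int\beta^{*C}\,d\mathbb{P}_0+\int\beta^*\,d\mathbb{P}_1$; since $\beta^*$ is an optimal potential, complementary slackness / the equality case in the weak-duality chain forces $\pi^{T_{f^*}}$ to be the EOT plan $\pi^*$, in particular $\pi_1^{T_{f^*}}=\mathbb{P}_1$, so $T_{f^*}\in\mathcal{D}(\mathbb{P}_0,\mathbb{P}_1)$ is feasible for \eqref{dynamic-schrodinger}. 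Finally, for feasible $T_f$ the relaxed term $-\int\beta^*\,d\pi_1^{T_f}+\int\beta^*\,d\mathbb{P}_1$ vanishes, so $\mathcal{L}(\beta^*,T_{f^*})=\text{KL}(T_{f^*}\|W^\epsilon)$ equals the DSB optimum, whence $T_{f^*}$ solves \eqref{dynamic-schrodinger}.

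I would expect the main obstacle to be the rigorous justification of the equality cases: showing that attaining the inner infimum really does force Brownian-bridge conditionals (this uses strict positivity/convexity, as in \cite[Proposition 2.3]{leonard2013survey}), and that an optimal $\beta^*$ together with an optimal $\pi^{T_{f^*}}$ for the inner problem must satisfy the marginal constraint $\pi_1=\mathbb{P}_1$ — i.e. the absence of a duality gap must be upgraded from an equality of values to a statement about minimizers. The clean way to handle this is to argue by contradiction: if $\pi_1^{T_{f^*}}\neq\mathbb{P}_1$ one can perturb $\beta^*$ to strictly increase $\int\beta^{*C}\,d\mathbb{P}_0+\int\beta^*\,d\mathbb{P}_1$ evaluated through the optimal inner response, contradicting optimality of $\beta^*$; combined with the established value equality this yields feasibility and hence optimality of $T_{f^*}$. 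A secondary technical point is ensuring the measurable-selection/disintegration steps ($\pi=\mathbb{P}_0\otimes\nu_x$, interchanging $\inf$ over $\nu_x$ with the integral over $x$) are valid on the Polish spaces $\mathcal{X},\mathcal{Y}$, which is routine but should be cited.
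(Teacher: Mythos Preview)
Your proposal is correct and follows essentially the same route as the paper: reduce the inner problem over diffusions to an inner problem over plans via the disintegration \eqref{eq:disintegration}, identify $\inf_{T_f}\mathcal{L}(\beta,T_f)=\int_{\mathcal{X}}\beta^{C}\,d\mathbb{P}_0+\int_{\mathcal{Y}}\beta\,d\mathbb{P}_1$, invoke weak OT duality to match the value with the DSB/EOT optimum, and then argue that for an optimal $\beta^*$ the inner minimizer has the correct second marginal so that the $\beta$-terms cancel and $\text{KL}(T_{f^*}\|W^{\epsilon})=\mathcal{L}^*$.

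The one place where the paper is slicker than your sketch is exactly the step you flagged as the main obstacle. Rather than a perturbation argument on $\beta^*$, the paper makes the inner minimizer explicit: it defines the exponential tilt $d\pi^{\beta}(y|x)\propto \exp(\beta(y))\,d\pi^{W^{\epsilon}}(y|x)$ and shows that the inner objective equals $\text{KL}(\pi\|\pi^{\beta})$ plus a $\beta$-only constant, so the inner minimizer is unique by strict convexity of $\text{KL}$. Then, instead of arguing that the minimizer must satisfy $\pi_1=\mathbb{P}_1$, it simply plugs in the known EOT plan $\pi^*\in\Pi(\mathbb{P}_0,\mathbb{P}_1)$: the $\beta^*$-terms cancel, giving $\widetilde{\mathcal{L}}(\beta^*,\pi^*)=\text{KL}(\pi^*\|\pi^{W^{\epsilon}})=\mathcal{L}^*=\inf_{\pi}\widetilde{\mathcal{L}}(\beta^*,\pi)$, so $\pi^*$ attains the infimum and by uniqueness is the only minimizer. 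This avoids any perturbation or complementary-slackness argument and gives the marginal constraint $\pi_1^{T_{f^*}}=\mathbb{P}_1$ for free.
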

\begin{corollary}[Entropic OT as relaxed DSB]\label{cor:eot-as-dsb}
    If ($\beta^*$,$T_{f^*}$) solves $\eqref{eq:maximin-dsb}$, then $\pi^{T_{f^{*}}}$ is the EOT plan \eqref{entropy-reg-ot}.
\end{corollary}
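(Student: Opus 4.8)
The plan is to prove Theorem~\ref{eot-as-dynamic-sb} first (Corollary~\ref{cor:eot-as-dsb} then follows in one line), and the main idea is to collapse the inner infimum over diffusions into an infimum over couplings, which turns \eqref{eq:maximin-dsb} into exactly the dual EOT problem recalled in the background.

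\textbf{Step 1 (reduce the inner problem to couplings).} First I would note that $\mathcal{L}(\beta,T_f)$ depends on $T_f$ only through $\text{KL}(T_f\|W^{\epsilon})$ and the terminal marginal $\pi_1^{T_f}$, both of which are functions of the joint law $\pi^{T_f}$. By the disintegration \eqref{eq:disintegration}, $\text{KL}(T_f\|W^{\epsilon})\ge\text{KL}(\pi^{T_f}\|\pi^{W^{\epsilon}})$, with equality iff the conditionals of $T_f$ coincide with the Brownian bridges $W^{\epsilon}_{|x,y}$; conversely, gluing Brownian bridges along any coupling $\pi$ with first marginal $\mathbb{P}_0$ and $\text{KL}(\pi\|\pi^{W^{\epsilon}})<\infty$ produces a finite-energy diffusion in $\mathcal{D}(\mathbb{P}_0)$ realizing that bound. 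Hence, for fixed $\beta$,
$$\inf_{T_f\in\mathcal{D}(\mathbb{P}_0)}\mathcal{L}(\beta,T_f)=\int_{\mathcal{Y}}\beta\,d\mathbb{P}_1+\inf_{\pi:\,\pi_0=\mathbb{P}_0}\Big[\text{KL}(\pi\|\pi^{W^{\epsilon}})-\int_{\mathcal{Y}}\beta\,d\pi_1\Big],$$
and, disintegrating $\pi(dx,dy)=\mathbb{P}_0(dx)\,\nu_x(dy)$ (valid since $\pi^{W^{\epsilon}}$ has first marginal $\mathbb{P}_0$), the bracket splits over $x$ into KL-regularized linear problems whose Gibbs minimizers are $\nu_x^{\beta}(dy)\propto e^{\beta(y)}\pi^{W^{\epsilon}}(dy|x)$, with optimal value $\beta^{C}(x)$. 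So $\inf_{T_f}\mathcal{L}(\beta,T_f)=\int_{\mathcal{X}}\beta^{C}\,d\mathbb{P}_0+\int_{\mathcal{Y}}\beta\,d\mathbb{P}_1=:\Phi(\beta)$, and the inner minimizer $T_f^{\beta}$ is \emph{unique}: its joint law is $\mathbb{P}_0(dx)\,\nu_x^{\beta}(dy)$ and its conditionals are Brownian bridges.

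\textbf{Step 2 (identify the saddle value; enforce the terminal constraint).} By the dual form of EOT recalled in the background, $\sup_{\beta}\Phi(\beta)=\inf_{\pi\in\Pi(\mathbb{P}_0,\mathbb{P}_1)}\text{KL}(\pi\|\pi^{W^{\epsilon}})$, which by \eqref{eq:kl-eot-eqviv} equals the optimal value of DSB \eqref{dynamic-schrodinger}. Now I would fix an optimal pair $(\beta^{*},T_{f^{*}})$: since $T_{f^{*}}$ minimizes $\mathcal{L}(\beta^{*},\cdot)$, Step~1 forces $T_{f^{*}}=T_f^{\beta^{*}}$, so $\pi^{T_{f^{*}}}(dx,dy)=\mathbb{P}_0(dx)\,\nu_x^{\beta^{*}}(dy)$. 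Writing $\Phi(\beta)=\int_{\mathcal{Y}}\beta\,d\mathbb{P}_1-\int_{\mathcal{X}}\log\!\Big(\int_{\mathcal{Y}}e^{\beta(y)}\pi^{W^{\epsilon}}(dy|x)\Big)d\mathbb{P}_0(x)$, the functional $\Phi$ is concave and Gateaux differentiable along bounded continuous directions $\delta$ (differentiate under the integral using $\beta^{*}\le w$ and the Gaussian tails of $\pi^{W^{\epsilon}}(\cdot|x)$, noting $\beta^{*}+t\delta\in\mathcal{C}_{b,2}(\mathcal{Y})$), with derivative at $\beta^{*}$ equal to $\int_{\mathcal{Y}}\delta\,d\mathbb{P}_1-\int_{\mathcal{Y}}\delta\,d\pi_1^{T_{f^{*}}}$. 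Optimality of $\beta^{*}$ makes this vanish for every such $\delta$, hence $\pi_1^{T_{f^{*}}}=\mathbb{P}_1$ and $T_{f^{*}}\in\mathcal{D}(\mathbb{P}_0,\mathbb{P}_1)$.

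\textbf{Step 3 (optimality of $T_{f^{*}}$, and the Corollary).} For any feasible $T_f\in\mathcal{D}(\mathbb{P}_0,\mathbb{P}_1)$ the two $\beta^{*}$-terms in $\mathcal{L}(\beta^{*},T_f)$ cancel, so $\text{KL}(T_f\|W^{\epsilon})=\mathcal{L}(\beta^{*},T_f)\ge\inf_{T_f'\in\mathcal{D}(\mathbb{P}_0)}\mathcal{L}(\beta^{*},T_f')=\mathcal{L}(\beta^{*},T_{f^{*}})=\text{KL}(T_{f^{*}}\|W^{\epsilon})$, the last equality using that $T_{f^{*}}$ is itself feasible. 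Thus $T_{f^{*}}$ attains the infimum in \eqref{dynamic-schrodinger}; since the SB solution is unique (it corresponds to the unique EOT plan), $T_{f^{*}}$ is that solution, which proves Theorem~\ref{eot-as-dynamic-sb}. Corollary~\ref{cor:eot-as-dsb} is then immediate: by \eqref{eq:kl-eot-eqviv} the joint law $\pi^{T_{f^{*}}}$ of the SB solution minimizes $\text{KL}(\cdot\|\pi^{W^{\epsilon}})$ over $\Pi(\mathbb{P}_0,\mathbb{P}_1)$, which is exactly the entropic OT plan $\pi^{*}$ of \eqref{entropy-reg-ot}. I expect the main obstacle to be Step~2 — arguing that the optimal multiplier $\beta^{*}$ genuinely forces $\pi_1^{T_{f^{*}}}=\mathbb{P}_1$; this rests on uniqueness of the inner minimizer (so $\beta^{*}$ pins down $T_{f^{*}}$) together with a careful differentiation-under-the-integral argument valid on $\mathcal{C}_{b,2}(\mathcal{Y})$, which is precisely where the boundedness conditions defining $\mathcal{C}_{b,2}$ and the Gaussianity of the Wiener prior get used. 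A secondary, classical point is the ``conversely'' claim in Step~1, that every admissible coupling is the joint law of a finite-energy diffusion obtained by Brownian-bridge gluing.
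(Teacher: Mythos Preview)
Your proof is essentially correct, and Step~1 and Step~3 match the paper's Lemmas~\ref{lem:inner-KL}--\ref{ot-dsb-link} and the final derivation closely. The genuine difference is Step~2, where you establish the terminal constraint $\pi_1^{T_{f^{*}}}=\mathbb{P}_1$ by Gateaux-differentiating the concave functional $\Phi(\beta)$ and invoking first-order optimality. The paper avoids this entirely with a shorter and more robust argument (Lemma~\ref{entropic-ot-solve-relaxex-problem}): take the \emph{known} entropic OT plan $\pi^{*}\in\Pi(\mathbb{P}_0,\mathbb{P}_1)$, plug it into $\widetilde{\mathcal{L}}(\beta^{*},\cdot)$, observe that the two $\beta^{*}$-integrals cancel so $\widetilde{\mathcal{L}}(\beta^{*},\pi^{*})=\text{KL}(\pi^{*}\|\pi^{W^{\epsilon}})=\mathcal{L}^{*}=\inf_{\pi}\widetilde{\mathcal{L}}(\beta^{*},\pi)$, and conclude by strict convexity that $\pi^{*}$ is the unique inner minimizer --- hence equals your $\pi^{T_{f^{*}}}$. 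This sidesteps exactly the obstacle you flagged (differentiation under the integral, interiority of $\beta^{*}$ in $\mathcal{C}_{b,2}$), and it never needs first-order conditions on $\beta^{*}$ at all. A second, smaller difference: your ``conversely'' in Step~1 asserts that bridge-gluing along \emph{any} finite-KL coupling yields a finite-energy diffusion, which is stronger than needed; the paper (Corollary~\ref{cor:inner-problem-sol-is-diffusion}) only shows this for the specific Gibbs coupling $\pi^{\beta}$, by recognizing $T^{\beta}$ as the Schr\"odinger bridge between $\mathbb{P}_0$ and $\mathbb{Q}:=\pi_1^{T^{\beta}}$ and citing that SB solutions are diffusions. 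Your route is more analytic and self-contained; the paper's is shorter and leans on strict convexity plus the existence of $\pi^{*}$ to do the heavy lifting.
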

\color{black}
\vspace{-1mm}
Our results above show that by solving \eqref{eq:maximin-dsb}, one immediately recovers the optimal process $T_{f^{*}}$ in DSB \eqref{dynamic-schrodinger} and the optimal EOT plan $\pi^{*}=\pi^{T_f^{*}}$. The notable benefits of considering \eqref{eq:maximin-dsb} instead of \eqref{entropy-reg-ot}, \eqref{kl-reg-ot} and \eqref{dynamic-schrodinger} is that \textbf{(a)} it is as an 
optimization problem over $(\beta,T_{f})$ without the constraint $d\pi^{T_{f}}_{1}(y)=d\mathbb{P}_1(y)$, and \textbf{(b)} objective \eqref{maxmin-lagrange-obj} admits Monte-Carlo estimates by using random samples from $\mathbb{P}_{0},T_{f},\mathbb{P}_{1}$. In \wasyparagraph\ref{method_2} below, we describe the straightforward practical procedure to optimize \eqref{eq:maximin-dsb} with stochastic gradient methods and neural nets. 
\color{black}

\textbf{Relation to prior works.} In the field of neural OT, there exist so many maximin reformulations of OT (classic \cite{korotin2021neural,rout2022generative,fan2023neural,gazdieva2022unpaired,henry2019martingale}, weak \cite{korotin2023neural,korotin2023kernel}, general \cite{asadulaev2022neural}) resembling our \eqref{eq:maximin-dsb} that a reader may naturally wonder \textbf{(1)} why not to apply them to solve entropic OT? \textbf{(2)} how does our reformulation differ from all of them? It is indeed true that, e.g., algorithm from \cite{korotin2023neural,asadulaev2022neural} mathematically covers the entropic case \eqref{entropy-reg-ot}. Yet in practice it requires estimation of entropy from samples for which there is no easy way and the authors do not consider this case. These methods can be hardly applied to EOT. 

In contrast to the prior works, we deal with entropic OT through its connection with the Schrödinger Bridge. Our max-min reformulation is for DSB and it avoids computing the entropy term $H(\pi)$ in EOT. This term is replaced by the energy of the process $\mathbb{E}_{T_{f}}\big[\int_{0}^1 ||f(X_t, t)||^2 dt\big]$ which can be straightforwardly estimated from the samples of $T_{f}$, allowing to establish a computational algorithm.

\vspace{-2mm}
\subsection{Practical Optimization Procedure}\label{method_2}
\vspace{-2mm}
To solve \eqref{maxmin-lagrange-obj}, we parametrize drift function $f(x, t)$ of the process $T_f$ and potential $\beta(y)$\footnote{\color{black}
In practice, $\beta_{\phi}\in\mathcal{C}_{b,2}(\mathcal{Y})$ since we can choose $u=0$, $v=\min(\texttt{float32})$, $w=\max(\texttt{float32})$.
} by neural nets ${f_{\theta}: \sR^D \times [0,1] \rightarrow \sR^D}$ and ${\beta_{\phi}: \sR^D \rightarrow \sR}$. We consider the following maximin problem:
\begin{gather}
    \sup_{\beta} \inf_{T_{f_{\theta}}} \Big\{ \frac{1}{2\epsilon} \mathbb{E}_{T_{f_{\theta}}}[\int_{0}^1 ||f_{\theta}(X_t, t)||^2 dt] + \int_{\mathcal{Y}} \beta_{\phi}(y) d\sP_1(y) - \int_{\mathcal{Y}} \beta_{\phi}(y) d\pi_1^{T_{f_{\theta}}}(y) \Big\}. 
    \label{eq:algorithm-objective}
\end{gather}
\begin{wrapfigure}{R}{0.5\textwidth} 
\vspace{-5.9mm}
\begin{algorithm}[H]
    \caption{Entropic Neural OT (ENOT)}
    \textbf{Input:} \\
    \hspace{5mm} samples from distributions $\sP_0$, $\sP_1$;\\
    \hspace{5mm} Wiener prior noise variance $\epsilon \geq 0$;\\
    \hspace{5mm} drift network $f_{\theta}$ : $\sR^D \times [0,1] \rightarrow \sR^D$;\\
    \hspace{5mm} beta network $\beta_{\phi}$ : $\sR^D \rightarrow \sR$;\\
    \hspace{5mm} number of steps $N$ for Eul-Mar (App~\ref{app:euler-maruyama});\\
    \hspace{5mm} number of inner iterations $K_f$.\\
    \textbf{Output:} drift $f_{\theta}^*$ of $T_{f_{\theta}^*}$ solving DSB \eqref{dynamic-schrodinger}.
    \Repeat{converged}{
      Sample batches $X_0 \sim \sP_0$, $Y \sim \sP_1$\;
      $\{X_n, f_n\}_{n=0}^N \leftarrow  \text{Eul-Mar}(X_0, T_{f_{\theta}})$\;
      $\mathcal{L}_{\beta} \leftarrow \frac{1}{|X_N|} \!\! \sum\limits_{x \in X_N} \!\!\! \beta_{\phi}(x) \! - \frac{1}{|Y|} \! \sum\limits_{y \in Y} \beta_{\phi}(y)$\;
      Update $\phi$ by using $\frac{\partial L_{\beta}}{\partial \phi}$ \;
      \For{$k = 1$ \KwTo $K_f$}{
        Sample batches $X_0 \sim \sP_0$, $Y \sim \sP_1$\;
        $\{X_n, f_n \}_{n=0}^N \leftarrow  \text{Eul-Mar}(X_0, T_{f_{\theta}})$\;
        $\widehat{\text{KL}}  \leftarrow  \frac{1}{N} \sum\limits_{n=0}^{N-1}  \frac{1}{|f_n|} \sum\limits_{m=1}^{|f_n|}  ||f_{n, m}||^2$ \;
        $\mathcal{L}_{f} \leftarrow \widehat{\text{KL}}  -  \frac{1}{|X_N|}  \sum\limits_{x\in X_N} \beta_{\phi}(x) $\;
        Update $\theta$ by using $\frac{\partial L_{\theta}}{\partial \theta}$\;
      }
    }
    \label{neural-schrodinger-bridge}
    \end{algorithm}
    \vspace{-13mm}
\end{wrapfigure}

We use standard Euler-Maruyama (Eul-Mar) simulation (Algorithm \ref{euler-maruyama} in \color{black} Appendix~\ref{app:euler-maruyama}\color{black}) for sampling from the stochastic process $T_f$ by solving its SDE \eqref{eq:T_f-diff}. To estimate the value of ${\mathbb{E}_{T_f}[\int_{0}^1 ||f(X_t,t)||^2 dt]}$ in \eqref{eq:algorithm-objective}, we utilize the mean value of ${||f(x,t)||^2}$ over time $t$ of trajectory $X_t$ that is obtained during the simulation by Euler-Maruyama algorithm (Appendix~\ref{app:euler-maruyama}). We train $f_{\theta}$ and $\beta_{\phi}$ by optimizing \eqref{maxmin-lagrange-obj} with the stochastic gradient ascent-descent by sampling random batches from $\sP_0$, $\sP_1$. The optimization procedure is detailed in Algorithm \ref{neural-schrodinger-bridge}. \color{black} We use $f_{n, m}$ to denote the drift at time step $n$ for the $m$-th object of the input sample batch. We use the averaged of the drifts as an estimate of $\int_{0}^{1} ||f(X_t, t)||^2 dt$ in the training objective. 

\textbf{Remark.} For the image tasks (\wasyparagraph{\ref{sec-colored-mnist}}, \wasyparagraph{\ref{sec-celeba}}), we find out that using \underline{a slightly different} \underline{parametrization} of $T_f$ considerably improves the quality of our Algorithm, see Appendix~\ref{sec-appendix-images-tasks}.
\color{black}

It should be noted that the term ${\mathbb{E}_{T_f}[\int_{0}^1 ||f(X_t,t)||^2 dt]}$ is not multiplied by $\frac{1}{2\epsilon}$ in the algorithm since \underline{this does not affect} the optimal $T_{f^*}$ solving the inner optimization problem,  see Appendix~\ref{sec:app-constant-multiply}.
\color{black}

\color{black} \textbf{Relation to GANs}. 
At the first glance, our method might look like a typical GAN as it solves a maximin problem with the "discriminator" $\beta_{\phi}$ and SDE "generator" with the drift $f_{\theta}$.
Unlike GANs, in our saddle point objective \eqref{maxmin-lagrange-obj}, optimization of "generator" $T_{f}$ and "discriminator" $\beta$ are swapped, i.e., "generator" is adversarial to "discriminator", not vise versa, as in GANs. For further discussion of differences between saddle point objectives of neural OT/GANs, see \cite[\wasyparagraph 4.3]{korotin2023neural}, \cite[\wasyparagraph 4.3]{rout2022generative}, \cite{fan2023neural}.

\vspace{-2mm}
\subsection{Error Bounds via Duality Gaps}\label{sec:error-bounds-theorem}
\vspace{-2mm}
Our algorithm solves a maximin optimization problem and recovers some \textit{approximate} solution $(\hat{\beta},T_{\hat{f}})$. Given such a pair, it is natural to wonder how close is the recovered $T_{\hat{f}}$ to the optimal $T_{f^{*}}$. Our next result sheds light on this question via bounding the error with the \textit{duality gaps}.

\begin{theorem}[Error analysis via duality gaps]\label{thm:error-bounds-theorem} Consider a pair ($\hat{\beta}$, $T_{\hat{f}}$). Define the duality gaps, i.e., errors of solving inner and outer optimization problems by:
\begin{eqnarray}
    \epsilon_1 \defeq \mathcal{L}(\hat{\beta}, T_{\hat{f}}) - \inf_{T_{f}} \mathcal{L}(\hat{\beta}, T_{f}), \qquad\text{and}\qquad
    \epsilon_2 \defeq \sup_{\beta}\inf_{T_f \in \mathcal{D}(\sP_0)}\mathcal{L}(\beta, T_f) - \inf_{T_{f}} \mathcal{L}(\hat{\beta}, T_{f}).
\end{eqnarray}
Then it holds that
\begin{eqnarray}
    \rho_{\text{TV}}(T_{\hat{f}}, T_{f^*}) \leq \sqrt{\epsilon_1 + \epsilon_2}, \qquad\text{and}\qquad
    \rho_{\text{TV}}(\pi^{T_{\hat{f}}}, \pi^{T_{f^*}}) \leq \sqrt{\epsilon_1 + \epsilon_2},
\end{eqnarray}
where we use $\rho_{\text{TV}}(\cdot,\cdot)$ to denote the total variation norm (between the processes or plans).
\end{theorem}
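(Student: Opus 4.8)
The roadmap is: for a fixed $\hat\beta$, minimizing $\mathcal{L}(\hat\beta,\cdot)$ over diffusions is \emph{exactly} a Kullback--Leibler projection onto one fixed reference process; this lets us rewrite both duality gaps as $\text{KL}$ divergences to that process and then apply Pinsker's inequality.

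\textbf{Step 1 (the inner problem is a KL identity).} Fix an admissible $\hat\beta$. By the disintegration \eqref{eq:disintegration} and the chain rule for $\text{KL}$ (all processes in $\mathcal{D}(\mathbb{P}_0)$ and $W^{\epsilon}$ share the start marginal $\mathbb{P}_0$, and $\pi^{W^{\epsilon}}(\cdot|x)=\mathcal{N}(x,\epsilon I)$), the objective $\mathcal{L}(\hat\beta,T_f)$ splits into (i) a term $\int_{\mathcal X}\big[\text{KL}(\pi^{T_f}(\cdot|x)\|\pi^{W^{\epsilon}}(\cdot|x))-\int\hat\beta\,d\pi^{T_f}(\cdot|x)\big]d\mathbb{P}_0(x)$ that decouples over $x$, (ii) the non-negative bridge term $\int \text{KL}((T_f)_{|x,y}\|W^{\epsilon}_{|x,y})\,d\pi^{T_f}$, and (iii) the constant $\int\hat\beta\,d\mathbb{P}_1$. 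For each $x$ the Gibbs/Donsker--Varadhan principle solves the static subproblem by the tilt $\nu_x(dy)\propto e^{\hat\beta(y)}\pi^{W^{\epsilon}}(dy|x)$ and, moreover, gives the pointwise identity $\text{KL}(\nu\|\pi^{W^{\epsilon}}(\cdot|x))-\int\hat\beta\,d\nu=\text{KL}(\nu\|\nu_x)-\log Z(x)$. Letting $\widehat{T}\in\mathcal{D}(\mathbb{P}_0)$ be the diffusion with endpoint coupling $d\mathbb{P}_0(x)\,\nu_x(dy)$ and bridge conditionals those of $W^{\epsilon}$, reassembling (i)--(iii) via \eqref{eq:disintegration} yields
\[
\mathcal{L}(\hat\beta,T_f)=\text{KL}(T_f\,\|\,\widehat{T})+\mathcal{L}(\hat\beta,\widehat{T})\qquad\text{for every }T_f\in\mathcal{D}(\mathbb{P}_0),
\]
so $\widehat{T}$ is the unique minimizer of $\mathcal{L}(\hat\beta,\cdot)$ and $\inf_{T_f}\mathcal{L}(\hat\beta,T_f)=\mathcal{L}(\hat\beta,\widehat{T})$.

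\textbf{Step 2 (both gaps are KL divergences to $\widehat{T}$).} Taking $T_f=T_{\hat{f}}$ in the identity gives $\epsilon_1=\mathcal{L}(\hat\beta,T_{\hat{f}})-\mathcal{L}(\hat\beta,\widehat{T})=\text{KL}(T_{\hat{f}}\,\|\,\widehat{T})$. Taking $T_f=T_{f^*}$, the SB solution of Theorem~\ref{eot-as-dynamic-sb}: it is feasible, so the $\hat\beta$-terms in $\mathcal{L}$ cancel and $\mathcal{L}(\hat\beta,T_{f^*})=\text{KL}(T_{f^*}\|W^{\epsilon})=\sup_{\beta}\inf_{T_f}\mathcal{L}(\beta,T_f)$ (the saddle value equals the DSB value by Theorem~\ref{eot-as-dynamic-sb}); hence $\text{KL}(T_{f^*}\,\|\,\widehat{T})=\mathcal{L}(\hat\beta,T_{f^*})-\mathcal{L}(\hat\beta,\widehat{T})=\sup_{\beta}\inf_{T_f}\mathcal{L}(\beta,T_f)-\inf_{T_f}\mathcal{L}(\hat\beta,T_f)=\epsilon_2$. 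Then Pinsker's inequality $\rho_{\text{TV}}(P,Q)\le\sqrt{\tfrac12\text{KL}(P\|Q)}$ together with the triangle inequality gives $\rho_{\text{TV}}(T_{\hat{f}},T_{f^*})\le\rho_{\text{TV}}(T_{\hat{f}},\widehat{T})+\rho_{\text{TV}}(\widehat{T},T_{f^*})\le\sqrt{\tfrac12\epsilon_1}+\sqrt{\tfrac12\epsilon_2}$, and squaring with $2\sqrt{\epsilon_1\epsilon_2}\le\epsilon_1+\epsilon_2$ yields $\rho_{\text{TV}}(T_{\hat{f}},T_{f^*})\le\sqrt{\epsilon_1+\epsilon_2}$. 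The plan bound follows for free: $\pi^{T_{\hat{f}}}$ and $\pi^{T_{f^*}}$ are the pushforwards of $T_{\hat{f}},T_{f^*}$ under the endpoint map $\omega\mapsto(\omega_0,\omega_1)$, and total variation does not increase under pushforwards, so $\rho_{\text{TV}}(\pi^{T_{\hat{f}}},\pi^{T_{f^*}})\le\rho_{\text{TV}}(T_{\hat{f}},T_{f^*})\le\sqrt{\epsilon_1+\epsilon_2}$.

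\textbf{Main obstacle.} The genuine work is Step 1: recognizing the inner Lagrangian as $\text{KL}(T_f\|\widehat{T})$ up to an additive constant. This requires the process disintegration \eqref{eq:disintegration}, the Donsker--Varadhan variational principle applied conditionally in $x$, and a check that $\widehat{T}$ is a bona fide finite-energy diffusion (finiteness of the second moment of its endpoint coupling follows from the upper bound $\hat\beta\le w$ built into $\mathcal{C}_{b,2}(\mathcal{Y})$ and the Gaussianity of $\pi^{W^{\epsilon}}(\cdot|x)$). One must also keep Pinsker's constant and the elementary inequality $2\sqrt{\epsilon_1\epsilon_2}\le\epsilon_1+\epsilon_2$ straight so the final bound comes out as exactly $\sqrt{\epsilon_1+\epsilon_2}$ rather than a larger multiple.
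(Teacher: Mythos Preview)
Your proof is correct and follows essentially the same route as the paper: identify the inner minimizer as the exponentially tilted process $T^{\hat\beta}$ (the paper's Lemma~\ref{thm:inner-problem} and Corollary~\ref{cor:inner-problem-sol-is-diffusion}), recognize $\epsilon_1=\text{KL}(T_{\hat f}\,\|\,T^{\hat\beta})$ and $\epsilon_2=\text{KL}(T_{f^*}\,\|\,T^{\hat\beta})$, and then combine Pinsker with the triangle inequality. The only noteworthy difference is in the plan bound: the paper repeats the Pinsker/triangle chain at the level of $\pi$ after dropping the nonnegative bridge-$\text{KL}$ terms via disintegration, whereas your data-processing argument (total variation is non-increasing under the endpoint pushforward $\omega\mapsto(\omega_0,\omega_1)$) is a cleaner one-liner yielding the same conclusion.
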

\textbf{Relation to prior works}. There exist deceptively similar results, see \cite[Theorem 3.6]{makkuva2020optimal}, \cite[Theorem 4.3]{rout2022generative}, \cite[Theorem 4]{fan2023neural}, \cite[Theorem 3]{asadulaev2022neural}. \textit{None of them are relevant to our EOT/DSB case.}

In \cite{makkuva2020optimal}, \cite{rout2022generative}, \cite{fan2023neural}, the authors consider maximin reformulations of unregularized OT \eqref{kantarovich-ot}, i.e., non-entropic. Their result requires the potential $\beta$ ($f,\psi$ in their notation) to be a convex function which in practice means that one has to employ ICNNs \cite{amos2017input} which have poor expressiveness \cite{korotin2021continuous,fan2022variational,korotin2019wasserstein}. Our result is free from such assumptions on $\beta$. In \cite{asadulaev2022neural}, the authors consider general OT problem \cite{paty2020regularity} and require the general cost functional to be strongly convex (in some norm). Their results also do not apply to our case as the (negative) entropy  which we consider is not strongly convex.

\color{black}
\vspace{-2mm}
\section{Experimental Illustrations}
\vspace{-2mm}
\label{sec-experiments}
In this section, we qualitatively and quantitatively illustrate the performance of our algorithm in several entropic OT tasks. Our proofs apply only to EOT ($\epsilon > 0$), but for completeness, we also present results $\epsilon=0$, i.e., unregularized case \eqref{kantarovich-ot}. Furthermore, we test our algorithm with $\epsilon = 0$ on the \underline{Wasserstein-2 Benchmark} \cite{korotin2021neural}, see Appendix~\ref{sec:comparison-on-benchmark}. 
We also demonstrate the extension of our algorithm \underline{to costs other than the squared Euclidean distance} in Appendix~\ref{sec:general-cost}. 
The \underline{implementation details} are given in Appendices~\ref{sec-extra-experiments}, \ref{sec-appendix-images-tasks} and ~\ref{baseline-details}. \color{black}
The code is written in \texttt{PyTorch} and is publicly available at 
\begin{center}
    \url{https://github.com/ngushchin/EntropicNeuralOptimalTransport}
\end{center}

\vspace{-2mm}
\subsection{Toy 2D experiments}\label{sec-toy-2d-exp}
\vspace{-2mm}
\label{sec-toy-exp}
Here we give qualitative examples of our algorithm's performance on toy 2D pairs of distributions. We consider two pairs $\sP_0,\sP_1$: \textit{Gaussian} $\rightarrow$ \textit{Swiss Roll}, \textit{Gaussian} $\rightarrow$ \textit{Mixture of 8 Gaussians}. We provide qualitative results in Figure~\ref{8-gaussians} and Figure~\ref{swiss-roll} (Appendix \ref{sec-extra-experiments}), respectively. In both cases, we provide solutions of the problem for $\epsilon=0,0.01,0.1$ and sample trajectories. For $\epsilon=0$, all the trajectories are straight lines as they represent solutions for non-regularized OT \eqref{kantarovich-ot}, see \cite[\wasyparagraph 5.4]{santambrogio2015optimal}. For bigger $\epsilon$, trajectories, as expected, become more noisy and less straight.

\begin{figure*}[!t]
\begin{subfigure}[b]{0.245\linewidth}
\centering
\includegraphics[width=0.995\linewidth]{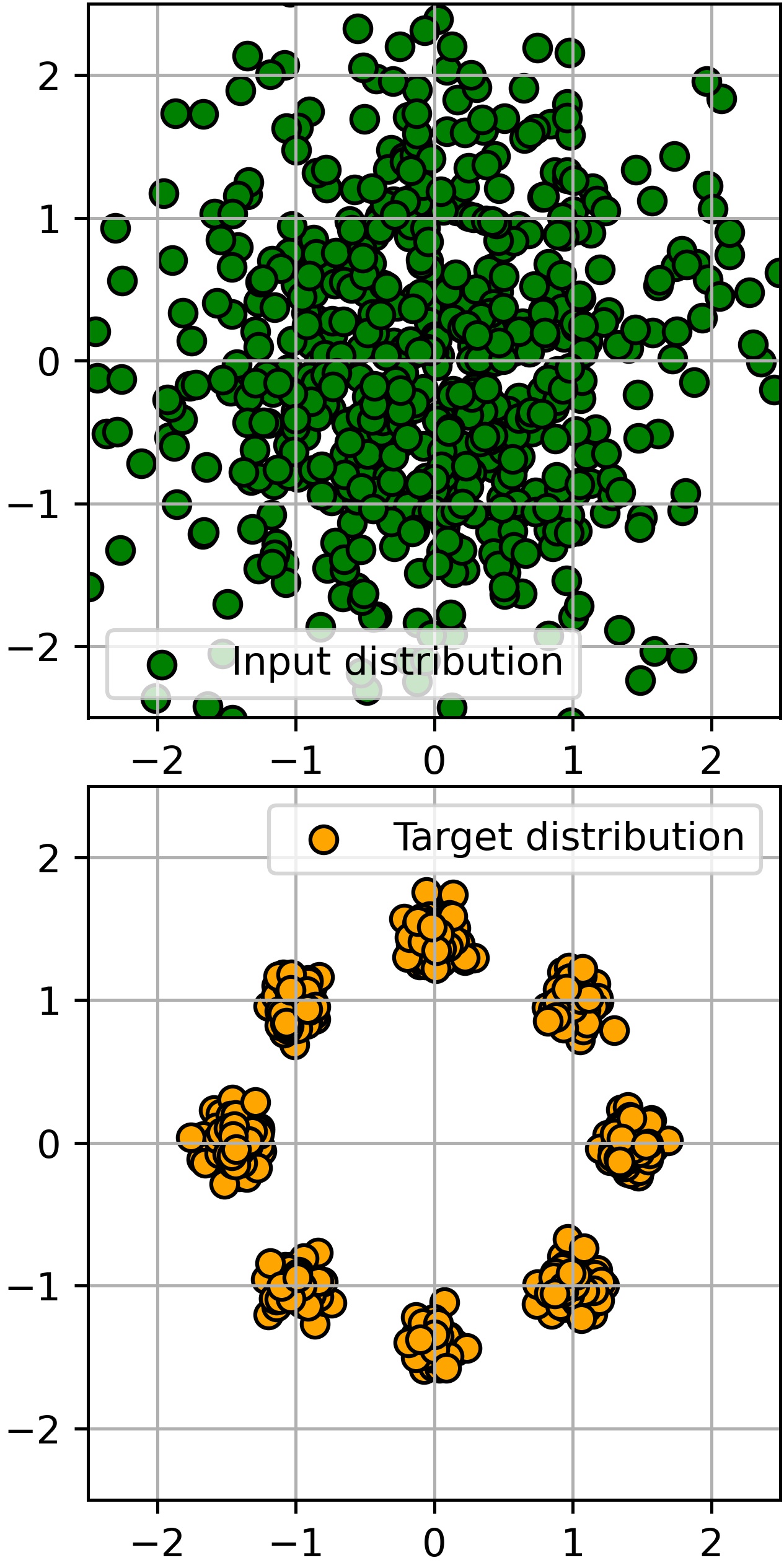}
\caption{\centering ${x\sim \sP_0
}$, ${y \sim \sP_1}$}
\vspace{-1mm}
\end{subfigure}
\vspace{-1mm}\hfill\begin{subfigure}[b]{0.245\linewidth}
\centering
\includegraphics[width=0.995\linewidth]{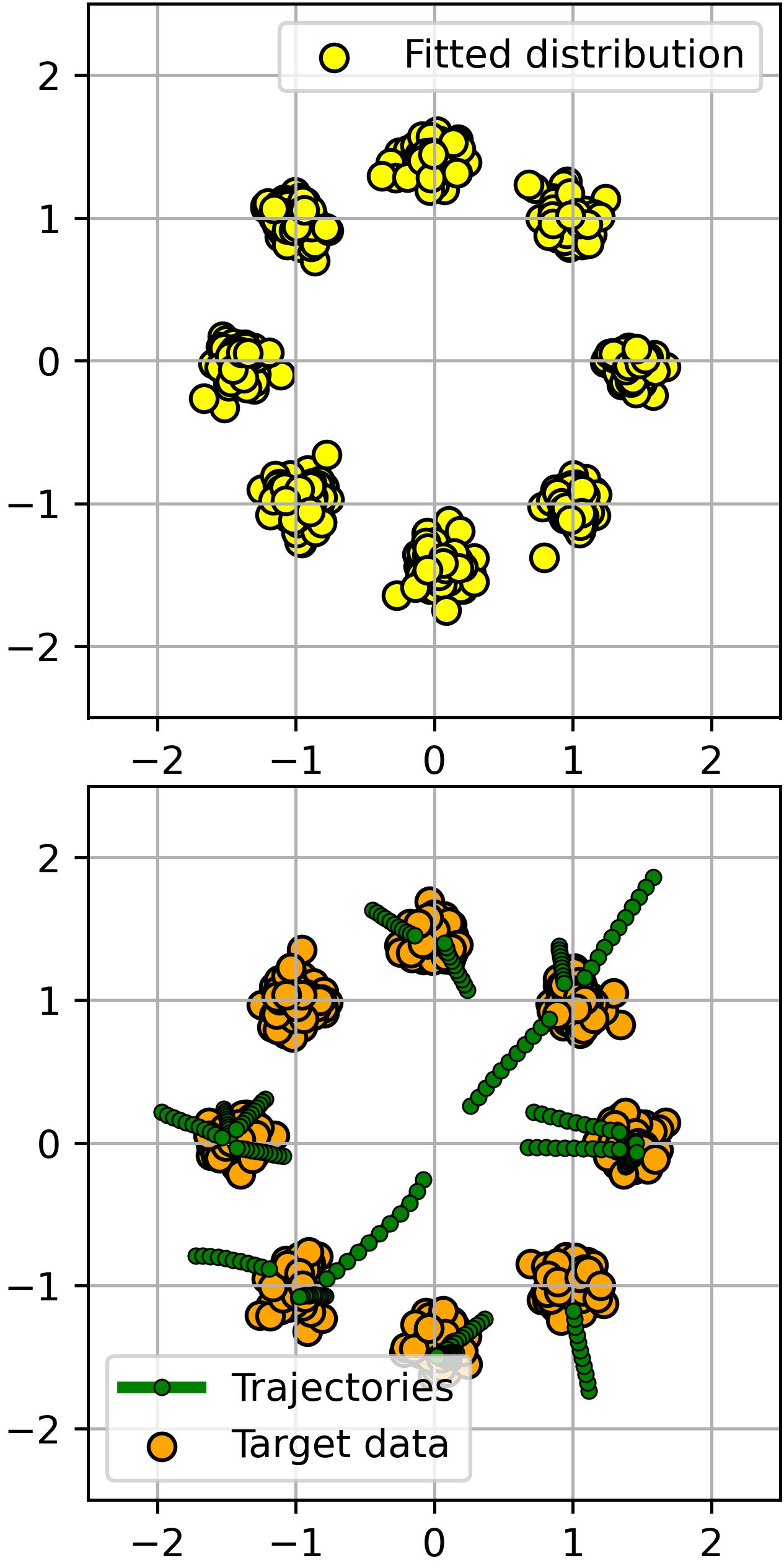}
\caption{\centering ENOT \textbf{(ours)}, $\epsilon=0$}
\vspace{-1mm}
\end{subfigure}
\hfill\begin{subfigure}[b]{0.245\linewidth}
\centering
\includegraphics[width=0.995\linewidth]{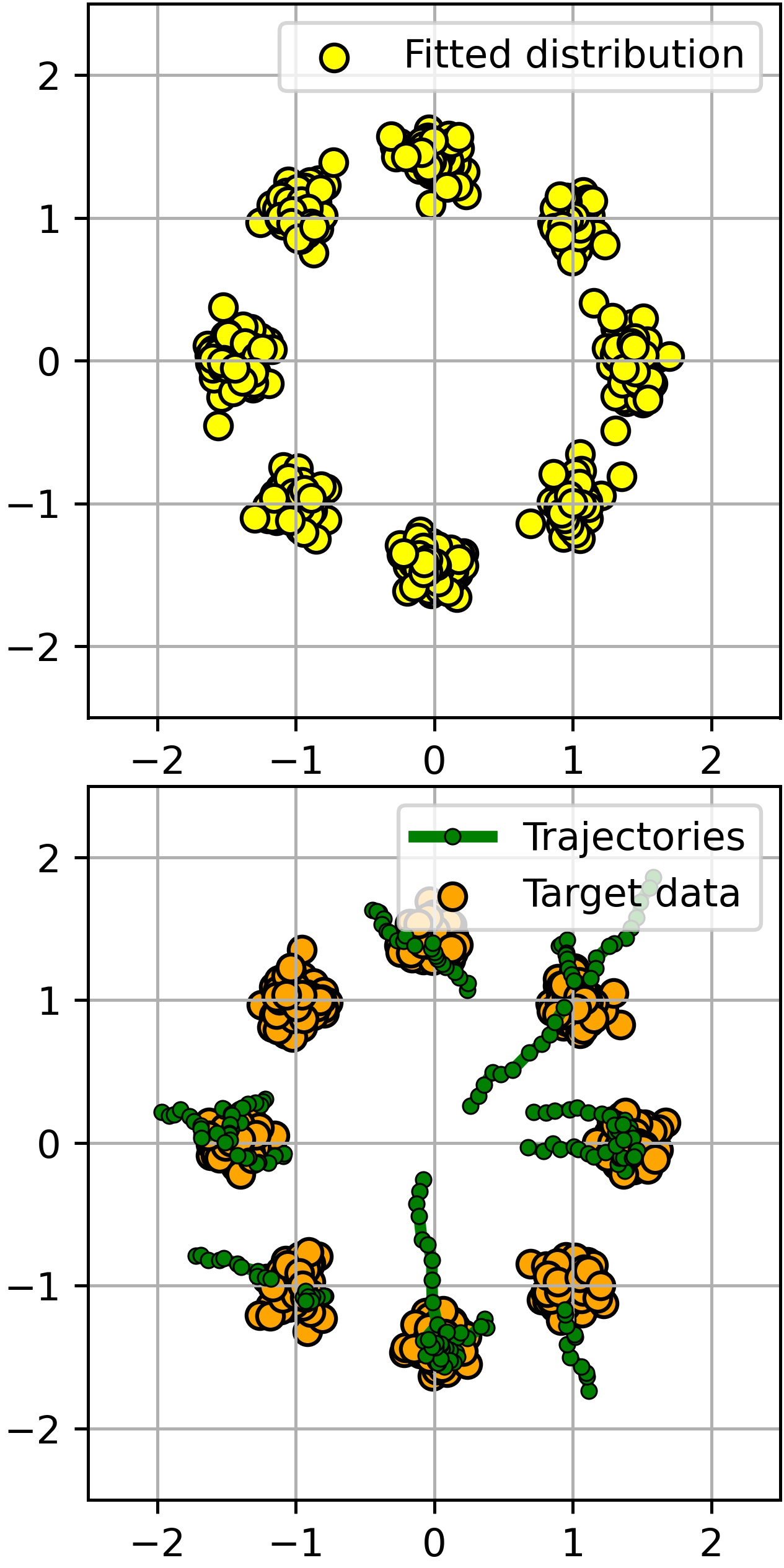}
\caption{\centering ENOT \textbf{(ours)}, $\epsilon=0.01$}
\vspace{-1mm}
\end{subfigure}
\hfill\begin{subfigure}[b]{0.245\linewidth}
\centering
\includegraphics[width=0.995\linewidth]{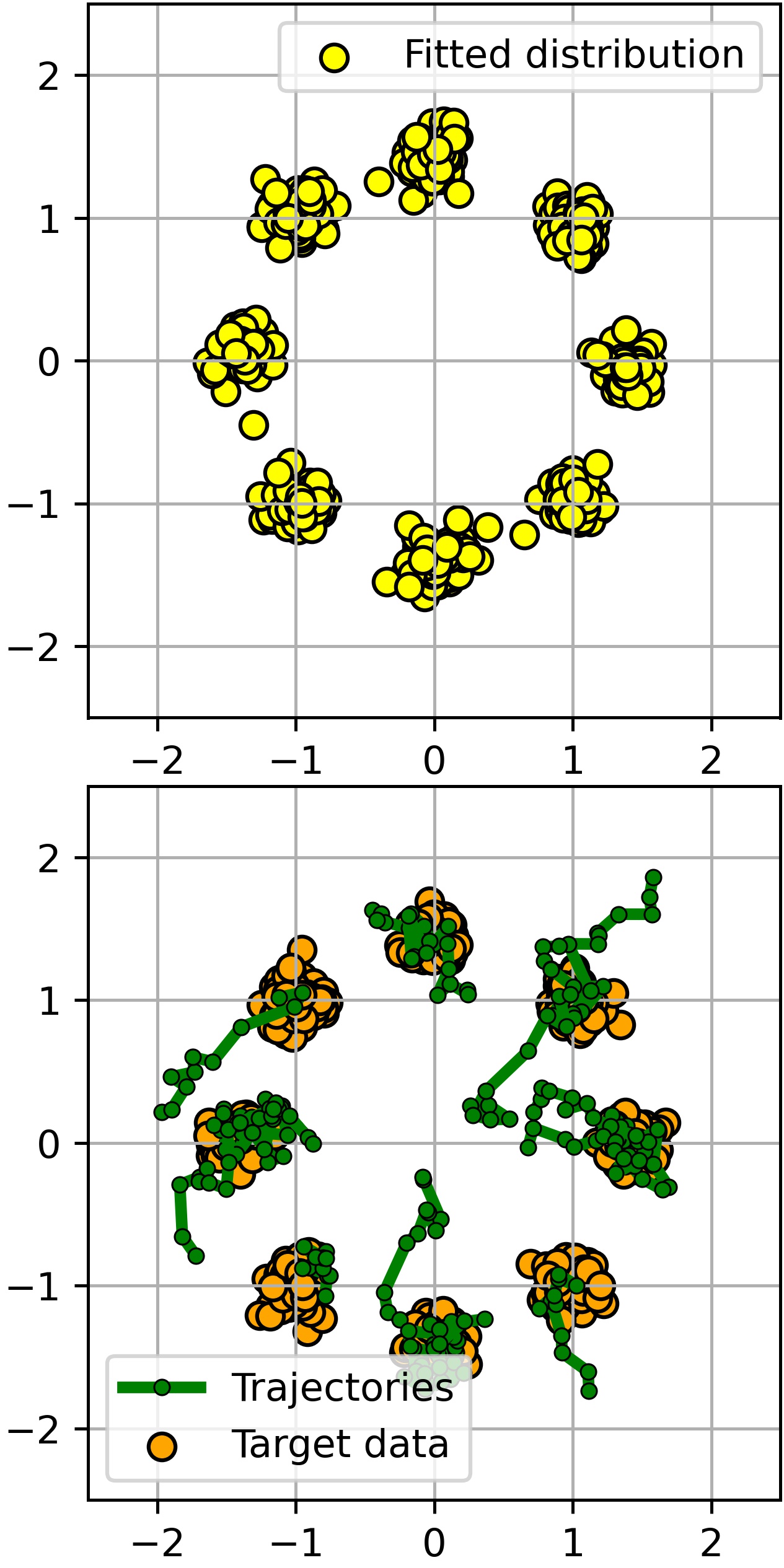}
\caption{\centering ENOT \textbf{(ours)}, $\epsilon=0.1$}
\vspace{-1mm}
\end{subfigure}
\vspace{-3mm} \caption{\textit{Gaussian} $\!\rightarrow\!$ \textit{Mix of 8 Gaussians.} The process learned with ENOT \textbf{(ours)} for ${\epsilon\!=\!0,0.01, 0.1}$.}
\label{8-gaussians}
\vspace{-6mm}
\end{figure*}

\vspace{-2mm}
\subsection{High-dimensional Gaussians}
\label{sec-exp-gauss}
\vspace{-2mm}
For general continuous distributions $\sP_0,\sP_1$, the ground truth solution of entropic OT \eqref{entropy-reg-ot} and DSB \eqref{dynamic-schrodinger} is unknown. This makes it challenging to assess how well does our algorithm solve these problems. Fortunately, when $\sP_0$ and $\sP_1$ are Gaussians, there exist \textit{closed form solutions} of these related problems, see \cite{janati2020entropic} and \cite{bunne2023schrodinger}. Thus, to quantify the performance of our algorithm,
we consider entropic OT problems in dimensions ${D \in \{2, 16, 64, 128\}}$ with $\epsilon=1$ for Gaussian ${\sP_0 = \mathcal{N}(0, \Sigma_0)}$, ${\sP_1 = \mathcal{N}(0, \Sigma_1)}$. We pick $\Sigma_0$, $\Sigma_1$ at random: their eigenvectors are uniformly distributed on the unit sphere and eigenvalues are sampled from the loguniform distribution on $[-\log 2, \log2]$.

\textbf{Metrics}. We evaluate \textbf{(a)} how precisely our algorithm fits the target distribution $\sP_1$ on $\mathbb{R}^{D}$; \textbf{(b)} how well it recovers the entropic OT plan $\pi^{*}$ which is a Gaussian distribution on $\mathbb{R}^{D}\times\mathbb{R}^{D}$; \textbf{(c)} how accurate are the learned marginal distributions at intermediate times $t=0,\frac{1}{10},\dots,1$.

In each of the above-mentioned cases, we compute the BW$_{2}^{2}$-UVP \cite[\wasyparagraph 5]{korotin2021continuous} between the learned and the ground truth distributions. \color{black} For two distributions $\widehat{\chi}$ and $\chi$, it is the Wasserstein-2 distance between their Gaussian approximations which is further normalized by the variance of the distribution $\chi$: 
\begin{equation}\label{eq:bw-uvp-def}
     \text{B}\mathbb{W}_{2}^{2}\text{-UVP}\big(\widehat{\chi}, \chi \big) = \frac{100 \%}{\frac{1}{2}\text{Var}(\chi)} \mathbb{W}_{2}^{2} (\mathcal{N}(\mu_{\widehat{\chi}}, \Sigma_{\widehat{\chi}}), \mathcal{N}(\mu_{\chi}, \Sigma_{\chi})).
\end{equation}
We estimate the metric by using $10^{5}$ samples.
\color{black}

\textbf{Baselines}. We compare our method ENOT with LSOT \cite{seguy2018large}, SCONES \cite{daniels2021score}, MLE-SB \cite{vargas2021solving}, DiffSB\cite{de2021diffusion}, FB-SDE \cite{chen2021likelihood} (two algorithms, A and J). Results are given in Tables \ref{bw-uvp-target}, \ref{bw-uvp-plan} and \ref{bw-uvp-process}. LSOT and SCONES solve EOT without solving SB, hence there are no results in Table \ref{bw-uvp-process} for these methods. Our method achieves low $\text{BW}_2^2\text{-UVP}$ values indicating that it recovers the ground truth process and entropic plan fairly well. \color{black} The competitive methods have good results in low dimensions, however they perform worse in high dimensions. Importantly, our methods scores the best results in recovering the OT plan (Table \ref{bw-uvp-plan}) and the marginal distributions of the Schrodinger bridge (Table \ref{bw-uvp-process}). \color{black} To illustrate the \underline{stable convergence} of ENOT, we provide the plot of $\text{BW}_2^2\text{-UVP}$ between the learned plan and the ground truth plan for ENOT during training in Figure~\ref{fig:enot-stability} (Appendix~\ref{sec-extra-experiments}). \color{black}

\color{black}

\begin{table*}[t]
\hspace{-4mm}
\vspace{0mm}
\color{black}
\begin{minipage}{.49\textwidth}
\begin{center}
\tiny
\begin{tabular}{ |c|c|c|c|c|c| } 
\hline
 \textbf{Dim} & \textbf{2} & \textbf{16} & \textbf{64} & \textbf{128} \\ 
 \hline
 {ENOT \textbf{(ours)}} &\color{black} \makecell{$\mathbf{0.01}$ \\ $(\pm 0.006)$} &  \color{black} \makecell{$0.09$ \\ $(\pm 0.02)$} & \color{black} \makecell{$0.23$ \\ $(\pm 0.03)$}  & \color{black} \makecell{$0.50$ \\ $(\pm 0.08)$} \\ 
 \hline
  LSOT \cite{seguy2018large} & $1.82$ & $6.42$  & $32.18$ & $64.32$ \\ 
  \hline
 SCONES \cite{daniels2021score} & $1.74$ &  $1.87$  & $6.27$ & $6.88$ \\ 
 \hline
  MLE-SB \cite{vargas2021solving} & \color{black} $0.41$ & \color{black} $0.50$  & \color{black} $1.16$ & \color{black} $2.13$ \\ 
 \hline
 DiffSB \cite{de2021diffusion} & $0.7$ &  $1.11$  & $1.98$ & $2.20$ \\ 
 \hline
 FB-SDE-A \cite{chen2021likelihood} & $0.87$ &  $0.94$  & $1.85$ & $1.95$ \\ 
 \hline
FB-SDE-J \cite{chen2021likelihood} & $0.03$ &  $\mathbf{0.05}$  & $\mathbf{0.19}$ & $\mathbf{0.39}$ \\ 
\hline
\end{tabular}
\vspace{-1mm}
\captionsetup{justification=centering}
 \caption{Comparisons of $\text{BW}_2^2\text{-UVP} \downarrow$ (\%) between the target $\sP_1$ and learned marginal  $\pi_1$.}
 \label{bw-uvp-target}
 \end{center}
\end{minipage}
\hspace{2mm}
\begin{minipage}{.49\textwidth}
\begin{center}
\tiny
\begin{tabular}{ |c|c|c|c|c|c| } 
\hline
 \textbf{Dim} & \textbf{2} & \textbf{16} & \textbf{64} & \textbf{128}  \\
 \hline
 ENOT \textbf{(ours)} & \color{black} \makecell{$\mathbf{0.012}$ \\ $(\pm 0.003)$} & \color{black} \makecell{$\mathbf{0.05}$ \\ $(\pm 0.01)$}  &  \color{black} \makecell{$\mathbf{0.13}$ \\ $(\pm 0.014)$} & \color{black} \makecell{$\mathbf{0.29}$ \\ $(\pm 0.04)$} \\ 
  \hline
  LSOT \cite{seguy2018large} & $6.77$ & $14.56$  & $25.56$ & $47.11$ \\  
 \hline
 SCONES \cite{daniels2021score} & $0.92$ & $1.36$  &  $4.62$ & $5.33$  \\
 \hline
 MLE-SB \cite{vargas2021solving} & \color{black} $0.3$ & \color{black} $0.9$  & \color{black} $1.34$ & \color{black} $1.8$ \\ 
 \hline
 DiffSB \cite{de2021diffusion} & $0.88$ & $1.7$  &  $2.32$ & $2.43$\\
 \hline
 FB-SDE-A  \cite{chen2021likelihood} & $0.75$ &  $1.36$  & $2.45$ & $2.64$ \\ 
 \hline
 FB-SDE-J \cite{chen2021likelihood} & $0.07$ &  $0.22$  & $0.34$ & $0.58$ \\ 
 \hline
\end{tabular}
\vspace{-1mm}
\captionsetup{justification=centering}
 \caption{Comparisons of $\text{BW}_2^2\text{-UVP}\downarrow$ (\%) between the the EOT plan $\pi^{*}$ and learned plan $\pi$ .}
 \label{bw-uvp-plan}
\end{center}
\end{minipage}
\vspace{1mm}
\end{table*}

\begin{table*}[t]
\vspace{-2mm}
\begin{center}
\scriptsize
\begin{tabular}{ |c|c|c|c|c|c|c|c|c|c|c|c| } 
\hline
 \textbf{$t$}, time & \textbf{0} & \textbf{0.2} & \textbf{0.4} & \textbf{0.6} & \textbf{0.8} & \textbf{1} \\ 
 \hline
 ENOT \textbf{(ours)} & \color{black} $0$ &  \color{black} \makecell{$\mathbf{0.01}$ \\ $(\pm 0.001)$} & \color{black} \makecell{$\mathbf{0.023}$ \\ $(\pm 0.005)$} &   \color{black} \makecell{$\mathbf{0.042}$ \\ $(\pm 0.007)$} & \color{black} \makecell{$\mathbf{0.067}$ \\ $(\pm 0.015)$} & \color{black} \makecell{$0.096$ \\ $(\pm 0.019)$} \\
\hline
LSOT \cite{seguy2018large} & 0 & N/A & N/A & N/A & N/A & $6.42$ \\
\hline
SCONES \cite{daniels2021score} & 0 & N/A & N/A & N/A & N/A & $6.88$ \\
\hline
MLE-SB \cite{vargas2021solving} & \color{black} $0$ & \color{black} $0.10$ &  \color{black} $0.23$ &  \color{black} $0.30$ & \color{black} $0.36$ & \color{black} $0.50$ \\
\hline
DiffSB \cite{de2021diffusion} & $0$ &  $0.19$ &   $0.48$ &   $0.68$ &  $0.91$ & $1.11$\\
\hline
FB-SDE-A  \cite{chen2021likelihood} & $0$ &  $0.17$ &   $0.45$ &   $0.61$ &  $0.77$ &  $0.94$\\
\hline
FB-SDE-J  \cite{chen2021likelihood} & $0$ & $0.18$ & $0.32$ & $0.31$ & $0.17$ & $\mathbf{0.05}$ \\
\hline
\end{tabular}
\vspace{0mm}
\captionsetup{justification=centering}
\looseness=-1
 \caption{Comparisons of $\text{BW}_2^2\text{-UVP}\downarrow$ (\%) between the learned marginal distributions and the ground truth marginal distributions at the intermediate time moments $t=0,\frac{2}{10},\dots,1$ in dimension ${D=16}$.}
 \label{bw-uvp-process}
 \end{center}
\vspace{-8mm}
\end{table*}

\vspace{-2mm}
\subsection{Colored MNIST}
\label{sec-colored-mnist}
\vspace{-2mm}
In this section, we test how the entropy parameter $\epsilon$ affects the stochasticity of the learned plan in higher dimensions.
For this, we consider the entropic OT problem between colorized MNIST digits of classes "$2$" ($\sP_0$) and "$3$" ($\sP_1$). 

\textbf{Effect of parameter $\epsilon$.} For $\epsilon=0, 1, 10$, we learn our Algorithm \ref{neural-schrodinger-bridge} on the \textit{train} sets of digits "2" and "3". We show the translated \textit{test} images in Figures \ref{fig:mnist-0-our}, \ref{fig:mnist-1-our} and \ref{fig:mnist-10-our}, respectively. When $\epsilon=0$, there is no diversity in generated "3" samples (Figure \ref{fig:mnist-0-our}), the color remains since the map tried to minimally change the image in the RGB pixel space. When $\epsilon=1$, some slight diversity in the shape of "3" appears but the color of the input "2" is still roughly preserved (Figure \ref{fig:mnist-10-our}). For higher $\epsilon$, the diversity of generated samples becomes clear (Figure \ref{fig:mnist-10-our}). In particular, the color of "3" starts to slightly deviate from the input "2". That is, increasing the value $\epsilon$ of the entropy term in \eqref{entropy-reg-ot} expectedly leads to bigger stochasticity in the plan.  We add the conditional LPIPS variance \cite[Table 1]{huang2018multimodal} of generated samples for test datasets by ENOT to show how diversity changes for different $\epsilon$ (Table~\ref{tbl:variability}). \color{black}We provide examples of trajectories learned by ENOT in Figure~\ref{fig:MNIST-trajectories} (Appendix~\ref{sec-appendix-images-tasks}).\color{black}

\color{black}
\textbf{Metrics and baselines.} We compare our method ENOT with SCONES \cite{daniels2021score}, and DiffSB \cite{de2021diffusion} as these are the only methods which the respective authors applied for \textit{data}$\rightarrow$\textit{data} tasks. \color{black} To evaluate the results, we use the FID metric \cite{heusel2017gans} which is the Bures-Wasserstein (Freschet) distance between the distributions after extracting features using the InceptionV3 model \cite{szegedy2015going}\color{black}. We measure test FID for every method and present the results and qualitative examples in Figure \ref{fig:mnist}. There are no results for SCONES with $\epsilon=0,1,10$, since it \textbf{is not applicable} for such reasonably small $\epsilon$ due to computational instabilities, see  \cite[\wasyparagraph 5.1] {daniels2021score}. DiffSB \cite{de2021diffusion} can be applied for small regularization $\epsilon$, so we test $\epsilon=1,10$. By the construction, this algorithm is not suitable for $\epsilon=0$. 

Our ENOT method outperforms the baselines in FID. DiffSB \cite{de2021diffusion} yield very high FID. This is presumably due to instabilities of DiffSB which the authors report in their sequel paper \cite{shi2023diffusion}. SCONES yields reasonable quality but due to high $\epsilon=25,100$ the shape and color of the generated images "3" starts to deviate from those of their respective inputs "2".

For completeness, we provide the results of the stochastic \textit{matching} of the \textbf{test parts} of the datasets by the discrete OT for $\epsilon=0$ and EOT \cite{cuturi2013sinkhorn} for $\eps=1,10$ (Figures \ref{fig:mnist-0-dot}, \ref{fig:mnist-1-dot}, \ref{fig:mnist-10-dot}). This is not the out-of-sample estimation, obtained samples "3" are just \textbf{test} samples of "$3$" (this setup is \textit{unfair}). Discrete OT is \textit{not a competitor} here as it does not generate new samples and uses \textit{target} test samples. Still it gives a rough estimate what to expect from the learned plans for increasing $\eps$. 

\vspace{-3mm}
\subsection{Unpaired Super-resolution of Celeba Faces}
\label{sec-celeba}
\vspace{-3mm}

\begin{figure*}[!t]
\vspace{-3mm}
\begin{subfigure}[b]{0.19\linewidth}
\centering
\includegraphics[width=0.995\linewidth]{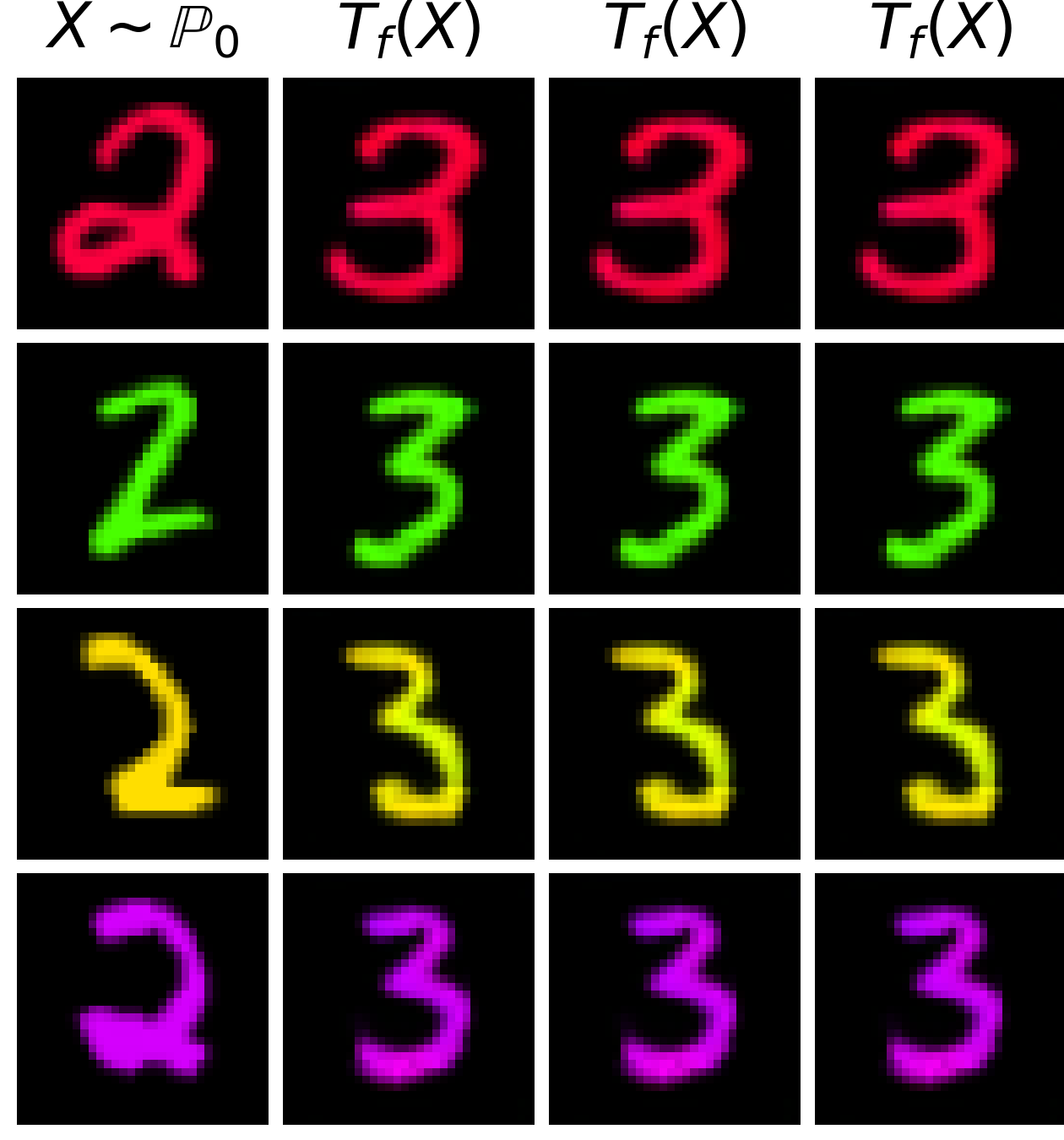}
\caption{\scriptsize \centering ENOT \textbf{(ours)} samples, $\epsilon=0$, FID:$6.0$}
\label{fig:mnist-0-our}
\end{subfigure}
\begin{subfigure}[b]{0.19\linewidth}
\centering
\includegraphics[width=0.995\linewidth]{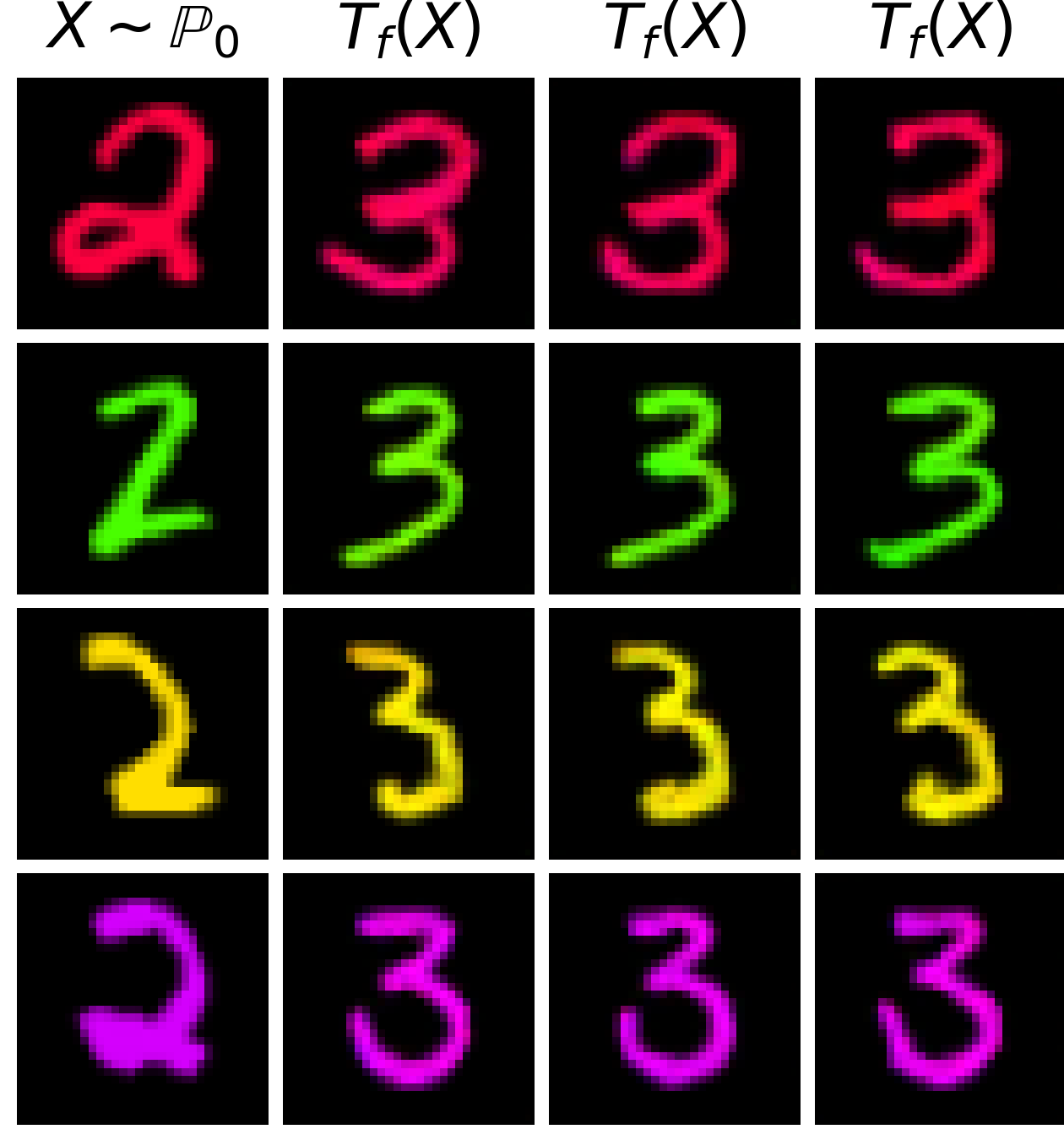}
\caption{\scriptsize \centering ENOT \textbf{(ours)} samples, $\epsilon=1$, FID:$6.28$}
\label{fig:mnist-1-our}
\end{subfigure}
\begin{subfigure}[b]{0.19\linewidth}
\centering
\includegraphics[width=0.995\linewidth]{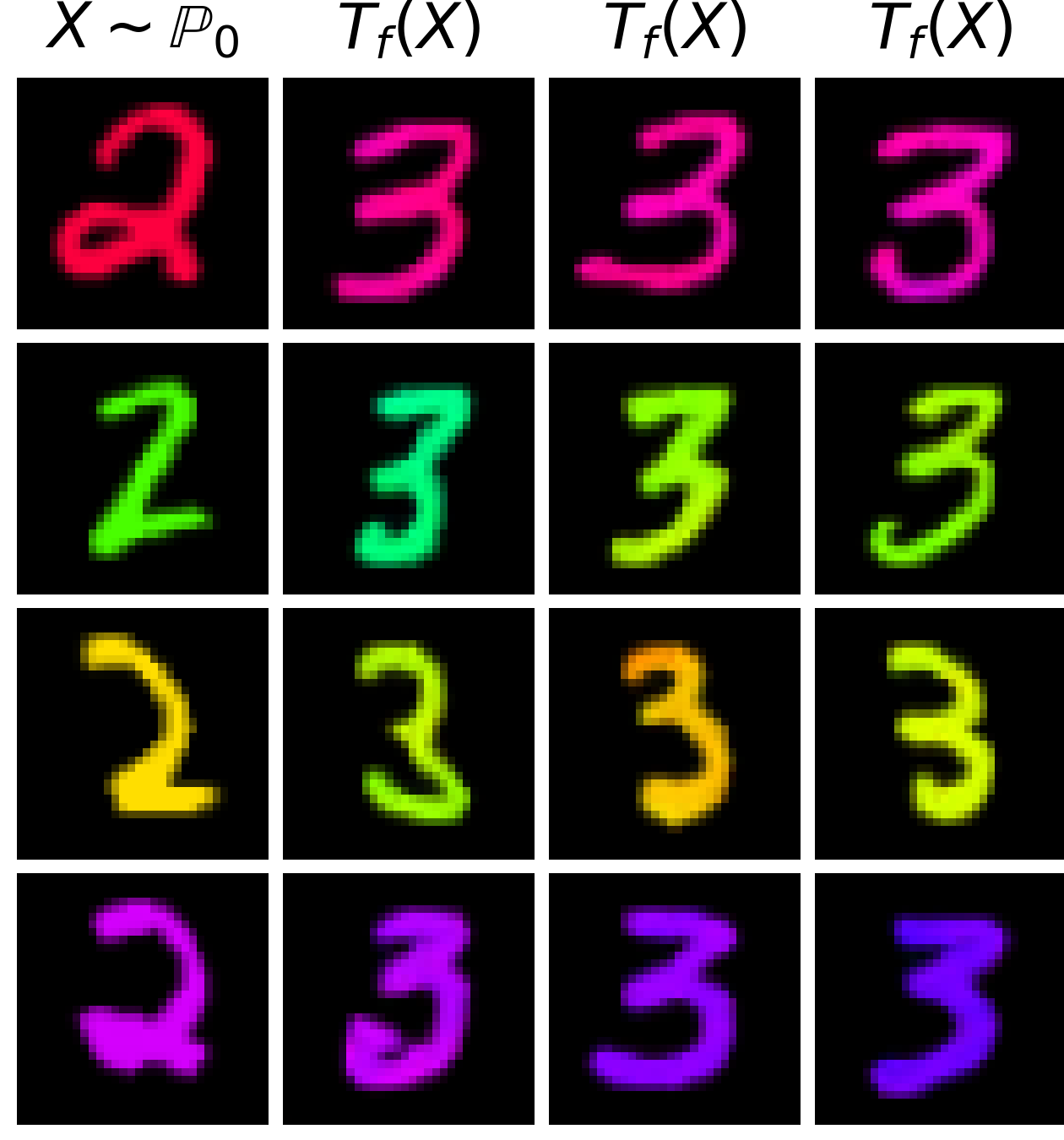}
\caption{\scriptsize \centering ENOT \textbf{(ours)} samples, $\epsilon\!=\!10$, FID:$6.9$}
\label{fig:mnist-10-our}
\end{subfigure}
\hspace{1mm}
\vline
\hspace{1mm}
\begin{subfigure}[b]{0.19\linewidth}
\centering
\includegraphics[width=0.995\linewidth]{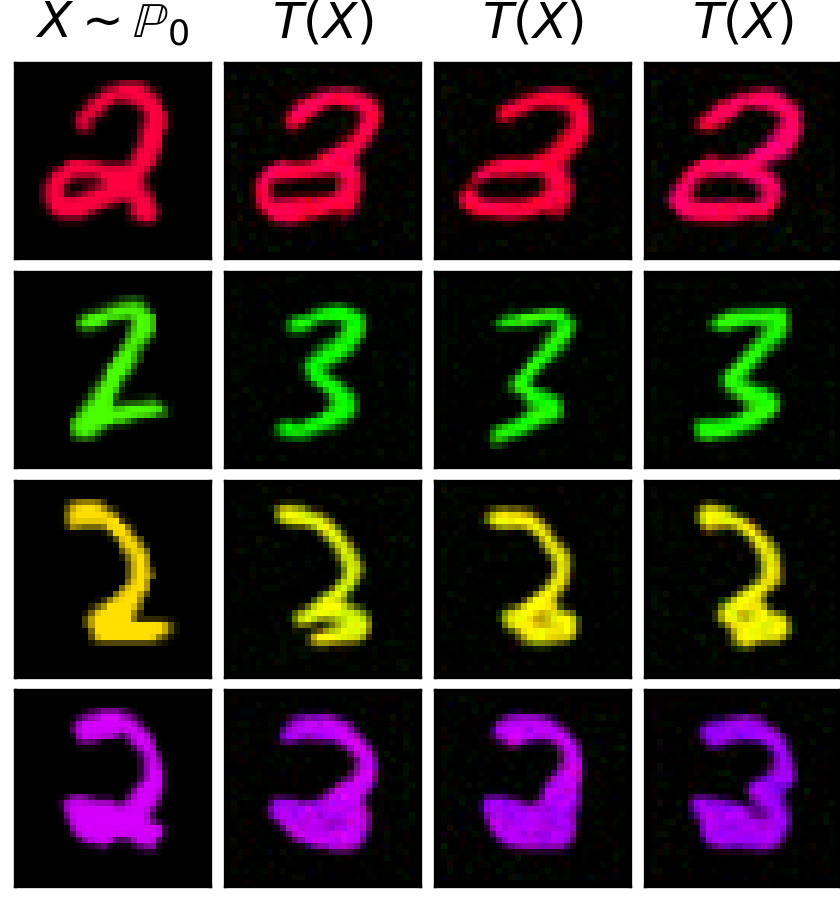}
\caption{\scriptsize \centering DiffSB \cite{de2021diffusion} samples, \protect\linebreak $\epsilon=1$, FID:93}
\label{fig:de_bortoli_1_app}
\end{subfigure}
\begin{subfigure}[b]{0.19\linewidth}
\centering
\includegraphics[width=0.995\linewidth]{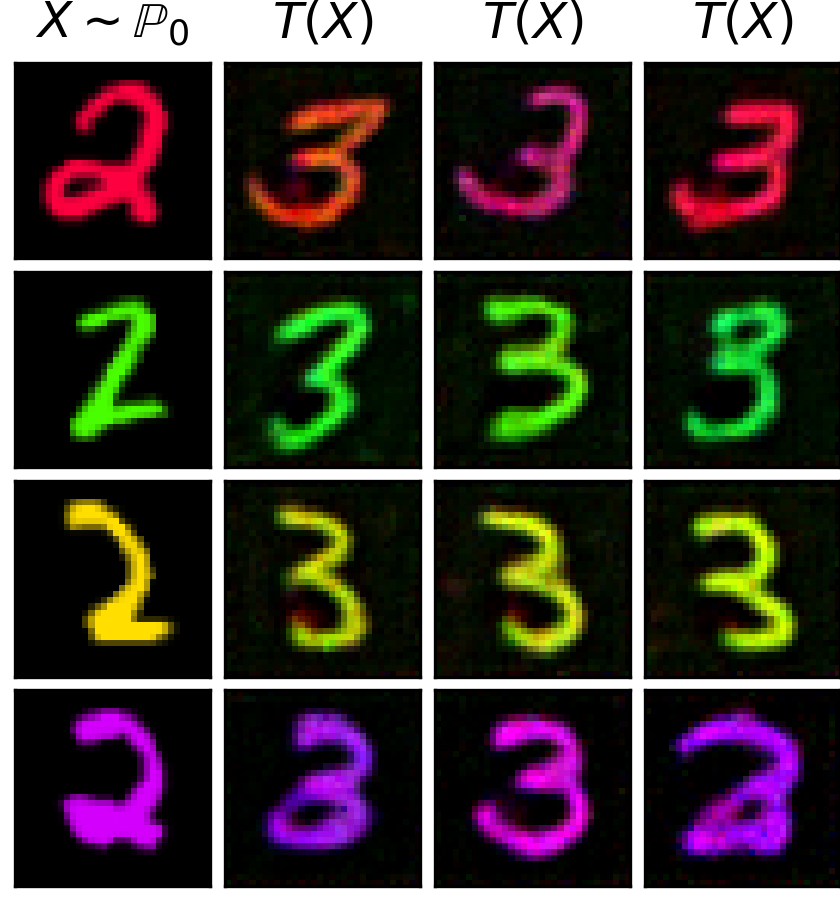}
\caption{\scriptsize DiffSB \centering \cite{de2021diffusion} samples, \protect\linebreak $\epsilon=10$, FID:105}
\label{fig:de_bortoli_10_app}
\end{subfigure}
\vskip\baselineskip
\vspace{-3mm}
\begin{subfigure}[b]{0.19\linewidth}
\centering
\includegraphics[width=0.995\linewidth]{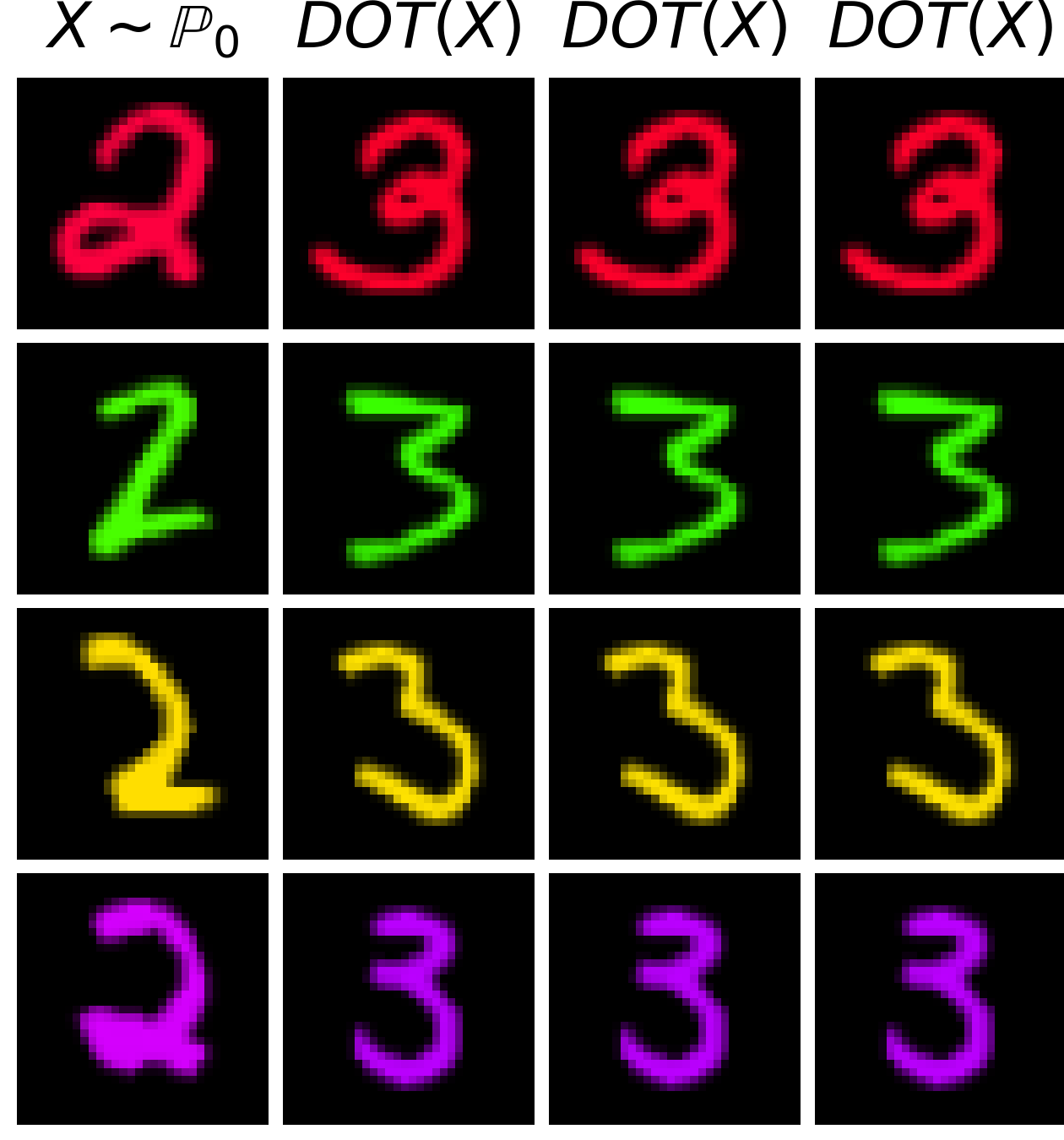}
\caption{\scriptsize \centering DOT samples, \newline $\epsilon=0$, FID:N/A}
\label{fig:mnist-0-dot}
\end{subfigure}
\begin{subfigure}[b]{0.19\linewidth}
\centering
\includegraphics[width=0.995\linewidth]{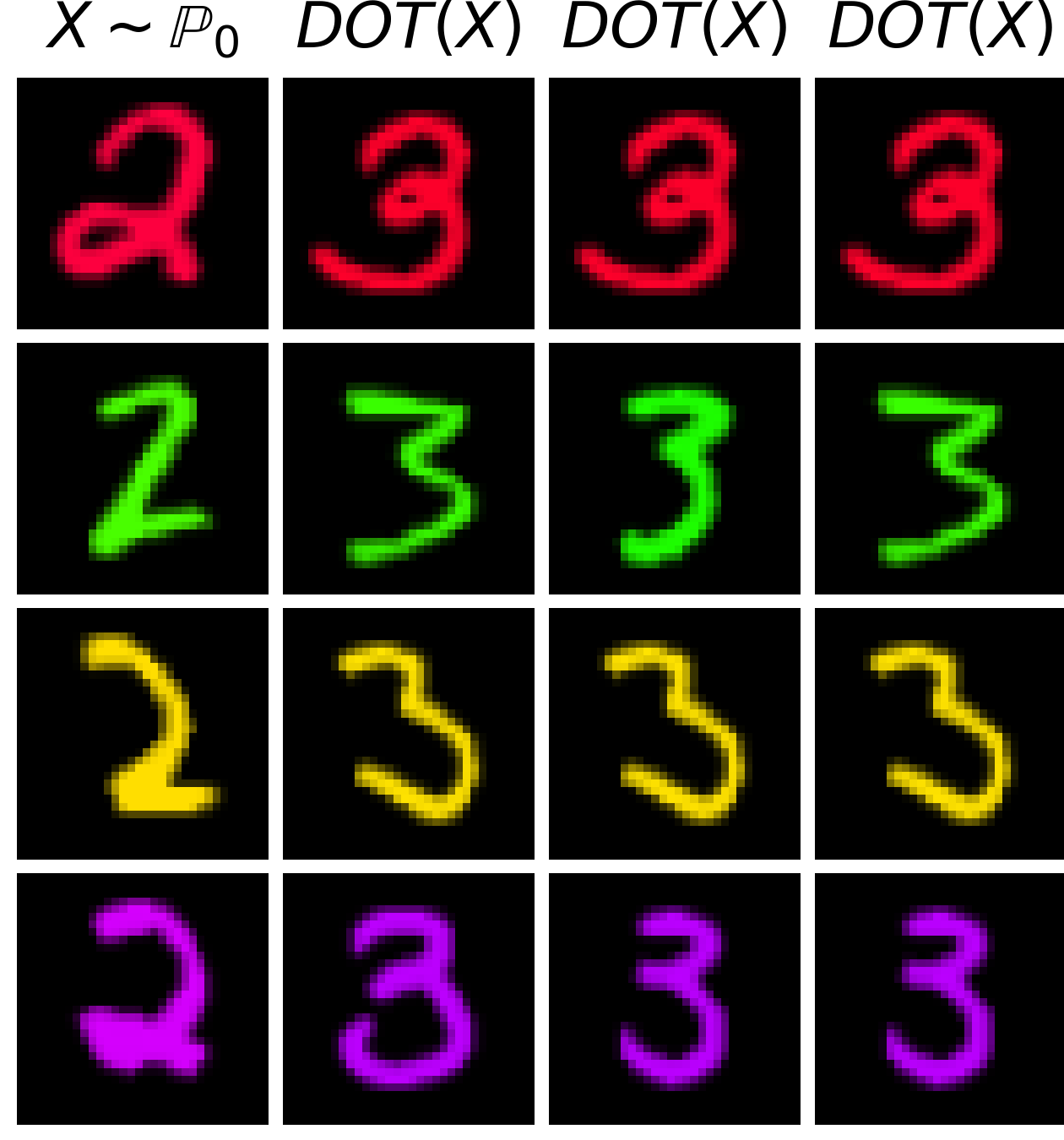}
\caption{\scriptsize \centering DOT samples, \newline $\epsilon=1$, FID:N/A}
\label{fig:mnist-1-dot}
\end{subfigure}
\begin{subfigure}[b]{0.19\linewidth}
\centering
\includegraphics[width=0.995\linewidth]{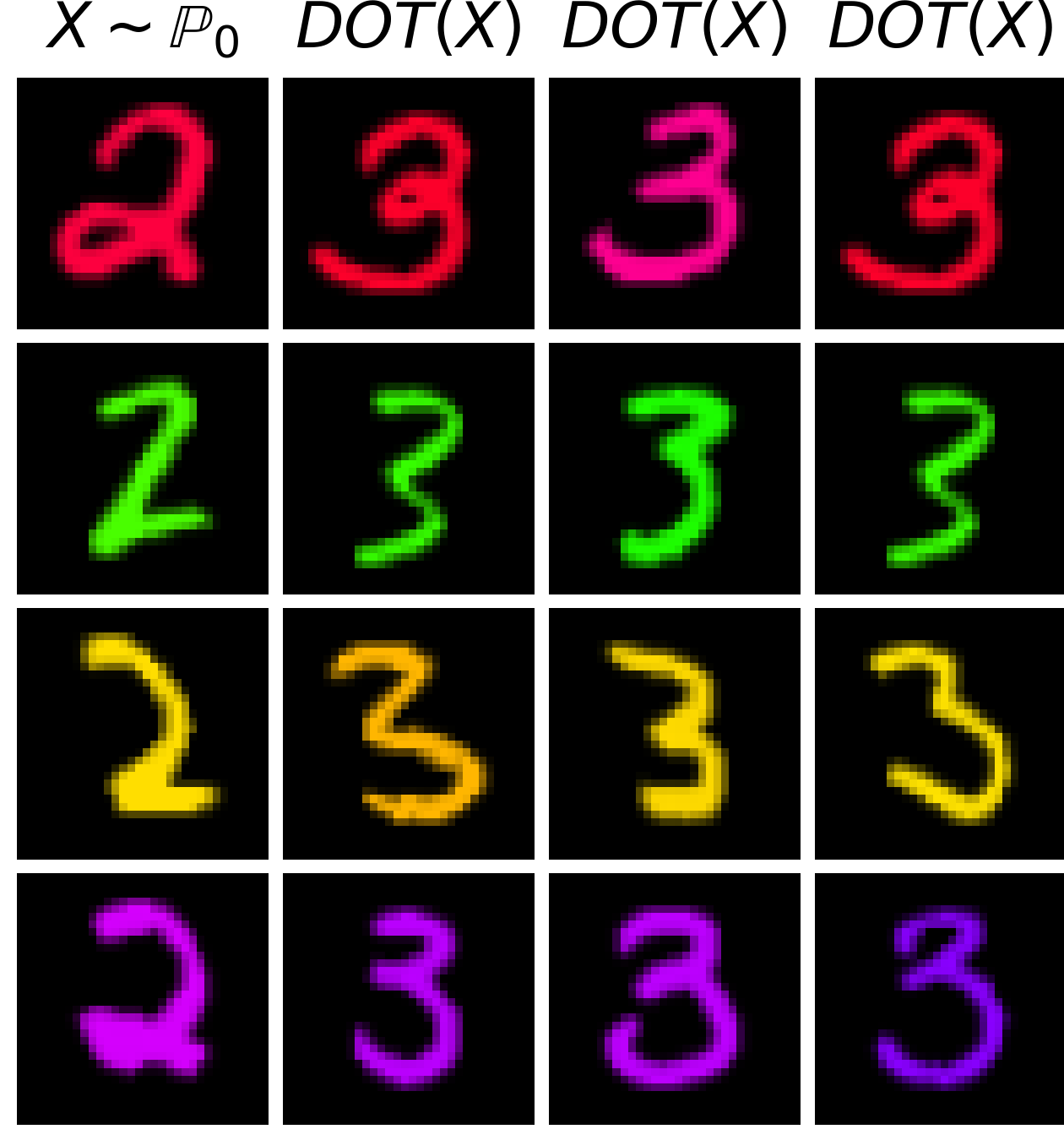}
\caption{\scriptsize \centering DOT samples, \newline $\epsilon=10$, FID:N/A}
\label{fig:mnist-10-dot}
\end{subfigure}
\hspace{1mm}
\vline
\hspace{1mm}
\begin{subfigure}[b]{0.19\linewidth}
\centering
\includegraphics[width=0.995\linewidth]{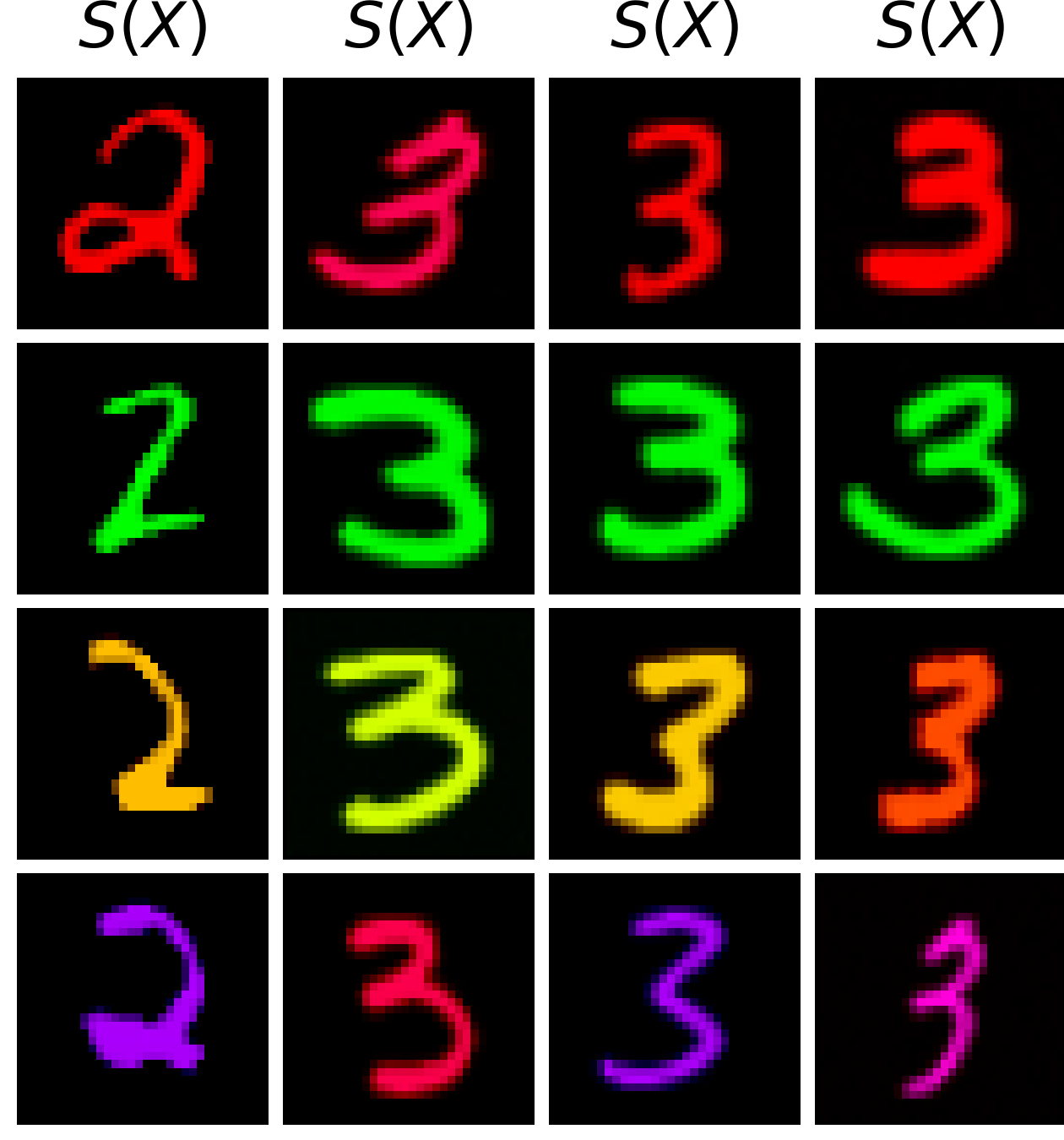}
\caption{\scriptsize \centering SCONES \cite{daniels2021score} samples \protect\linebreak $\epsilon=25$, FID:$14.73$}
\label{fig:mnist_scones_100_app}
\end{subfigure}
\begin{subfigure}[b]{0.19\linewidth}
\centering
\includegraphics[width=0.995\linewidth]{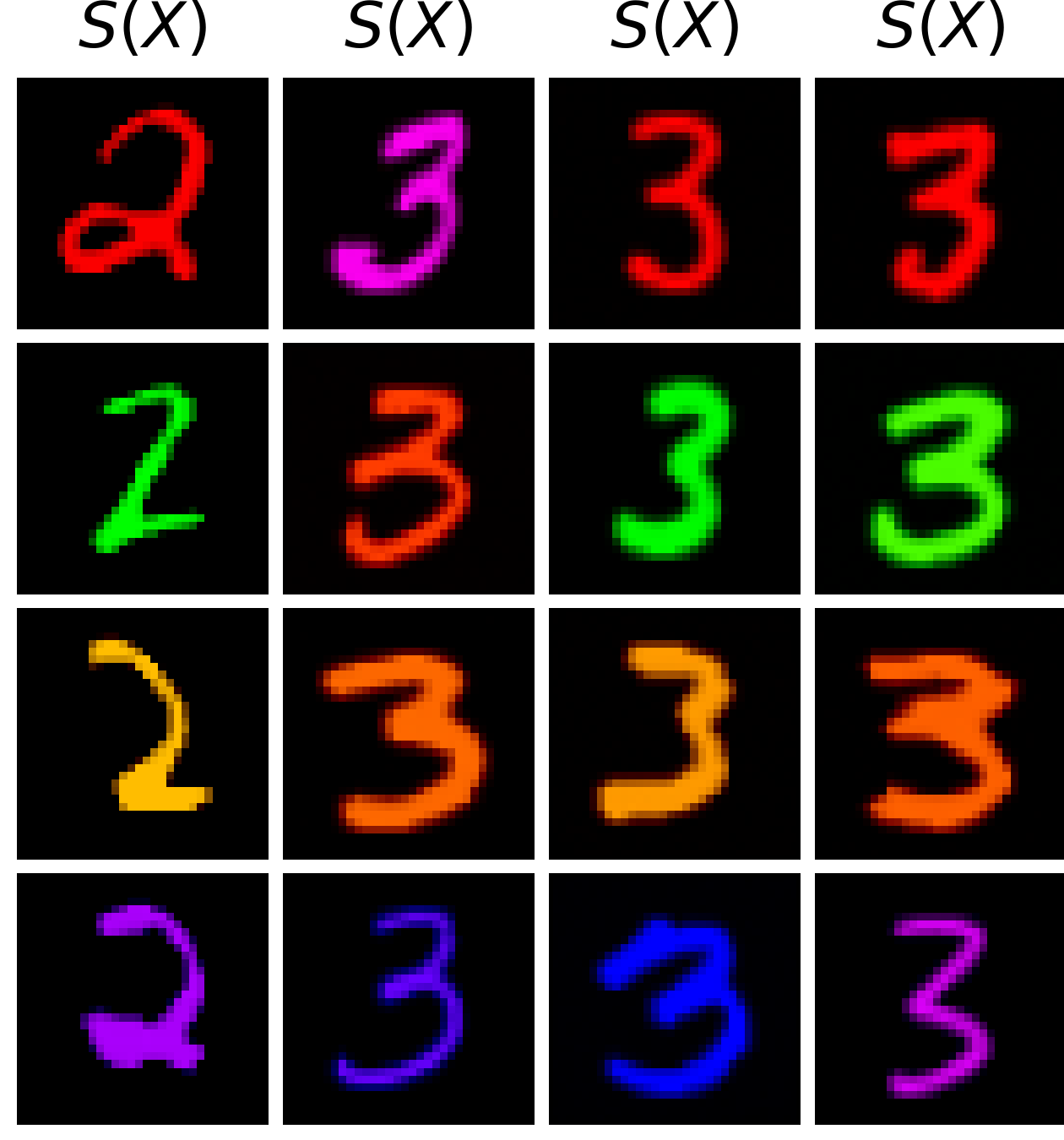}
\caption{\scriptsize \centering SCONES \cite{daniels2021score} samples \protect\linebreak $\epsilon=100$, FID:$14.22$}
\label{fig:mnist_scones_1250_app}
\end{subfigure}
\vspace{-5mm}
\caption{Samples of colored MNIST obtained by ENOT \textbf{(ours)} and DOT for different $\epsilon$.}
\label{fig:mnist}
\vspace{-6mm}
\end{figure*}

\begin{figure*}[t]
\vspace{-4mm}
\hfill\begin{subfigure}[b]{0.0855\linewidth}
\centering
\includegraphics[width=0.995\linewidth]{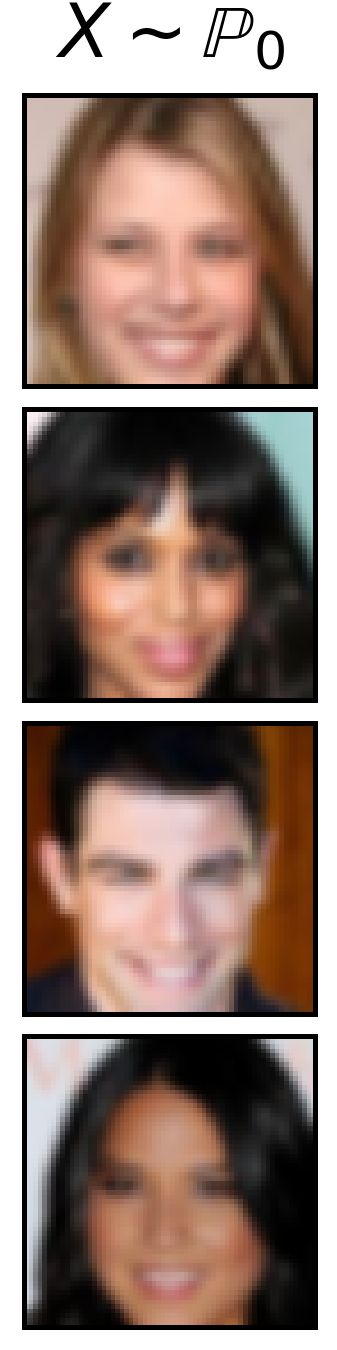}
\caption{\vspace{-1mm}\centering}
\vspace{-1mm}\label{fig:input-celeba}
\end{subfigure}
\vspace{-1mm}\hfill\begin{subfigure}[b]{0.0855\linewidth}
\centering
\includegraphics[width=0.995\linewidth]{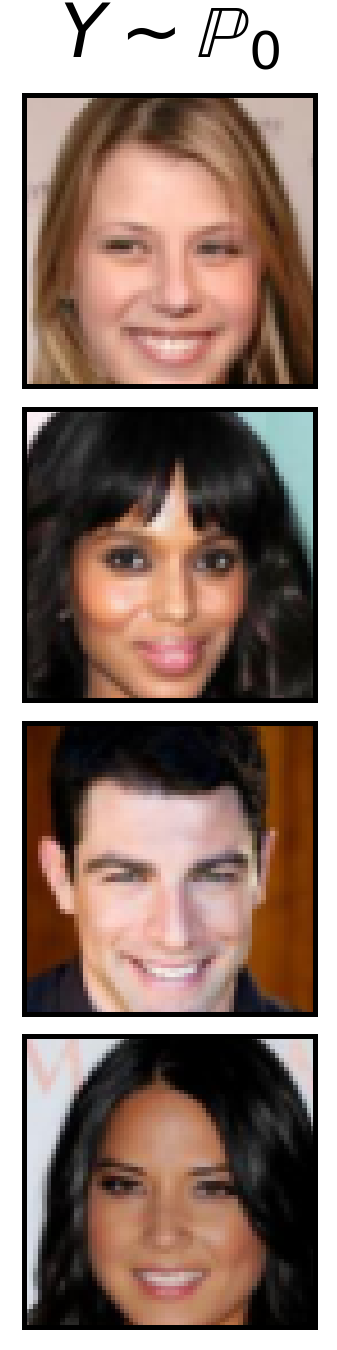}
\caption{\vspace{-1mm}\centering}
\vspace{-1mm}\label{fig:target-celeba}
\end{subfigure}
\hfill\begin{subfigure}[b]{0.32\linewidth}
\centering
\includegraphics[width=0.995\linewidth]{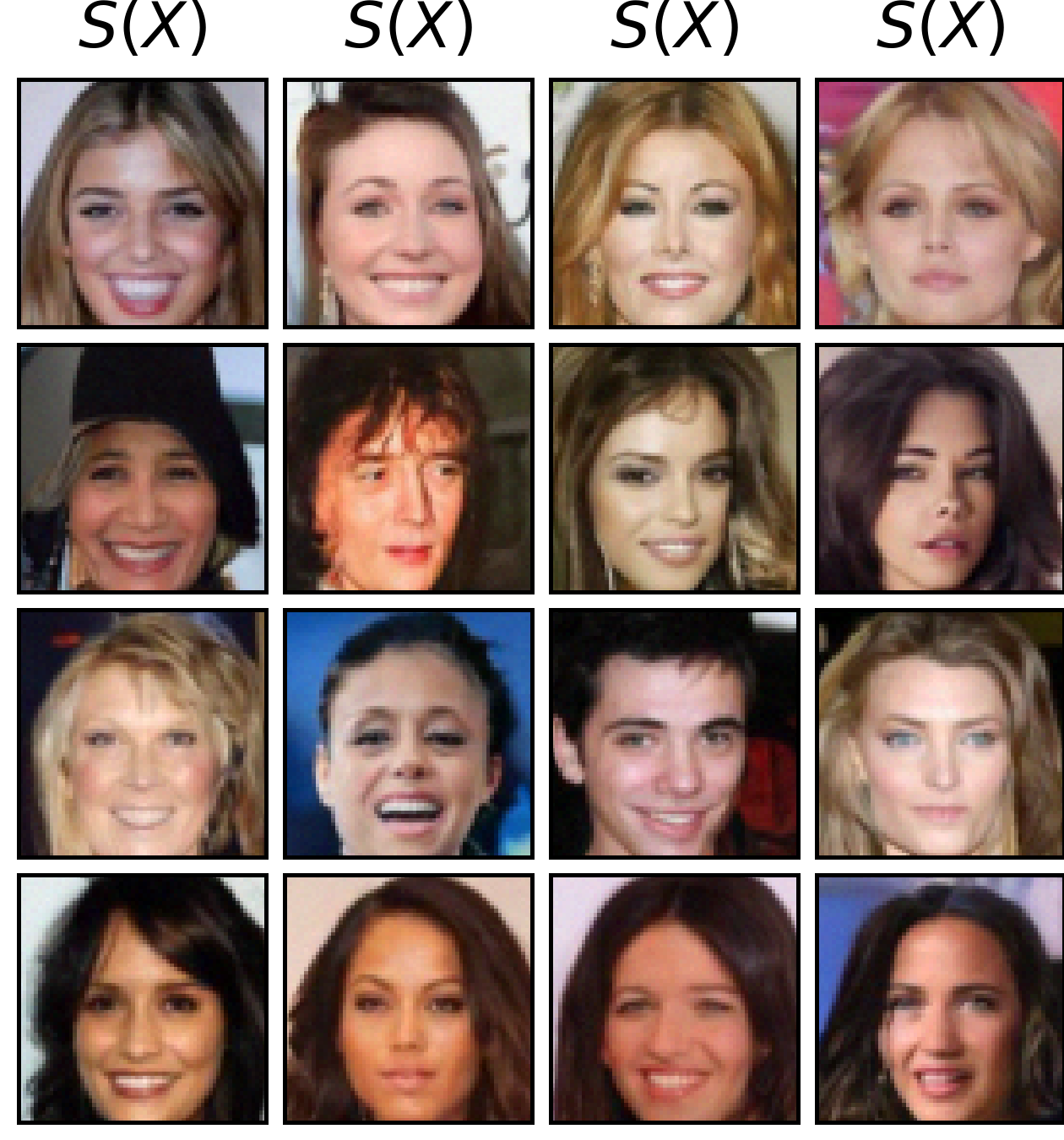}
\caption{\vspace{-1mm}\centering SCONES, $\epsilon=100$}
\vspace{-1mm}\label{fig:scones100}
\end{subfigure}
\hfill\begin{subfigure}[b]{0.0855\linewidth}
\centering
\includegraphics[width=0.995\linewidth]{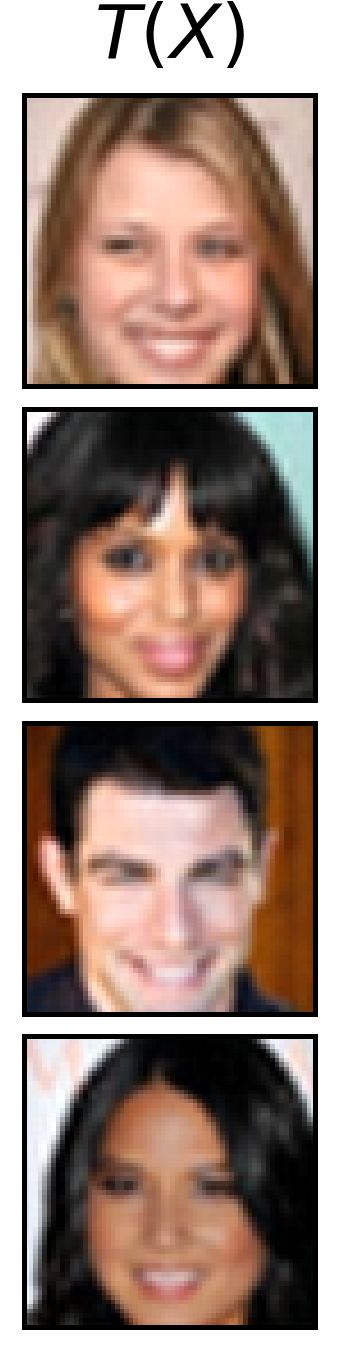}
\caption{\vspace{-1mm}\centering ICNN}\vspace{-1mm}\label{fig:ICNN}
\end{subfigure}
\hfill\begin{subfigure}[b]{0.315\linewidth}
\centering
\includegraphics[width=0.995\linewidth]{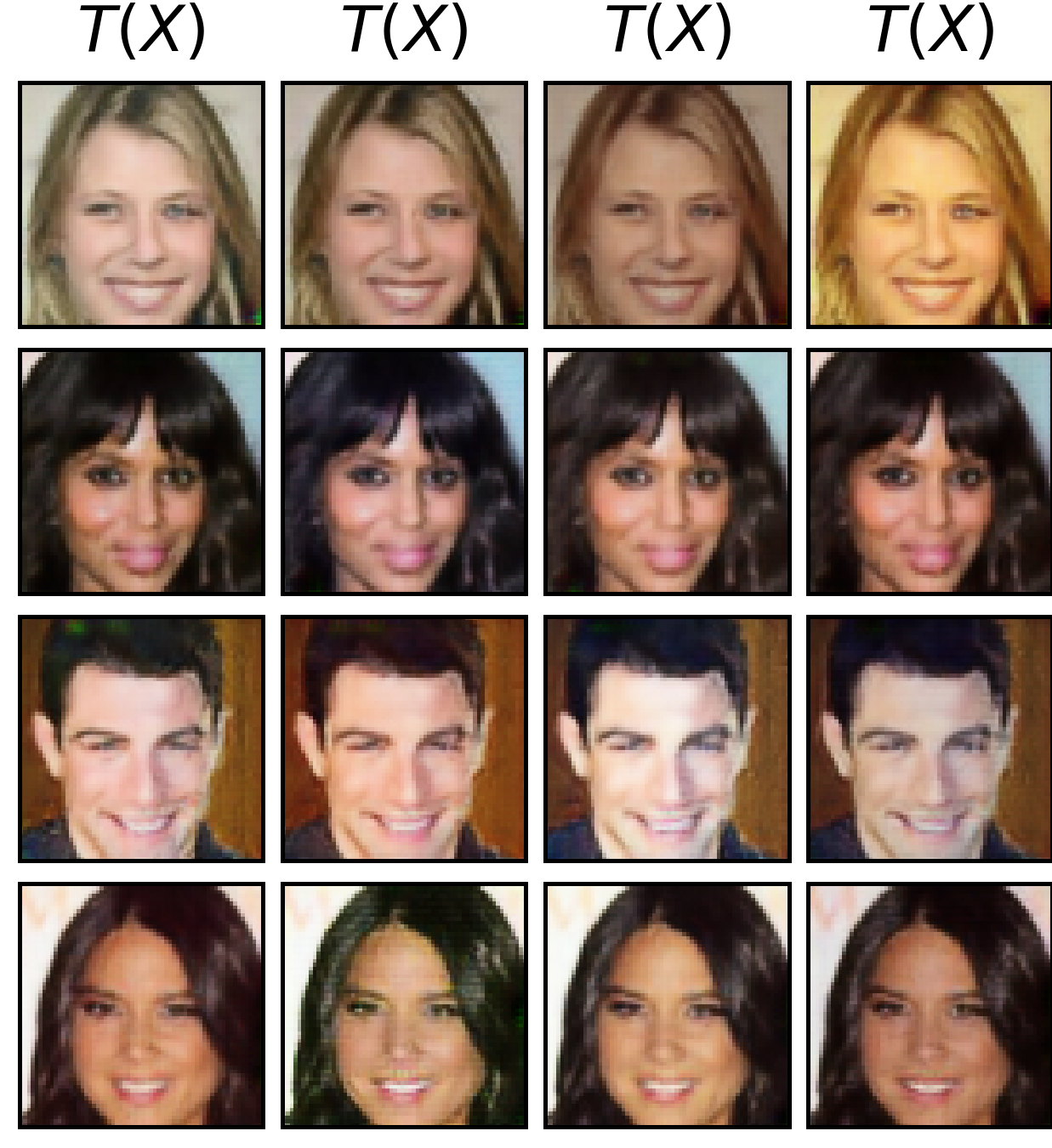}
\caption{\vspace{-1mm} \centering AugCycleGAN}\vspace{-1mm} \label{fig:CGAN} 
\end{subfigure}
\vskip\baselineskip
\hfill\begin{subfigure}[b]{0.32\linewidth}
\centering
\includegraphics[width=0.995\linewidth]{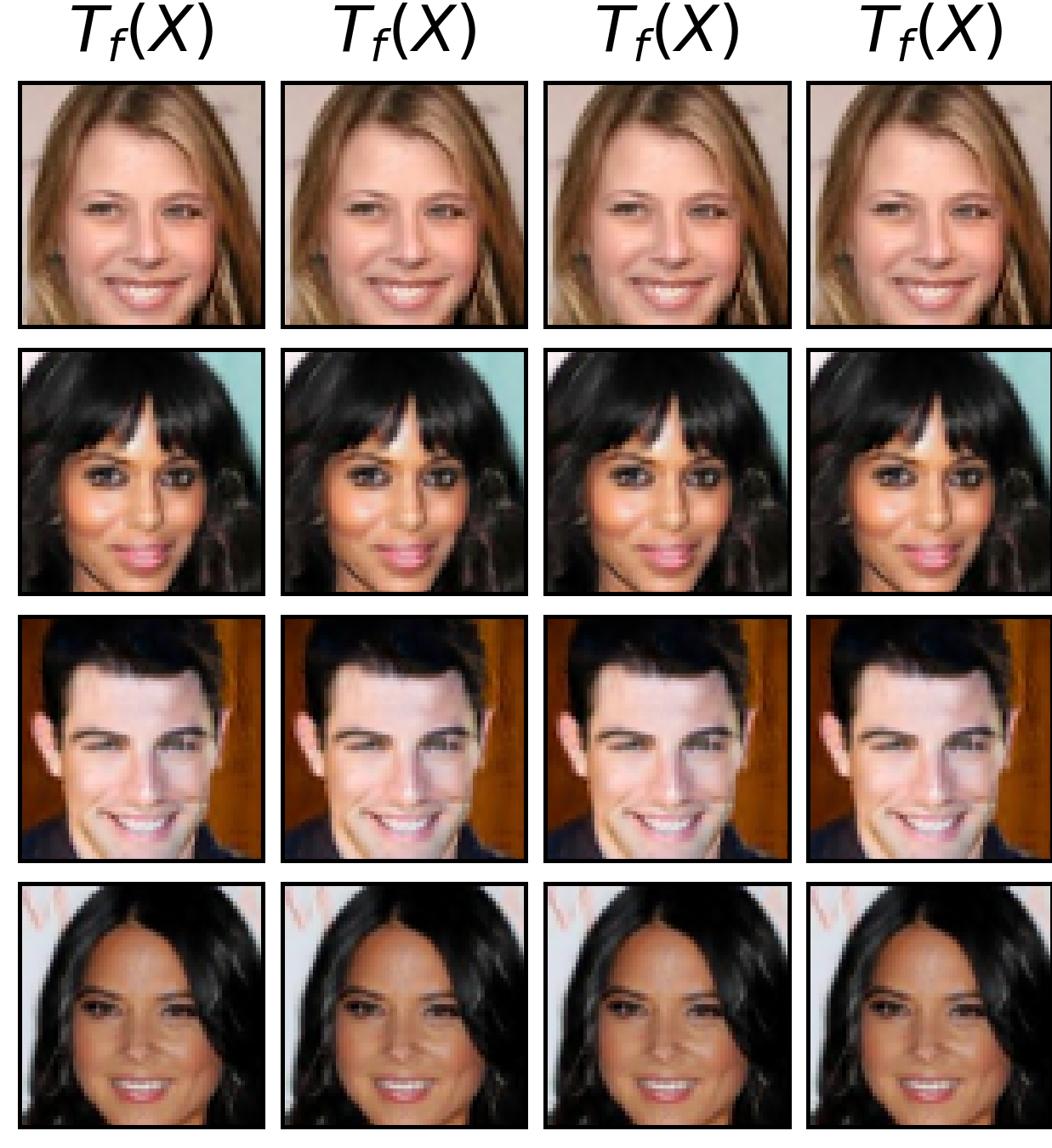}
\caption{\vspace{-1mm}\centering ENOT \textbf{(ours)}, $\epsilon=0$}
\vspace{-1mm}\label{fig:ours-celeba-eps-0}
\end{subfigure}
\vspace{-1mm}\hfill\begin{subfigure}[b]{0.32\linewidth}
\centering
\includegraphics[width=0.995\linewidth]{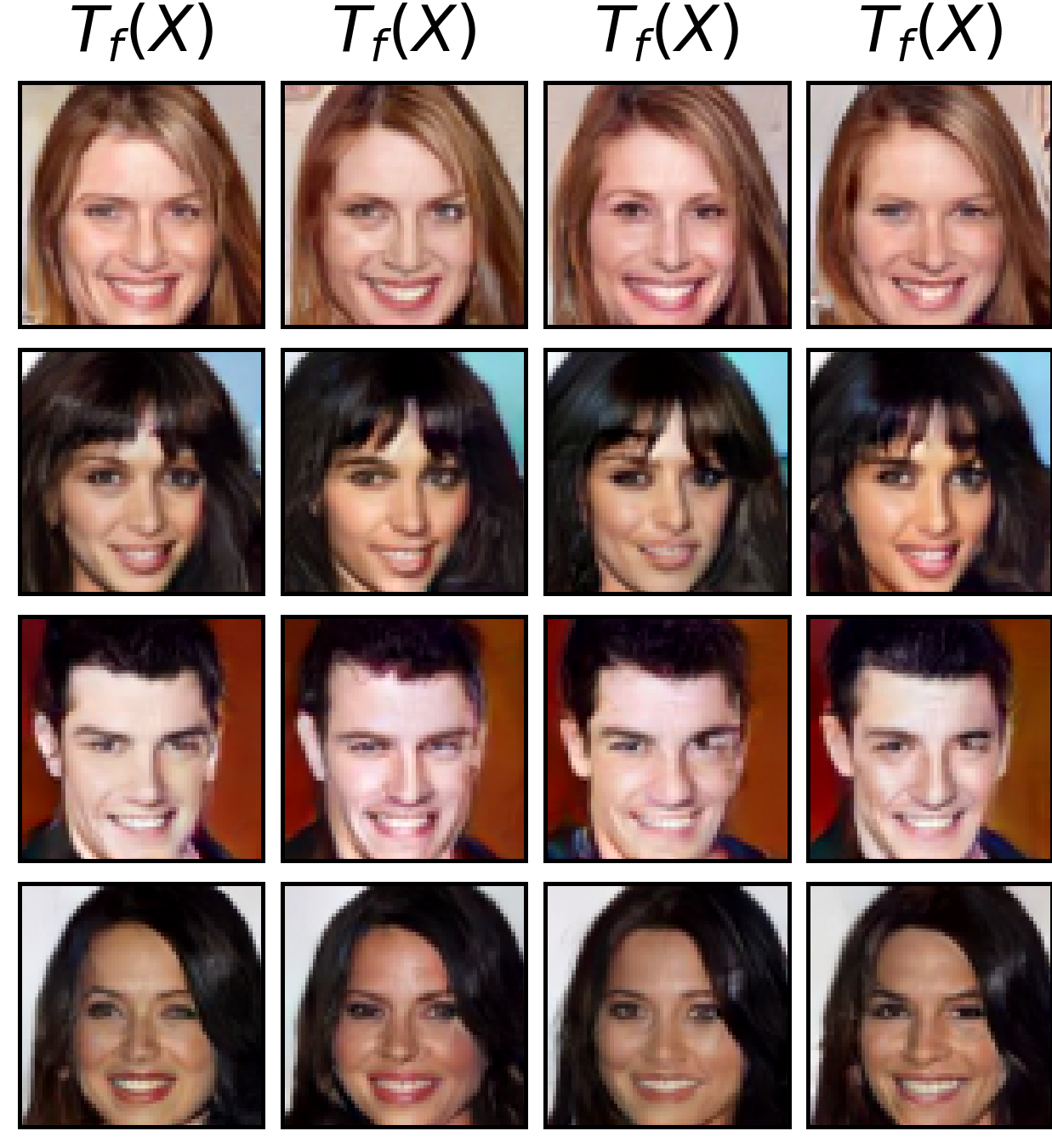}
\caption{\vspace{-1mm}\centering ENOT \textbf{(ours)}, $\epsilon=1$}
\vspace{-1mm}\label{fig:ours-celeba-eps-1}
\end{subfigure}
\hfill\begin{subfigure}[b]{0.32\linewidth}
\centering
\includegraphics[width=0.995\linewidth]{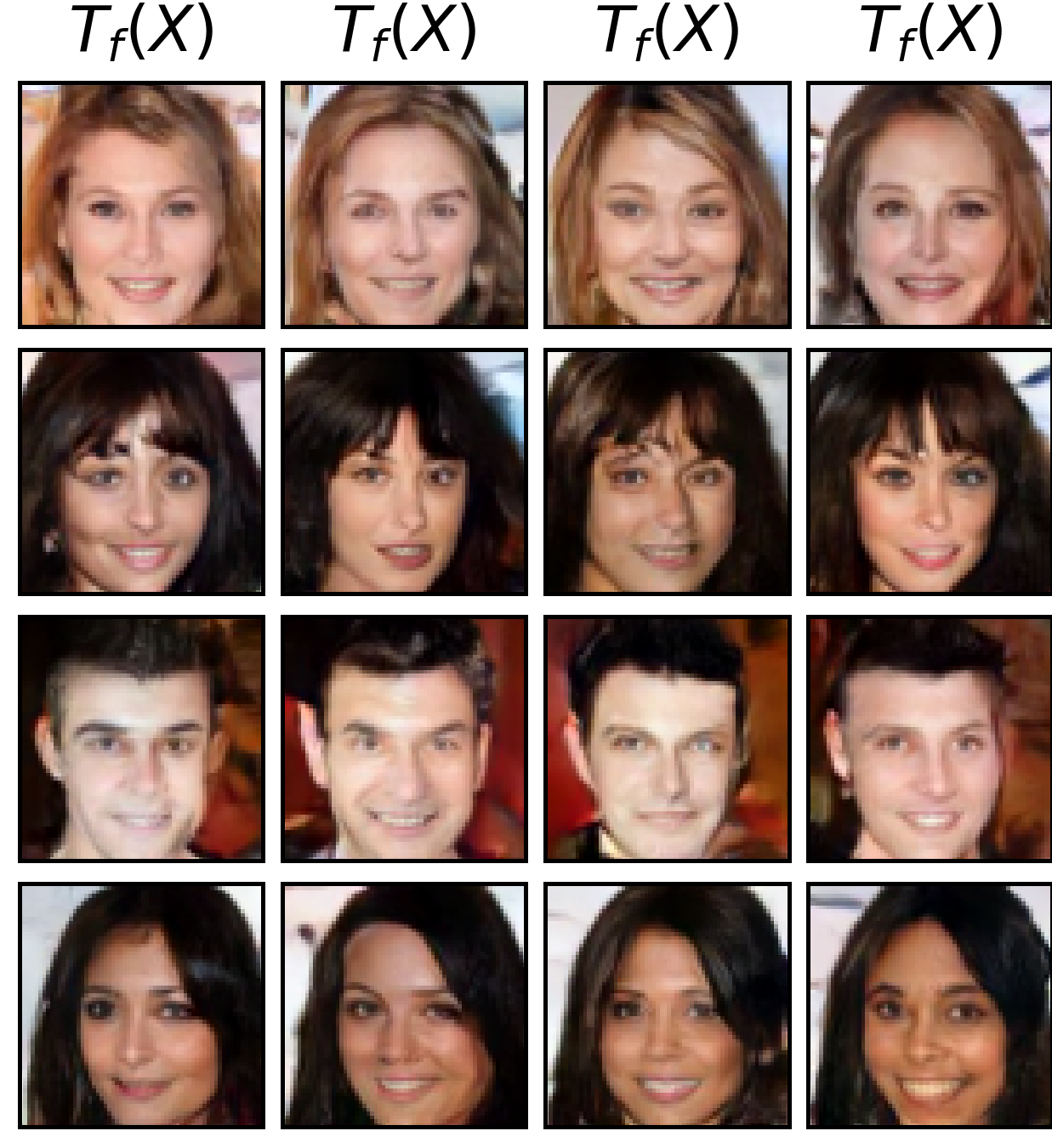}
\caption{\vspace{-1mm}\centering ENOT \textbf{(ours)}, $\epsilon=10$}
\vspace{-1mm}\label{fig:ours-celeba-eps-10}
\end{subfigure}
\vspace{2mm}\caption{\centering Faces produced by ENOT \textbf{(ours)} and SCONEs for various $\epsilon$.\protect\linebreak Figure \ref{fig:input-celeba} shows test degraded images (C0), \ref{fig:target-celeba} -- their original high-resolution counterparts (C1).}
\vspace{-2mm}
\end{figure*}

\begin{wraptable}{r}{0.45\textwidth}
\vspace{-7mm}
\begin{center}
\scriptsize
\begin{tabular}{ |c|c|c|c| } 
\hline
$\epsilon$ & \textbf{0} & \textbf{1} & \textbf{10} \\ 
\hline
\textbf{Colored MNIST} & 0 & $5.3\cdot10^{-3}$ & $2.0\cdot10^{-2}$ \\
\hline
\textbf{Celeba} & 0 & $3.4\cdot10^{-2}$ & $5.1\cdot10^{-2}$\\ 
\hline
\end{tabular}
\captionsetup{justification=centering}
\vspace{-1mm} \caption{\footnotesize LPIPS variability of ENOT samples.} \label{tbl:variability}
\end{center}
\vspace{-5mm}
\end{wraptable}

For the large-scale evaluation, we adopt the experimental setup of SCONES \cite{daniels2021score}. We consider the problem of unpaired image super-resolution for the $64\times 64$ aligned faces of CelebA dataset \cite{liu2015faceattributes}.
 
We do the \textit{unpaired} train-test split as follows: we split the dataset into 3 parts: 90k (train A1), 90k (train B1), 20k (test C1) samples. \color{black}For each part \color{black} we do $2\times$ bilinear downsample and then $2\times$ bilinear upsample to degrade images but keep the original size. As a result, we obtain degraded parts A0, B0, C0. For training in the unpaired setup, we use parts A0 (degraded faces, $\sP_0$) and B1 (clean faces, $\sP_1$). For testing, we use the hold-out part C0 (unseen samples) with C1 considered as the reference. 
 
We train our model with $\epsilon\!\!=\!\!0,1,10$ to and test how it restores C1 (Figure \ref{fig:input-celeba}) from C0 images (Figure \ref{fig:target-celeba}) and present the qualitative results in Figures \ref{fig:ours-celeba-eps-0}, \ref{fig:ours-celeba-eps-1}, \ref{fig:ours-celeba-eps-10}. \color{black}We provide examples of trajectories learned by ENOT in Figure~\ref{fig:celeba-trajectories}.\color{black}

\textbf{Metrics and baselines.} To quantify the results, as in \cite{daniels2021score}, we compute the FID score \cite{heusel2017gans} between the sets of mapped C0 images and C1 images (Table \ref{tbl:celeba-fid}). \color{black} \color{black} The FID of ENOT is better than FID values of the other methods, but increases with $\epsilon$ probably due to the increasing variance of gradients during training \color{black}. As in \wasyparagraph\ref{sec-toy-exp} and \wasyparagraph\ref{sec-colored-mnist}, the diversity of samples increases with $\epsilon$. Our method works with small values of $\epsilon$ and provides \textbf{reasonable} amount of diversity in the mapped samples which grows with $\epsilon$ (Table~\ref{tbl:variability}). As the baseline among other methods for EOT we consider only SCONES, as it is the only EOT/DSB algorithm which has been applied to \textit{data}$\rightarrow$\textit{data} task at $64\times 64$ resolution. At the same time, we emphasize that SCONES \textbf{is not applicable} for small $\epsilon$ due to instabilities, see \cite[\wasyparagraph 5.1]{daniels2021score}. This makes it \textbf{impractical}, as due to high $\epsilon$, its produces up-scaled images (Figures \ref{fig:scones100}) are nearly random and do not reflect the attributes of the input images (Figure \ref{fig:input-celeba}). We do not provide results for DiffSB \cite{de2021diffusion} since it already performs bad on Colored MNIST (\wasyparagraph\ref{sec-colored-mnist}) and the authors also did not consider any image-to-image apart of grayscale 28x28 images. 

For completeness, we present results on this setup for other methods, \underline{which do not solve EOT}: ICNN-based OT \cite{makkuva2020optimal} and AugCycleGAN \cite{almahairi2018augmented}. ICNN (\ref{fig:ICNN}) learns a deterministic map. AugCycleGAN (\ref{fig:CGAN}) learns a stochastic map, but the generated samples differ only by brightness. 

\begin{table}[t]
\vspace{0mm}
\begin{center} 
\scriptsize
\begin{tabular}{ |c|c|c|c|c|c|c|c| }
\hline
\textbf{Method} & \textbf{ENOT}, $\epsilon=0$ & \textbf{ENOT}, $\epsilon=1$ & \textbf{ENOT}, $\epsilon=10$ & SCONES \cite{daniels2021score}, $\epsilon=100$ & AugCycleGAN \cite{almahairi2018augmented} & ICNN \cite{makkuva2020optimal} \\ 
\hline
\textbf{FID} & $\mathbf{3.78}$ & $\mathbf{7.63}$ & $\mathbf{14.8}$ & $18.88$ & $15.2$ & $22.2$ \\
\hline
\end{tabular}
\vspace{2mm}
\captionsetup{justification=centering}
 \caption{Test FID values of various methods in unpaired super-resolution of faces experiment.} \label{tbl:celeba-fid}
\end{center}
\vspace{-10mm}
\end{table}

\color{black} 
\vspace{-3mm}
\section{Discussion}
\vspace{-3mm}
\textbf{Potential impact.}
There is a lack of scalable algorithms for learning continuous entropic OT plans which may be used in \textit{data}$\rightarrow$\textit{data} practical tasks requiring control of the diversity of generated samples. We hope that our results provide a new direction for research towards establishing scalable and efficient methods for entropic OT by using its connection with SB.

\color{black}
\textbf{Potential social impact.} Like other popular methods for generating images, our method can be used to simplify the work of designers with digital images and create new products based on it. At the same time, our method may be used for creating fake images just like the other generative models.

\color{black}

\textbf{Limitations.} To simulate the trajectories following SDE \eqref{eq:T_f-diff}, we use the Euler-Maruyama scheme. It is straightforward but may be imprecise when the number of steps is small or the noise variance $\epsilon$ is high. As a result, for large $\epsilon$, our Algorithm \ref{neural-schrodinger-bridge} may be computationally heavy due to the necessity to backpropagate through a large computational graph obtained via the simulation. \color{black}Employing time and memory efficient SDE integration schemes is a promising avenue for the future work.
\color{black}

\textsc{Acknowledgements.} This work was partially supported by Skoltech NGP program (Skoltech-MIT joint project).

\bibliographystyle{plain}
\bibliography{references}

\appendix
\onecolumn

\section{Extended background: KL divergence with the Wiener process plan}\label{sec:app-link-between-ot-and-sb}
This section illustrates that \eqref{eq:kl-between-process-joints} holds. Consider a process $T \in \mathcal{F}(\sP_0)$, i.e., $T$ is a probability distribution on $\Omega$ with the marginal $\mathbb{P}_0$ at $t = 0$.

Let  $W^{\epsilon}$ be the Wiener process with variance $\eps$ starting at $\sP_0$, i.e., it satisfies $dX_t = \sqrt{\epsilon}dW_t$ with $X_0 \sim \sP_0$. Hence, $\pi^{W^{\epsilon}}(y|x)$ is the normal distribution ${\frac{d\pi^{W^{\epsilon}}(y|x)}{dy} = \mathcal{N}(y|x, \epsilon I)}$. Then $\text{KL}(\pi^{T} || \pi^{W^{\epsilon}})$ between joint distributions at times $t=0$  and $t=1$ of these processes is given by:
\begin{gather}
\text{KL}(\pi^{T} || \pi^{W^{\epsilon}}) = - \int_{\mathcal{X} \times \mathcal{Y}}  \log\frac{d\pi^{W^{\epsilon}}(x,y)}{d[x,y]}d\pi^{T}(x,y) \underbrace{+ \int_{\mathcal{X} \times \mathcal{Y}} \log\frac{d\pi^{T}(x,y)}{d[x,y]}d\pi^{T}(x,y)}_{=-H(\pi^{T})}, \label{eq:kl-decompose}
\end{gather}
where $\frac{d\pi(x, y)}{d[x,y]}$ denotes the joint density of distribution $\pi$. We derive
\begin{eqnarray}
     - \int_{\mathcal{X} \times \mathcal{Y}}  \log\frac{d\pi^{W^{\epsilon}}(x,y)}{d[x,y]}d\pi^{T}(x,y) =
    - \int_{\mathcal{X} \times \mathcal{Y}} \log\frac{d\pi^{W^{\epsilon}}(y|x)}{dy}\frac{d\pi^{W^{\epsilon}}(x)}{dx}d\pi^{T}(x,y) = 
    \nonumber
    \\
    - \int_{\mathcal{X} \times \mathcal{Y}} \log\frac{d\pi^{W^{\epsilon}}(y|x)}{dy}d\pi^{T}(x,y) - \int_{\mathcal{X}} \int_{\mathcal{Y}} \log\frac{d\pi^{W^{\epsilon}}(x)}{dx} d\pi^{T}(y|x)\overbrace{d\sP_0(x)}^{d\pi^{T}_0(x)} =
    \nonumber
    \\
    - \int_{\mathcal{X} \times \mathcal{Y}} \log\frac{d\pi^{W^{\epsilon}}(y|x)}{dy}d\pi^{T}(x,y) - \int_{\mathcal{X}}\log\frac{d\pi^{W^{\epsilon}}(x)}{dx} \big[ \int_{\mathcal{Y}} 1 d\pi^{T}(y|x)\big]d\sP_0(x) = 
    \nonumber
    \\
    - \int_{\mathcal{X}}\int_{\mathcal{Y}} \log\frac{d\pi^{W^{\epsilon}}(y|x)}{dy}d\pi^{T}(x, y) - \int_{\mathcal{X}} \log\frac{d\pi^{W^{\epsilon}}(x)}{dx}d\sP_0(x) = 
    \nonumber
    \\
    - \int_{\mathcal{X}}\int_{\mathcal{Y}} \log\frac{d\pi^{W^{\epsilon}}(y|x)}{dy}d\pi^{T}(x, y) - \int_{\mathcal{X}} \log\frac{d\sP_0(x)}{dx}d\sP_0(x) = 
    \nonumber
    \\
    - \int_{\mathcal{X} \times \mathcal{Y}} \log\frac{d\pi^{W^{\epsilon}}(y|x)}{dy}d\pi^{T}(x,y) + H(\sP_0) = 
    \nonumber
    \\
        - \int_{\mathcal{X} \times \mathcal{Y}} \log( (2\pi\epsilon)^{\frac{-D}{2}}\exp(-\frac{||x-y||^2}{2\epsilon})) d\pi^{T}(x,y) + H(\sP_0) = 
    \nonumber\\
    + \frac{D}{2}\log(2\pi \epsilon) + \int_{\mathcal{X} \times \mathcal{Y}} \!\! \frac{||x-y||^2}{2\epsilon} d\pi^{T}(x,y) + H(\sP_0).
    \nonumber
\end{eqnarray}
After substituting this result into \eqref{eq:kl-decompose}, one obtains:
\begin{gather}
\text{KL}(\pi^{T} || \pi^{W^{\epsilon}}) =  \int_{\mathcal{X} \times \mathcal{Y}} \!\! \frac{||x-y||^2}{2\epsilon} d\pi^{T}(x,y) - H(\pi^{T}) + \underbrace{\frac{D}{2}\log(2\pi \epsilon) + H(\sP_0)}_{=C \text{ in } \eqref{eq:kl-between-process-joints}} \label{eq:full-kl-between-plans}.
\end{gather}

\section{Proofs}\label{sec:proofs}
In this section, we provide the proof for our main theoretical results (Theorems \ref{eot-as-dynamic-sb} and \ref{thm:error-bounds-theorem}). The proofs require several auxiliary results which we formulate and prove in \wasyparagraph\ref{app:ot-relax} and \wasyparagraph\ref{app:equvivalence-of-maximin-problems}.

In \wasyparagraph\ref{app:ot-relax}, we show that entropic OT can be reformulated as a maximin problem. This is a technical intermediate result needed to derive our main maximin reformulation of SB (Theorem \ref{eot-as-dynamic-sb}). More precisely, in \wasyparagraph\ref{app:equvivalence-of-maximin-problems}, we show that these maximin problems for entropic OT and SB are actually equivalent. By using this observation and related facts, in \wasyparagraph{\ref{app:main-text-proofs}}, we prove our Theorems \ref{eot-as-dynamic-sb} and \ref{thm:error-bounds-theorem}.
\subsection{Relaxation of entropic OT}\label{app:ot-relax}
To begin with, we recall some facts regarding EOT and SB. Recall the definition of EOT \eqref{entropy-reg-ot}:
\begin{gather}
    \inf_{\pi \in \Pi(\sP_0, \sP_1)} \int_{\mathcal{X} \times \mathcal{Y}}\hspace{-2mm} \frac{||x-y||^2}{2} d\pi(x,y) - \epsilon H(\pi).
    \label{eot-recall}
\end{gather}
Henceforth, we assume that $\mathbb{P}_0$ and $\mathbb{P}_{1}$ are \textit{absolutely continuous}. The situation when $\mathbb{P}_0$ or $\mathbb{P}_{1}$ is not absolutely continuous is not of any practical interest: there is no $\pi\in\Pi(\mathbb{P}_0,\mathbb{P}_1)$ for which the differential entropy $H(\pi)$ is finite which means that \eqref{eot-recall} equals to $+\infty$ for every $\pi\in\Pi(\mathbb{P}_0,\mathbb{P}_1)$, i.e., every plan is optimal. In turn, when $\mathbb{P}_0$ and $\mathbb{P}_{1}$ are absolutely continuous, the OT plan is unique thanks to the strict convexity of entropy (on the set of absolutely continuous plans).

Recall equation \eqref{eq:full-kl-between-plans} for $\text{KL}(\pi||\pi^{W^{\epsilon}})$:
\begin{gather}
    \text{KL}(\pi||\pi^{W^{\epsilon}}) = \int_{\mathcal{X} \times \mathcal{Y}} \frac{||x-y||^2}{2\epsilon} d\pi(x,y)  - H(\pi) + C.
    \label{kl-eot-recall}
\end{gather}
We again note that
\begin{gather}
    \inf_{\pi \in \Pi(\sP_0, \sP_1)} \text{KL}(\pi||\pi^{W^{\epsilon}}) = \frac{1}{\epsilon}\inf_{\pi \in \Pi(\sP_0, \sP_1)} \Big\{ \int_{\mathcal{X} \times \mathcal{Y}}\hspace{-2mm} \frac{||x-y||^2}{2} d\pi(x,y) - \epsilon H(\pi) \Big\} + C, \nonumber
\end{gather}
i.e., problems \eqref{eot-recall} and \eqref{kl-eot-recall} can be viewed as equivalent as their minimizers are the same. For convenience, we proceed with $\inf_{\pi \in \Pi(\sP_0, \sP_1)} \text{KL}(\pi||\pi^{W^{\epsilon}})$ and denote its optimal value by $\mathcal{L}^*$, i.e.,
$$\mathcal{L}^{*}\defeq\inf_{\pi \in \Pi(\sP_0, \sP_1)} \text{KL}(\pi||\pi^{W^{\epsilon}}).$$
For a given $\beta \in \mathcal{C}_{b,2}(\mathcal{Y})$, we define an auxiliary joint distribution $d\pi^{\beta}(x,y) = d\pi^{\beta}(y|x)d\sP_0(x)$, where $d\pi^{\beta}(y|x)$ is given by 
$$d\pi^{\beta}(y|x) = \frac{1}{C_{\beta}^x}\exp(\beta(y))d\pi^{W^{\epsilon}}(y|x),$$
where $C_{\beta}^x(x) \defeq \int_{\mathcal{Y}} \exp(\beta(y)) d\pi^{W^{\epsilon}}(y|x)$. Note that $C_{\beta}^x < \infty$ since $\beta\in\mathcal{C}_{b,2}(\mathcal{Y})$ is upper bounded.

Before going further, we need to introduce several technical auxiliary results.
\begin{proposition}[]\label{thm:inner-problem-eot} For $\nu\in\mathcal{P}_{2}(\mathcal{Y})$ and $x\in\mathcal{X}$ it holds that
\begin{gather}
\textnormal{KL}(\nu || \pi^{W^{\epsilon}}(\cdot|x)) - \int_{\mathcal{Y}} \beta(y) d\nu(y) =  
\textnormal{KL}(\nu||\pi^{\beta}(\cdot|x)) - \log C_{\beta}^x. 
\label{inner-through-kl-cond-plan}
\end{gather}
\end{proposition}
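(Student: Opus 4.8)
The identity is an instance of the standard change-of-measure (Gibbs variational) computation, so the plan is to expand the Radon–Nikodym derivative of $\nu$ with respect to $\pi^{\beta}(\cdot|x)$ via the chain rule and integrate. First I would dispose of the degenerate case: since $\frac{d\pi^{\beta}(y|x)}{d\pi^{W^{\epsilon}}(y|x)}=\frac{1}{C_{\beta}^{x}}\exp(\beta(y))$ is strictly positive and finite (it is bounded above by $e^{w}/C_{\beta}^{x}$ because $\beta\leq w$, and $C_{\beta}^{x}\in(0,\infty)$), the measures $\pi^{\beta}(\cdot|x)$ and $\pi^{W^{\epsilon}}(\cdot|x)$ are mutually absolutely continuous. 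Hence if $\nu\not\ll\pi^{W^{\epsilon}}(\cdot|x)$, then also $\nu\not\ll\pi^{\beta}(\cdot|x)$, both KL terms are $+\infty$, and (since $\int_{\mathcal Y}\beta\,d\nu$ and $\log C_{\beta}^{x}$ are finite, see below) the identity holds trivially as $+\infty=+\infty$.

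Next, assume $\nu\ll\pi^{W^{\epsilon}}(\cdot|x)$. Then $\nu\ll\pi^{\beta}(\cdot|x)$ as well, and by the chain rule for Radon–Nikodym derivatives,
\begin{equation*}
\log\frac{d\nu}{d\pi^{\beta}(\cdot|x)}(y)=\log\frac{d\nu}{d\pi^{W^{\epsilon}}(\cdot|x)}(y)-\beta(y)+\log C_{\beta}^{x}
\end{equation*}
for $\nu$-a.e.\ $y$. Before integrating against $\nu$, I would check that every term is $\nu$-integrable: $\log C_{\beta}^{x}$ is a constant; $\int_{\mathcal Y}\beta\,d\nu$ is finite because the bound $u\|y\|^{2}+v\leq\beta(y)\leq w$ from the definition of $\mathcal{C}_{b,2}(\mathcal Y)$ together with $\nu\in\mathcal{P}_{2}(\mathcal Y)$ (finite second moment) pins $\int\beta\,d\nu$ between two finite numbers; and $\int\log\frac{d\nu}{d\pi^{W^{\epsilon}}(\cdot|x)}d\nu=\mathrm{KL}(\nu\|\pi^{W^{\epsilon}}(\cdot|x))$ is either finite or $+\infty$ but its negative part is controlled, so the integral is well defined in $(-\infty,+\infty]$. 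Integrating the displayed equation against $d\nu(y)$ then gives
\begin{equation*}
\mathrm{KL}\big(\nu\,\|\,\pi^{\beta}(\cdot|x)\big)=\mathrm{KL}\big(\nu\,\|\,\pi^{W^{\epsilon}}(\cdot|x)\big)-\int_{\mathcal Y}\beta(y)\,d\nu(y)+\log C_{\beta}^{x},
\end{equation*}
and rearranging yields exactly \eqref{inner-through-kl-cond-plan}.

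The only real work is the measure-theoretic bookkeeping in the paragraph above — verifying mutual absolute continuity of $\pi^{\beta}(\cdot|x)$ and $\pi^{W^{\epsilon}}(\cdot|x)$, confirming $0<C_{\beta}^{x}<\infty$, and justifying that the integral may be split term by term (which needs the integrability of $\int\beta\,d\nu$, hence the growth bound on $\beta$ and the second-moment hypothesis on $\nu$). I expect this to be the main, though still routine, obstacle; once integrability is in hand, the algebraic rearrangement is immediate. I would also remark that this proposition is precisely the per-$x$ version of the dual/relaxation formula for $\beta^{C}(x)$ appearing in the statement of the EOT dual, since minimizing the left-hand side over $\nu$ corresponds to minimizing $\mathrm{KL}(\nu\|\pi^{\beta}(\cdot|x))$, attained at $\nu=\pi^{\beta}(\cdot|x)$, with optimal value $-\log C_{\beta}^{x}$.
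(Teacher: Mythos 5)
Your proposal is correct and follows essentially the same route as the paper: both proofs rest on the single change-of-measure identity $\log\frac{d\nu}{d\pi^{\beta}(\cdot|x)}=\log\frac{d\nu}{d\pi^{W^{\epsilon}}(\cdot|x)}-\beta+\log C_{\beta}^{x}$, which the paper obtains by multiplying and dividing by $C_{\beta}^{x}$ inside the KL integrand and recognizing the definition of $\pi^{\beta}(\cdot|x)$. Your extra bookkeeping (mutual absolute continuity, $0<C_{\beta}^{x}<\infty$, integrability of $\int\beta\,d\nu$ from the $\mathcal{C}_{b,2}$ bounds and the second-moment assumption) is a sound and slightly more careful justification of steps the paper performs implicitly.
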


\begin{proof}[Proof of Proposition \ref{thm:inner-problem-eot}]
We derive
\begin{eqnarray}
    \textnormal{KL}(\nu || \pi^{W^{\epsilon}}(\cdot|x)) - \int_{\mathcal{Y}} \beta(y) d\nu(y) =
    \int_{\mathcal{Y}} \log \frac{d\nu(y)}{d\pi^{W^{\epsilon}}(y|x)}d\nu(y) - \int_{\mathcal{Y}} \beta(y) d\nu(y)  = 
    \nonumber
    \\
    \int_{\mathcal{Y}} \log \frac{d\nu(y)}{\exp(\beta(y))d\pi^{W^{\epsilon}}(y|x)}d\nu(y) =
    \int_{\mathcal{Y}} \log \frac{C_{\beta}^xd\nu(y)}{C_{\beta}^x\exp(\beta(y))d\pi^{W^{\epsilon}}(y|x)}d\nu(y) = 
    \nonumber
    \\
    \int_{\mathcal{Y}} \log \frac{d\nu(y)}{d\pi^{\beta}(y|x)}d\nu(y) - \log C_{\beta}^x = \text{KL}(\nu|| \pi^{\beta}(\cdot|x)) - \log C_{\beta}^x.
    \nonumber
\end{eqnarray}
\end{proof}

\begin{lemma}[]\label{lem:inner-KL} For $\pi\in\Pi(\mathbb{P}_{0})$, i.e., probability distributions $\pi\in\mathcal{P}_{2}(\mathcal{X}\times\mathcal{Y})$ whose projection to $\mathcal{X}$ equals $\mathbb{P}_0$, we have
\begin{gather}
\textnormal{KL}(\pi||\pi^{W^{\epsilon}}) - \int_{\mathcal{Y}}\beta(y)d\pi(y) = \textnormal{KL}(\pi||\pi^{\beta}) - \int_{\mathcal{X}}\log C_{\beta}^x d\sP_0(x).
\label{eq:kl-plan}
\end{gather}
\end{lemma}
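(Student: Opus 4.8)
The plan is to reduce \eqref{eq:kl-plan} to its conditional version, Proposition~\ref{thm:inner-problem-eot}, by disintegrating every measure over the first marginal $\mathbb{P}_0$ and then integrating the pointwise identity \eqref{inner-through-kl-cond-plan} against $\mathbb{P}_0$. The key structural observation is that all three joint distributions appearing in \eqref{eq:kl-plan} share the $\mathcal{X}$-marginal $\mathbb{P}_0$: this holds for $\pi$ since $\pi\in\Pi(\mathbb{P}_0)$, for $\pi^{W^{\epsilon}}$ since $W^{\epsilon}$ starts at $\mathbb{P}_0$ (so $\pi^{W^{\epsilon}}_0=\mathbb{P}_0$), and for $\pi^{\beta}$ by its very definition $d\pi^{\beta}(x,y)=d\pi^{\beta}(y|x)\,d\mathbb{P}_0(x)$.

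First I would record the disintegration identity for KL divergence: if $P$ and $Q$ are probability measures on $\mathcal{X}\times\mathcal{Y}$ with the same $\mathcal{X}$-marginal $\mathbb{P}_0$, then $\text{KL}(P\|Q)=\int_{\mathcal{X}}\text{KL}(P(\cdot|x)\|Q(\cdot|x))\,d\mathbb{P}_0(x)$. This follows exactly as in Appendix~\ref{sec:app-link-between-ot-and-sb}: when $P\ll Q$ the $\mathbb{P}_0$-factors cancel, giving $\frac{dP}{dQ}(x,y)=\frac{dP(\cdot|x)}{dQ(\cdot|x)}(y)$ for $\mathbb{P}_0$-a.e.\ $x$, and Fubini splits the integral; when $P\not\ll Q$ both sides are $+\infty$. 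Applying this with $(P,Q)=(\pi,\pi^{W^{\epsilon}})$ and $(P,Q)=(\pi,\pi^{\beta})$ yields
$$\text{KL}(\pi\|\pi^{W^{\epsilon}})=\int_{\mathcal{X}}\text{KL}\big(\pi(\cdot|x)\|\pi^{W^{\epsilon}}(\cdot|x)\big)\,d\mathbb{P}_0(x),\qquad \text{KL}(\pi\|\pi^{\beta})=\int_{\mathcal{X}}\text{KL}\big(\pi(\cdot|x)\|\pi^{\beta}(\cdot|x)\big)\,d\mathbb{P}_0(x),$$
while $\int_{\mathcal{Y}}\beta(y)\,d\pi(y)=\int_{\mathcal{X}}\big[\int_{\mathcal{Y}}\beta(y)\,d\pi(y|x)\big]\,d\mathbb{P}_0(x)$ by Fubini.

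Next I would invoke Proposition~\ref{thm:inner-problem-eot}. Since $\pi\in\mathcal{P}_2(\mathcal{X}\times\mathcal{Y})$, we have $\int_{\mathcal{X}}\int_{\mathcal{Y}}\|y\|^2\,d\pi(y|x)\,d\mathbb{P}_0(x)<\infty$, so $\pi(\cdot|x)\in\mathcal{P}_2(\mathcal{Y})$ for $\mathbb{P}_0$-a.e.\ $x$, and \eqref{inner-through-kl-cond-plan} applies with $\nu=\pi(\cdot|x)$:
$$\text{KL}\big(\pi(\cdot|x)\|\pi^{W^{\epsilon}}(\cdot|x)\big)-\int_{\mathcal{Y}}\beta(y)\,d\pi(y|x)=\text{KL}\big(\pi(\cdot|x)\|\pi^{\beta}(\cdot|x)\big)-\log C_{\beta}^x \quad\text{for }\mathbb{P}_0\text{-a.e.\ }x.$$
Integrating this equality over $x\sim\mathbb{P}_0$ and substituting the three identities from the previous step gives \eqref{eq:kl-plan}.

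The only real obstacle is the bookkeeping around possibly infinite quantities and measurability. Since $\beta\in\mathcal{C}_{b,2}(\mathcal{Y})$ it satisfies $u\|\cdot\|^{2}+v\le\beta\le w$, so $\int_{\mathcal{Y}}\beta\,d\pi$ is finite (using $\pi\in\mathcal{P}_2$) and $0<C_{\beta}^x\le e^{w}$, whence $x\mapsto\log C_{\beta}^x$ is measurable and bounded above and $\int_{\mathcal{X}}\log C_{\beta}^x\,d\mathbb{P}_0(x)$ is well-defined in $[-\infty,w]$. Because $\pi^{\beta}(\cdot|x)$ is mutually absolutely continuous with $\pi^{W^{\epsilon}}(\cdot|x)$ (the density $\exp(\beta(y))/C_{\beta}^x$ is everywhere positive and finite), $\pi\ll\pi^{W^{\epsilon}}$ iff $\pi\ll\pi^{\beta}$, so the two sides of \eqref{eq:kl-plan} are simultaneously $+\infty$ or simultaneously finite, and in the finite case all the manipulations above are justified. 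The measurability of $x\mapsto\text{KL}(\pi(\cdot|x)\|\cdot)$, which is the point requiring most care, follows from the existence of regular conditional distributions on $\mathbb{R}^D$ (a standard Borel space) and the joint measurability of the corresponding log-density.
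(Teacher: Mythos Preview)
Your proposal is correct and follows essentially the same route as the paper: both proofs substitute $\nu=\pi(\cdot|x)$ into Proposition~\ref{thm:inner-problem-eot}, integrate the resulting pointwise identity against $\mathbb{P}_0$, and use that $\pi$, $\pi^{W^{\epsilon}}$ and $\pi^{\beta}$ all share the first marginal $\mathbb{P}_0$ so that the joint KL divergences disintegrate into integrals of conditional KL divergences. Your extra care with the $+\infty$ case, the integrability of $\beta$ and $\log C_{\beta}^x$, and the measurability of the conditional KL is more than the paper spells out, but the underlying argument is the same.
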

\begin{proof}[Proof of Lemma \ref{lem:inner-KL}]
For each $x\in\mathcal{X}$, we substitute $\nu=\pi(\cdot|x)$ to \eqref{inner-through-kl-cond-plan} and integrate over $x\sim\mathbb{P}_{0}$. For the left part, we obtain the following:
\begin{eqnarray}
    \int_{\mathcal{X}}\Big(\textnormal{KL}(\pi(\cdot|x) || \pi^{W^{\epsilon}}(\cdot|x)) - \int_{\mathcal{Y}} \beta(y) d\pi(y|x)\Big)d\sP_0(x) =
    \nonumber
    \\
    \int_{\mathcal{X}} \textnormal{KL}(\pi(\cdot|x) || \pi^{W^{\epsilon}}(\cdot|x))d\sP_0(x) - \int_{\mathcal{X} \times \mathcal{Y}} \beta(y) d\pi(y|x)d\sP_0(x) = 
    \nonumber
    \\
    \int_{\mathcal{X}} \int_{\mathcal{Y}}\log\frac{d\pi(y|x)}{d\pi^{W^{\epsilon}}(y|x)}d\pi(y|x)d\sP_0(x) - \int_{\mathcal{Y}} \beta(y) d\pi_1(y) = 
    \nonumber
    \\
    \int_{\mathcal{X}} \int_{\mathcal{Y}}\log\frac{d\pi(y|x)d\sP_0(x)}{d\pi^{W^{\epsilon}}(y|x)d\sP_0(x)}d\pi(y|x)d\sP_0(x) - \int_{\mathcal{Y}} \beta(y) d\pi_1(y) = 
    \nonumber
    \\
    \int_{\mathcal{X} \times \mathcal{Y}}\log\frac{d\pi(x,y)}{d\pi^{W^{\epsilon}}(x,y)}d\pi(x,y) - \int_{\mathcal{Y}} \beta(y) d\pi_1(y) = 
    \nonumber
    \\
    \textnormal{KL}(\pi || \pi^{W^{\epsilon}}) - \int_{\mathcal{Y}} \beta(y) d\pi_1(y).
    \nonumber
\end{eqnarray}
For the right part, we obtain:
\begin{eqnarray} 
    \int_{\mathcal{X}}\Big\{\textnormal{KL}(\pi(\cdot|x)||\pi^{\beta}(\cdot|x)) - \log C_{\beta}^x\Big\}d\sP_0(x) = 
    \nonumber
    \\
    \int_{\mathcal{X}} \textnormal{KL}(\pi(\cdot|x)||\pi^{\beta}(\cdot|x))d\sP_0(x) - \int_{\mathcal{X}} \log C_{\beta}^x d\sP_0(x) =
    \nonumber
    \\
    \int_{\mathcal{X}} \int_{\mathcal{Y}}\log\frac{d\pi(y|x)}{d\pi^{\beta}(y|x)}d\pi(y|x)d\sP_0(x) - \int_{\mathcal{X}} \log C_{\beta}^x d\sP_0(x) = 
    \nonumber
    \\
    \int_{\mathcal{X}} \int_{\mathcal{Y}}\log\frac{d\pi(y|x)d\sP_0(x)}{d\pi^{\beta}(y|x)d\sP_0(x)}d\pi(y|x)d\sP_0(x) - \int_{\mathcal{X}} \log C_{\beta}^x d\sP_0(x) = 
    \nonumber
    \\
    \int_{\mathcal{X} \times \mathcal{Y}}\log\frac{d\pi(x,y)}{d\pi^{\beta}(x,y)}d\pi(x,y) - \int_{\mathcal{X}} \log C_{\beta}^x d\sP_0(x) = 
    \nonumber
    \\
    \textnormal{KL}(\pi||\pi^{\beta}) - \int_{\mathcal{X}} \log C_{\beta}^x d\sP_0(x).
    \nonumber
\end{eqnarray}
Hence, the equality \eqref{eq:kl-plan} holds.
\end{proof}

Now we introduce the following auxiliary functional $\widetilde{\mathcal{L}}$:
\begin{gather}
    \widetilde{\mathcal{L}}(\beta, \pi) \defeq \textnormal{KL}(\pi||\pi^{W^{\epsilon}}) -  \int_{\mathcal{Y}} \beta(y) d\pi_{1}(y) +  \int_{\mathcal{Y}} \beta(y) d\sP_1(y).
    \nonumber
\end{gather}
Recall that $\pi_{1}$ denotes the second marginal distribution of $\pi$. We use this functional to derive the saddle point reformulation of EOT. 

\begin{lemma}[Relaxation of entropic optimal transport]\label{eot-relaxation} It holds that
\begin{equation}\label{eq:eot-reform-lemma}
     \mathcal{L}^{*}=\inf_{\pi \in \Pi(\sP_0, \sP_1)} \textnormal{KL}(\pi||\pi^{W^{\epsilon}}) \! = \sup_{\beta} \!\! \inf_{\pi \in \Pi(\sP_0)} \widetilde{\mathcal{L}}(\beta, \pi),
\end{equation}
where $\sup$ is taken over potentials $\beta\in\mathcal{C}_{b,2}(\mathcal{Y})$ and $\inf$ over $\pi\in\Pi(\mathbb{P}_0)$.

\end{lemma}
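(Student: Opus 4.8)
The plan is to reduce \eqref{eq:eot-reform-lemma} to the weak optimal transport duality of Backhoff-Veraguas et al.\ \cite[Theorem 1.3]{backhoff2019existence} by collapsing the inner infimum over $\Pi(\mathbb{P}_0)$ into a pointwise ``weak $C$-transform''. Throughout I would keep the standing assumption from \wasyparagraph\ref{app:ot-relax} that $\mathbb{P}_0,\mathbb{P}_1$ are absolutely continuous, so that $\mathcal{L}^{*}<\infty$ and no degenerate $+\infty$ case occurs. The easy inequality (right-hand side $\le$ left-hand side) is immediate: for $\pi\in\Pi(\mathbb{P}_0,\mathbb{P}_1)$ the two $\beta$-terms in $\widetilde{\mathcal{L}}(\beta,\pi)$ cancel, so $\inf_{\pi\in\Pi(\mathbb{P}_0)}\widetilde{\mathcal{L}}(\beta,\pi)\le\inf_{\pi\in\Pi(\mathbb{P}_0,\mathbb{P}_1)}\widetilde{\mathcal{L}}(\beta,\pi)=\mathcal{L}^{*}$ for every $\beta$; the real content of the lemma is the reverse inequality, i.e.\ strong duality.

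First I would evaluate the inner infimum. Fix $\beta\in\mathcal{C}_{b,2}(\mathcal{Y})$. Lemma \ref{lem:inner-KL} rewrites $\widetilde{\mathcal{L}}(\beta,\pi)=\textnormal{KL}(\pi\|\pi^{\beta})-\int_{\mathcal{X}}\log C^x_{\beta}\,d\mathbb{P}_0(x)+\int_{\mathcal{Y}}\beta\,d\mathbb{P}_1$ for every $\pi\in\Pi(\mathbb{P}_0)$; only $\textnormal{KL}(\pi\|\pi^{\beta})\ge 0$ depends on $\pi$, and it vanishes at $\pi=\pi^{\beta}$, which lies in $\Pi(\mathbb{P}_0)$ by construction since $d\pi^{\beta}(x,y)=d\pi^{\beta}(y|x)\,d\mathbb{P}_0(x)$. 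Hence $\inf_{\pi\in\Pi(\mathbb{P}_0)}\widetilde{\mathcal{L}}(\beta,\pi)=-\int_{\mathcal{X}}\log C^x_{\beta}\,d\mathbb{P}_0(x)+\int_{\mathcal{Y}}\beta\,d\mathbb{P}_1$. Next, Proposition \ref{thm:inner-problem-eot} together with the same minimality remark applied pointwise (the minimum of $\nu\mapsto\textnormal{KL}(\nu\|\pi^{\beta}(\cdot|x))-\log C^x_{\beta}$ over $\mathcal{P}_2(\mathcal{Y})$ is attained at $\nu=\pi^{\beta}(\cdot|x)$) identifies $-\log C^x_{\beta}=\beta^{C}(x)$, where $\beta^{C}(x)\defeq\inf_{\nu\in\mathcal{P}_2(\mathcal{Y})}\{\textnormal{KL}(\nu\|\pi^{W^{\epsilon}}(\cdot|x))-\int_{\mathcal{Y}}\beta\,d\nu\}$. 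Thus the right-hand side of \eqref{eq:eot-reform-lemma} is exactly $\sup_{\beta}\{\int_{\mathcal{X}}\beta^{C}\,d\mathbb{P}_0+\int_{\mathcal{Y}}\beta\,d\mathbb{P}_1\}$, the weak-OT dual functional quoted in the main text.

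It then remains to show $\mathcal{L}^{*}=\sup_{\beta}\{\int_{\mathcal{X}}\beta^{C}\,d\mathbb{P}_0+\int_{\mathcal{Y}}\beta\,d\mathbb{P}_1\}$. Using the KL chain rule and $\pi_0=\mathbb{P}_0=\pi^{W^{\epsilon}}_0$, I would write $\textnormal{KL}(\pi\|\pi^{W^{\epsilon}})=\int_{\mathcal{X}}\textnormal{KL}(\pi(\cdot|x)\|\pi^{W^{\epsilon}}(\cdot|x))\,d\mathbb{P}_0(x)$ for $\pi\in\Pi(\mathbb{P}_0,\mathbb{P}_1)$, so that $\mathcal{L}^{*}$ is a weak OT problem with cost $C(x,\nu)\defeq\textnormal{KL}(\nu\|\pi^{W^{\epsilon}}(\cdot|x))$. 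Then I would verify the hypotheses of \cite[Theorem 1.3]{backhoff2019existence}: $\mathcal{Y}=\mathbb{R}^D$ is Polish; $\nu\mapsto C(x,\nu)$ is convex, lower semicontinuous for the weak topology, and jointly measurable in $(x,\nu)$; $C\ge 0$; and the mild growth/integrability requirements hold, using $\mathbb{P}_0,\mathbb{P}_1\in\mathcal{P}_2$ and the explicit Gaussian form $\tfrac{d\pi^{W^{\epsilon}}(y|x)}{dy}=\mathcal{N}(y|x,\epsilon I)$. The theorem then yields strong duality with the supremum ranging over precisely $\mathcal{C}_{b,2}(\mathcal{Y})$, which combined with the previous paragraph gives \eqref{eq:eot-reform-lemma}.

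The main obstacle is this last verification: confirming that the KL-based weak cost genuinely fits \cite[Theorem 1.3]{backhoff2019existence} — in particular the lower semicontinuity of $\nu\mapsto\textnormal{KL}(\nu\|\pi^{W^{\epsilon}}(\cdot|x))$ and that the admissible dual class there is exactly $\mathcal{C}_{b,2}(\mathcal{Y})$ — together with the measurable-selection bookkeeping that lets the pointwise optimizers $x\mapsto\pi^{\beta}(\cdot|x)$ assemble into an admissible plan in $\Pi(\mathbb{P}_0)$. The latter is painless here since $\pi^{\beta}$ is given by the closed-form Gibbs kernel $d\pi^{\beta}(y|x)\propto e^{\beta(y)}\,d\pi^{W^{\epsilon}}(y|x)$, so no abstract selection argument is needed.
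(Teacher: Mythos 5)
Your proposal is correct and follows essentially the same route as the paper: both reduce the claim to the weak-OT strong duality of \cite[Theorem 1.3]{backhoff2019existence} for the weak cost $C(x,\nu)=\textnormal{KL}(\nu\|\pi^{W^{\epsilon}}(\cdot|x))$, identify the weak $C$-transform as $\beta^{C}(x)=-\log C_{\beta}^{x}$ via Proposition~\ref{thm:inner-problem-eot}, and connect this to $\inf_{\pi\in\Pi(\sP_0)}\widetilde{\mathcal{L}}(\beta,\pi)$ via Lemma~\ref{lem:inner-KL}. You merely traverse the chain of equalities in the opposite order (inner infimum first, duality last), which changes nothing of substance.
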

\begin{proof}[Proof of Lemma \ref{eot-relaxation}]
We obtain
    \begin{eqnarray}
    \inf_{\pi \in \Pi(\sP_0, \sP_1)} \textnormal{KL}\big(\pi||\pi^{W^{\epsilon}}\big) =\inf_{\pi \in \Pi(\sP_0, \sP_1)} \Big\{\int_{\mathcal{X}} \text{KL}\big(\pi(y|x) || \pi^{W^{\epsilon}}(y|x)\big) d\sP_0(x) \Big\} =
    \nonumber
    \\
     \inf_{\pi \in \Pi(\sP_0, \sP_1)} \int_{\mathcal{X}} C\big(x, \pi(y|x)\big) d\sP_0(x) \label{eq:weak-ot},
    \end{eqnarray}
    where $C\big(x, \nu\big) \defeq \text{KL}\big(\nu || \pi^{W^{\epsilon}}(y|x)\big)$. The last problem in \eqref{eq:weak-ot} is known as weak OT \cite{backhoff2019existence,gozlan2017kantorovich} with a weak OT cost $C$. For a given $\beta\in\mathcal{C}_{b,2}(\mathcal{Y})$, consider its weak $C$-transform given by:
    \begin{gather}
        \beta^{C}(x) \defeq \inf_{\nu\in\mathcal{P}_2(\mathcal{Y})} \big\lbrace C(x, \nu) - \int_{\mathcal{Y}} \beta(y) d\nu(y)\big\rbrace \label{eq:C-transform}.
    \end{gather}
    Since $C:\mathcal{X}\times\mathcal{P}_{2}(\mathcal{Y})\rightarrow \mathbb{R}$ is lower bounded (by zero), convex in the second argument and jointly lower semi-continuous, the following equality holds \cite[Theorem 1.3]{backhoff2019existence}:
    \begin{gather}
        \mathcal{L}^{*}=\inf_{\pi \in \Pi(\sP_0, \sP_1)} \int_{\mathcal{X}} C\big(x, \pi(y|x)\big) d\sP_0(x) = \sup_{\beta} \big\lbrace \int_{\mathcal{X}} \beta^{C}(x) d\sP_0(x)+\int_{\mathcal{Y}} \beta(y) d\sP_1(y) \big\rbrace\label{eq:dual-form},
    \end{gather}
    where $\sup$ is taken over $\beta\in\mathcal{C}_{b,2}(\mathcal{Y})$. We use our Proposition \ref{thm:inner-problem-eot} to note that
    \begin{eqnarray}\beta^{C}(x)=\inf_{\nu\in\mathcal{P}_{2}(\mathcal{Y})}\big\lbrace \text{KL}\big(\nu || \pi^{W^{\epsilon}}(y|x)\big)-\int_{\mathcal{Y}} \beta(y) d\nu(y)\big\rbrace=
    \nonumber
    \\
    \inf_{\nu\in\mathcal{P}_{2}(\mathcal{Y})}\big\lbrace\textnormal{KL}(\nu||\pi^{\beta}(\cdot|x)) - \log C_{\beta}^x\}= - \log C_{\beta}^x.
    \nonumber
    \end{eqnarray}
  This allows us to derive
    \begin{eqnarray}
    \int_{\mathcal{X}} \beta^{C}(x) d\sP_0(x)+\int_{\mathcal{Y}} \beta(y) d\sP_1(y)= 
    \label{init-c-transform}
    \\
    -\int_{\mathcal{X}} \log C_{\beta}^x d\sP_0(x)+\int_{\mathcal{Y}} \beta(y) d\sP_1(y)=
        \nonumber
    \\
     \overbrace{\big\lbrace\inf_{\pi\in\Pi(\mathbb{P}_0)}\textnormal{KL}(\pi||\pi^{\beta})\big\rbrace}^{=0}-\int_{\mathcal{X}} \log C_{\beta}^x d\sP_0(x)+\int_{\mathcal{Y}} \beta(y) d\sP_1(y)=
        \nonumber
    \\
     \inf_{\pi\in\Pi(\mathbb{P}_0)}\bigg\lbrace\textnormal{KL}(\pi||\pi^{\beta})-\overbrace{\int_{\mathcal{X}} \log C_{\beta}^x d\sP_0(x)+\int_{\mathcal{Y}} \beta(y) d\sP_1(y)}^{\text{Do not depend on }\pi}\bigg\rbrace=
        \nonumber
    \\
     \inf_{\pi\in\Pi(\mathbb{P}_0)}\bigg\lbrace\textnormal{KL}(\pi||\pi^{W^{\epsilon}}) - \int_{\mathcal{Y}}\beta(y)d\pi(y)+\int_{\mathcal{Y}} \beta(y) d\sP_1(y)\bigg\rbrace=
     \inf_{\pi\in\Pi(\mathbb{P}_0)}\widetilde{\mathcal{L}}(\beta,\pi).
     \label{final-inf-reformulation}
    \end{eqnarray}
    Here in transition to line \eqref{final-inf-reformulation}, we use our Lemma \ref{lem:inner-KL}.
    It remains to take $\sup_{\beta}$ in equality between \eqref{init-c-transform} and \eqref{final-inf-reformulation} and then recall \eqref{eq:dual-form} to finish the proof and obtain desired \eqref{eq:eot-reform-lemma}.
\end{proof}
Thus, we can obtain the value $\mathcal{L}^*$ \eqref{eq:kl-eot-eqviv} by solving maximin problem \eqref{eq:eot-reform-lemma} with only one constraint $\pi \in \Pi(\sP_0)$. Moreover, our following lemma shows that in all optimal pairs ($\beta^*$, $\pi^*$) which solve maximin problem \eqref{eq:eot-reform-lemma}, $\pi^*$ is necessary the unique entropic OT plan between $\sP_0$ and $\sP_1$.
\begin{lemma}[Entropic OT plan solves the relaxed entropic OT problem]\label{entropic-ot-solve-relaxex-problem}
Let $\pi^*$ be the entropic OT plan between $\sP_0$ and $\sP_1$. For every optimal $\beta^* \in \argsup_{\beta} \inf_{\pi \in \Pi(\sP_0)}\mathcal{L}(\beta, \pi)$, we have
\begin{equation}\label{eot-solution-solves-relaxation-problem}
\pi^* = \arginf_{\pi \in \Pi(\sP_0)} \widetilde{\mathcal{L}}(\beta^*, \pi).
\end{equation}
\end{lemma}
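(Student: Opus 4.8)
The plan is to leverage the identity established in Lemma~\ref{lem:inner-KL}, which rewrites $\widetilde{\mathcal{L}}(\beta,\pi)$ in terms of $\textnormal{KL}(\pi\|\pi^{\beta})$ up to terms that do not depend on $\pi$. Concretely, for any fixed $\beta\in\mathcal{C}_{b,2}(\mathcal{Y})$ and any $\pi\in\Pi(\mathbb{P}_0)$ we have
\begin{gather}
\widetilde{\mathcal{L}}(\beta,\pi) = \textnormal{KL}(\pi\|\pi^{\beta}) - \int_{\mathcal{X}}\log C_{\beta}^{x}\,d\sP_0(x) + \int_{\mathcal{Y}}\beta(y)\,d\sP_1(y). \nonumber
\end{gather}
Since the last two terms are constant in $\pi$, the inner problem $\inf_{\pi\in\Pi(\mathbb{P}_0)}\widetilde{\mathcal{L}}(\beta,\pi)$ reduces to $\inf_{\pi\in\Pi(\mathbb{P}_0)}\textnormal{KL}(\pi\|\pi^{\beta})$. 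Because $\pi^{\beta}\in\Pi(\mathbb{P}_0)$ itself (it was constructed as $d\pi^{\beta}(y|x)\,d\sP_0(x)$), this infimum equals $0$ and is attained \emph{only} at $\pi=\pi^{\beta}$, by the strict positivity/definiteness of KL divergence (for absolutely continuous measures, which holds under our standing assumption on $\mathbb{P}_0,\mathbb{P}_1$). Hence for every $\beta$, $\arginf_{\pi\in\Pi(\mathbb{P}_0)}\widetilde{\mathcal{L}}(\beta,\pi) = \{\pi^{\beta}\}$ is a singleton.

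Next I would specialize this to an optimal potential $\beta^{*}$. It remains to identify $\pi^{\beta^{*}}$ with the entropic OT plan $\pi^{*}$. From Lemma~\ref{eot-relaxation} and its proof, for the optimal $\beta^{*}$ the chain of equalities \eqref{init-c-transform}--\eqref{final-inf-reformulation} gives $\inf_{\pi\in\Pi(\mathbb{P}_0)}\widetilde{\mathcal{L}}(\beta^{*},\pi) = \mathcal{L}^{*} = \inf_{\pi\in\Pi(\sP_0,\sP_1)}\textnormal{KL}(\pi\|\pi^{W^{\epsilon}})$. The minimizer of $\widetilde{\mathcal{L}}(\beta^{*},\cdot)$ over $\Pi(\mathbb{P}_0)$ is $\pi^{\beta^{*}}$, so I would check that $\pi^{\beta^{*}}$ in fact lies in $\Pi(\sP_0,\sP_1)$, i.e.\ its second marginal is $\sP_1$. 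Once this marginal property is verified, plugging $\pi=\pi^{\beta^{*}}$ back into $\widetilde{\mathcal{L}}(\beta^{*},\cdot)$ makes the two $\beta^{*}$-terms cancel (since $\pi_1^{\beta^{*}}=\sP_1$), yielding $\widetilde{\mathcal{L}}(\beta^{*},\pi^{\beta^{*}}) = \textnormal{KL}(\pi^{\beta^{*}}\|\pi^{W^{\epsilon}})$; combined with the value being $\mathcal{L}^{*}$, this shows $\pi^{\beta^{*}}$ attains the EOT optimum, so by uniqueness of the entropic OT plan (strict convexity of $-H$ on absolutely continuous plans, noted in \wasyparagraph\ref{app:ot-relax}) we conclude $\pi^{\beta^{*}} = \pi^{*}$, and therefore $\pi^{*} = \arginf_{\pi\in\Pi(\sP_0)}\widetilde{\mathcal{L}}(\beta^{*},\pi)$.

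The main obstacle is establishing that the second marginal of $\pi^{\beta^{*}}$ equals $\sP_1$, i.e.\ that the unconstrained inner minimizer automatically satisfies the dropped marginal constraint. I expect to obtain this from the structure of the weak $C$-transform optimality: since $\beta^{*}$ is a maximizer of $\int_{\mathcal{X}}\beta^{C}(x)\,d\sP_0(x) + \int_{\mathcal{Y}}\beta(y)\,d\sP_1(y)$, a first-order/variational argument in $\beta$ (perturbing $\beta^{*}$ and using that $\beta^{C}$ changes via the envelope theorem through the minimizing $\nu=\pi^{\beta^{*}}(\cdot|x)$) forces $\int_{\mathcal{X}}\pi^{\beta^{*}}(\cdot|x)\,d\sP_0(x) = \sP_1$. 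Alternatively—and perhaps more cleanly—one can invoke that the weak OT duality \cite[Theorem 1.3]{backhoff2019existence} guarantees the existence of an optimal plan, note that any optimal plan $\pi$ for the primal weak OT problem together with an optimal $\beta^{*}$ must satisfy the complementary-slackness relation $\pi(\cdot|x) \in \arginf_{\nu}\{C(x,\nu)-\int\beta^{*}\,d\nu\} = \{\pi^{\beta^{*}}(\cdot|x)\}$ for $\sP_0$-a.e.\ $x$, which pins down $\pi = \pi^{\beta^{*}}$ and hence forces $\pi^{\beta^{*}}\in\Pi(\sP_0,\sP_1)$. This is the step requiring the most care; the rest is bookkeeping with the KL identities already proved.
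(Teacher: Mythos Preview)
Your argument is correct, but it takes a substantially longer route than the paper. You first identify the abstract inner minimizer as $\pi^{\beta^{*}}$ via Lemma~\ref{lem:inner-KL}, and then work to prove $\pi^{\beta^{*}}=\pi^{*}$ by establishing the marginal constraint $\pi_1^{\beta^{*}}=\sP_1$ through a complementary-slackness/envelope argument from weak OT duality. That step is valid but, as you note yourself, it is the one requiring the most care.

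The paper avoids this obstacle entirely by reversing the logic. Instead of characterizing the minimizer and then identifying it with $\pi^{*}$, the paper simply \emph{evaluates} $\widetilde{\mathcal{L}}(\beta^{*},\cdot)$ at $\pi^{*}$: since $\pi^{*}\in\Pi(\sP_0,\sP_1)\subset\Pi(\sP_0)$, the two $\beta^{*}$-integrals cancel and one reads off $\widetilde{\mathcal{L}}(\beta^{*},\pi^{*})=\textnormal{KL}(\pi^{*}\|\pi^{W^{\epsilon}})=\mathcal{L}^{*}$. By Lemma~\ref{eot-relaxation} this equals the infimum value, so $\pi^{*}$ is a minimizer; strict convexity of $\pi\mapsto\widetilde{\mathcal{L}}(\beta^{*},\pi)$ then gives uniqueness. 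This is a two-line proof that never needs to verify the second marginal of $\pi^{\beta^{*}}$ separately --- that fact follows \emph{a posteriori} from $\pi^{\beta^{*}}=\pi^{*}$. Your approach does buy the explicit formula $\pi^{*}=\pi^{\beta^{*}}$ along the way, which is informative, but for the lemma as stated the paper's direct evaluation is considerably more economical.
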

\begin{proof}[Proof of Lemma \ref{entropic-ot-solve-relaxex-problem}]
Since $\beta^*$ is optimal, we know from Lemma~\ref{eot-relaxation} that $\inf_{\pi \in \Pi(\sP_0)}\mathcal{L}(\beta^*, \pi) = \mathcal{L}^{*}$. Thanks to ${\pi^* \in \Pi(\sP_0, \sP_1)}$, we have $\pi_1^* = \sP_1$. We substitute $\pi^*$ to $\mathcal{L}(\beta^*, \pi)$ and obtain
\begin{gather}
    \mathcal{L}(\beta^*, \pi^*) = \textnormal{KL}(\pi^*||\pi^{W^{\epsilon}}) +  \int_{\mathcal{Y}}\!\! \beta(y) d\sP_1(y) -  \int_{\mathcal{Y}}  \beta(y) \overbrace{d\pi^*_1(y)}^{=d\sP_1(y)} = \textnormal{KL}(\pi^*||\pi^{W^{\epsilon}})=\mathcal{L}^{*}.
\end{gather}
The functional ${\pi \mapsto \mathcal{L}(\beta^*, \pi)}$ is strictly convex (in the convex subset of $\Pi(\mathbb{P}_0)$ of distributions $\pi$ for which $\textnormal{KL}(\pi||\pi^{W^{\epsilon}})<\infty$). Thus, it has a unique minimizer, which is $\pi^*$.
\end{proof}
From our Lemmas \ref{eot-relaxation} and \ref{entropic-ot-solve-relaxex-problem} it follows that to get the OT plan $\pi^{*}$ one may solve the maximin problem \eqref{eq:eot-reform-lemma} to obtain an optimal saddle point $(\beta^{*},\pi^{*})$.
Unfortunately, it is challenging to estimate $\text{KL}(\pi||\pi^{W^{\epsilon}})$ from samples, which limits the usage of this objective in practice.

\subsection{Equivalence of EOT and DSB relaxed problems}\label{app:equvivalence-of-maximin-problems}
Below we show how to relax SB problem \eqref{dynamic-schrodinger} and link its solution to the relaxed entropic OT \eqref{eq:eot-reform-lemma}. 

For a given $\beta \in \mathcal{C}_{b,2}(\mathcal{Y})$, we define an auxiliary process $T^{\beta}$ such that its conditional distributions are ${T^{\beta}_{|x,y} = W^{\epsilon}_{|x,y}}$ and its joint distribution $\pi^{T^{\beta}}$ at $t=0,1$ is given by $\pi^{\beta}$.

To simplify many of upcoming formulas, we introduce ${C_{\beta} \defeq \int_{\mathcal{X}} \log C_{\beta}^x d\sP_0(x)}$. Also, we introduce $\mathcal{F}(\sP_0)$ to denote the set of processes starting at $\mathbb{P}_{0}$ at time $t=0$.
\begin{lemma}[Inner objectives of relaxed EOT and SB are KL with $T^{\beta}$ and $\pi^{T^{\beta}}$]\label{thm:inner-problem} For $\pi\in\Pi(\mathbb{P}_0)$ and $T\in\mathcal{F}(\mathbb{P}_0)$, the following equations hold:
\begin{gather}
\widetilde{\mathcal{L}}(\beta, \pi) = 
\textnormal{KL}(\pi||\pi^{T^{\beta}}) - C_{\beta}+ \int_{\mathcal{Y}} \beta(y) d\sP_1(y),
\label{inner-through-kl-plan}
\\
\mathcal{L}(\beta, T) = 
\textnormal{KL}(T || T^{\beta}) - C_{\beta} + \int_{\mathcal{Y}} \beta(y) d\sP_1(y).
\label{eq:lemma-inner-problems-kl-process} 
\end{gather}
\label{lemma-inner-through-kl}
Note that the last two terms in each line depend only on $\beta$ but not on $\pi$ or $T$.
\end{lemma}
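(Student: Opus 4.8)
The plan is to derive both identities directly from the already-established Lemma~\ref{lem:inner-KL} together with the disintegration formula~\eqref{eq:disintegration}, after recording two bookkeeping facts: first, that $T^{\beta}$ is a bona fide element of $\mathcal{P}(\Omega)$ whose endpoint law is $\pi^{T^{\beta}}=\pi^{\beta}$ and whose bridges satisfy $T^{\beta}_{|x,y}=W^{\epsilon}_{|x,y}$ (by construction, $T^{\beta}$ is obtained by gluing the kernel $W^{\epsilon}_{|x,y}$ with the endpoint marginal $\pi^{\beta}$); second, that $C_{\beta}=\int_{\mathcal{X}}\log C_{\beta}^{x}\,d\sP_0(x)$ by definition.

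For the first identity~\eqref{inner-through-kl-plan}, I would simply unfold the definition $\widetilde{\mathcal{L}}(\beta,\pi)=\text{KL}(\pi||\pi^{W^{\epsilon}})-\int_{\mathcal{Y}}\beta\,d\pi_{1}+\int_{\mathcal{Y}}\beta\,d\sP_1$ and substitute Lemma~\ref{lem:inner-KL}, which rewrites its first two terms as $\text{KL}(\pi||\pi^{\beta})-C_{\beta}=\text{KL}(\pi||\pi^{T^{\beta}})-C_{\beta}$. Collecting terms yields~\eqref{inner-through-kl-plan} at once; this is essentially a one-line rearrangement.

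For the second identity~\eqref{eq:lemma-inner-problems-kl-process}, the key observation is that $W^{\epsilon}$ and $T^{\beta}$ have the \emph{same} Brownian bridges. Applying the disintegration~\eqref{eq:disintegration} to the pair $(T,W^{\epsilon})$ and again to the pair $(T,T^{\beta})$ and subtracting, the conditional terms $\int_{\mathcal{X}\times\mathcal{Y}}\text{KL}(T_{|x,y}||\,\cdot\,)\,d\pi^{T}(x,y)$ cancel because $T^{\beta}_{|x,y}=W^{\epsilon}_{|x,y}$, leaving
\[
\text{KL}(T||W^{\epsilon})-\text{KL}(T||T^{\beta})=\text{KL}(\pi^{T}||\pi^{W^{\epsilon}})-\text{KL}(\pi^{T}||\pi^{\beta}).
\]
Then I would invoke Lemma~\ref{lem:inner-KL} with $\pi=\pi^{T}\in\Pi(\sP_0)$ (using $\pi^{T^{\beta}}=\pi^{\beta}$), which says precisely that the right-hand side equals $\int_{\mathcal{Y}}\beta\,d\pi^{T}_{1}-C_{\beta}$. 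Substituting the resulting expression for $\text{KL}(T||W^{\epsilon})$ into the definition $\mathcal{L}(\beta,T)=\text{KL}(T||W^{\epsilon})-\int_{\mathcal{Y}}\beta\,d\pi^{T}_{1}+\int_{\mathcal{Y}}\beta\,d\sP_1$ cancels the $\int_{\mathcal{Y}}\beta\,d\pi^{T}_{1}$ term and produces~\eqref{eq:lemma-inner-problems-kl-process}.

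The only genuine care needed — and the step I expect to be the main obstacle — is making the cancellation of the conditional KL terms rigorous: one must confirm that $T^{\beta}$ is well-defined, that the disintegration~\eqref{eq:disintegration} is applicable to $(T,T^{\beta})$, and handle the degenerate cases in which a KL term is $+\infty$. In the latter situation one observes that $T\ll T^{\beta}$ fails iff $T\ll W^{\epsilon}$ fails (the two reference measures agree on the $\sigma$-algebra generated by the bridges together with the endpoints, since they share the same bridge kernel), so both sides of~\eqref{eq:lemma-inner-problems-kl-process} are $+\infty$ simultaneously and the identity is trivial. Once this is dispatched, everything else rests on Lemmas~\ref{thm:inner-problem-eot} and~\ref{lem:inner-KL} and is routine.
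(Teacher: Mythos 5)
Your proposal is correct and follows essentially the same route as the paper: the first identity is read off from Lemma~\ref{lem:inner-KL}, and the second is obtained by applying the disintegration formula~\eqref{eq:disintegration} to both $(T,W^{\epsilon})$ and $(T,T^{\beta})$, using $T^{\beta}_{|x,y}=W^{\epsilon}_{|x,y}$ so the bridge terms coincide, and invoking Lemma~\ref{lem:inner-KL} with $\pi=\pi^{T}$. Your extra care about well-definedness of $T^{\beta}$ and the simultaneous $+\infty$ cases is additional rigor the paper omits, not a deviation in method.
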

\begin{proof}[Proof of Lemma \ref{thm:inner-problem}] The first equation \eqref{inner-through-kl-plan} directly follows from Lemma~\ref{lem:inner-KL}.
Now we prove \eqref{eq:lemma-inner-problems-kl-process}:
\begin{eqnarray}
    \mathcal{L}(\beta, T)  - \int_{\mathcal{Y}} \beta(y) d\sP_1(y) = 
\text{KL}(T || W^{\epsilon}) - \int \beta(y) d\pi^{T}_{1}(y) = 
    \nonumber
    \\
    \text{KL}(\pi^{T}||\pi^{W^{\epsilon}}) + \int_{\mathcal{X} \times \mathcal{Y}} \text{KL}(T_{|x,y}||W^{\epsilon}_{|x,y})d\pi^{T}(x,y) - \int \beta(y) d\pi^{T}_{1}(y) =
    \label{proof:inner-problem-kl-disintegration}
    \\ 
    \text{KL}(\pi^{T}|| \pi^{T^{\beta}}) - C_{\beta} + \int_{\mathcal{X} \times \mathcal{Y}} \text{KL}(T_{|x,y}||W^{\epsilon}_{|x,y})d\pi^{T}(x,y) =
    \nonumber
    \\
    \text{KL}(\pi^{T}|| \pi^{T^{\beta}}) - C_{\beta} + \int_{\mathcal{X} \times \mathcal{Y}} \text{KL}(T_{|x,y}||T^{\beta}_{|x,y})d\pi^{T}(x,y) = \text{KL}(T||T^{\beta}) - C_{\beta}. \label{proof:inner-problem-kl-equal-conditionals}
\end{eqnarray}
In the transition to line \eqref{proof:inner-problem-kl-disintegration}, we use the disintegration formula \eqref{eq:disintegration}. In line \eqref{proof:inner-problem-kl-equal-conditionals}, we use the definition of $T^{\beta}$, i.e., we exploit the fact that $T^{\beta}_{|x,y} = W^{\epsilon}_{|x,y}$ and again use \eqref{eq:disintegration}.
\end{proof}
As a result of Lemma \ref{thm:inner-problem}, we obtain the following important corollary.  
\begin{corollary}[The solution to the inner problem of relaxed SB is a diffusion]\label{cor:inner-problem-sol-is-diffusion}
Consider the problem
\begin{equation}
    \inf_{T \in \mathcal{F}(\sP_0)} \mathcal{L}(\beta, T).
    \label{inner-optimization-general}
\end{equation}
Then $T^{\beta}$ is the unique optimizer of \eqref{inner-optimization-general} and it holds that $T^{\beta}\in\mathcal{D}(\mathbb{P}_{0})$, i.e., it is a diffusion process:
\begin{equation}
    T^{\beta}=\arginf_{T \in \mathcal{F}(\sP_0)} \mathcal{L}(\beta, T) = \arginf_{T_{f} \in \mathcal{D}(\sP_0)} \mathcal{L}(\beta, T_f).
\end{equation}
\end{corollary}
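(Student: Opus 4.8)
The plan is to read the claim directly off the inner-objective identity \eqref{eq:lemma-inner-problems-kl-process} of Lemma~\ref{thm:inner-problem}. For every $T\in\mathcal{F}(\sP_0)$ that identity reads $\mathcal{L}(\beta,T)=\textnormal{KL}(T\|T^{\beta})-C_{\beta}+\int_{\mathcal{Y}}\beta(y)\,d\sP_1(y)$, where the last two terms do not depend on $T$. Hence minimizing $\mathcal{L}(\beta,\cdot)$ over $\mathcal{F}(\sP_0)$ is the same as minimizing $T\mapsto\textnormal{KL}(T\|T^{\beta})$ over $\mathcal{F}(\sP_0)$. First I would observe that $T^{\beta}$ itself lies in $\mathcal{F}(\sP_0)$: by construction its endpoint law is $\pi^{T^{\beta}}=\pi^{\beta}$ with $d\pi^{\beta}(x,y)=d\pi^{\beta}(y|x)\,d\sP_0(x)$, so its marginal at $t=0$ is $\sP_0$. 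Since relative entropy is nonnegative and vanishes exactly when the two measures coincide, $T^{\beta}$ is then the unique minimizer of $\textnormal{KL}(\cdot\|T^{\beta})$ — hence of $\mathcal{L}(\beta,\cdot)$ — over $\mathcal{F}(\sP_0)$.

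Second, I would show that $T^{\beta}\in\mathcal{D}(\sP_0)$, i.e. that $T^{\beta}$ is a finite-energy diffusion. Applying the disintegration formula \eqref{eq:disintegration} to $T^{\beta}$ and using $T^{\beta}_{|x,y}=W^{\epsilon}_{|x,y}$ collapses the conditional term, giving $\textnormal{KL}(T^{\beta}\|W^{\epsilon})=\textnormal{KL}(\pi^{\beta}\|\pi^{W^{\epsilon}})$. It then suffices to check $\textnormal{KL}(\pi^{\beta}\|\pi^{W^{\epsilon}})<\infty$: since $\tfrac{d\pi^{\beta}(y|x)}{d\pi^{W^{\epsilon}}(y|x)}=\exp(\beta(y))/C_{\beta}^{x}$, one gets $\textnormal{KL}(\pi^{\beta}\|\pi^{W^{\epsilon}})=\int_{\mathcal{Y}}\beta(y)\,d\pi^{\beta}_1(y)-C_{\beta}$, and the bounds $u\|\cdot\|^{2}+v\le\beta(\cdot)\le w$ from $\beta\in\mathcal{C}_{b,2}(\mathcal{Y})$, together with the explicit Gaussian form of $\pi^{W^{\epsilon}}(\cdot|x)$ and $\sP_0\in\mathcal{P}_2(\mathcal{X})$, give finite second moments for $\pi^{\beta}$ and a two-sided (via Jensen) bound on $\log C_{\beta}^{x}$, hence $|C_{\beta}|<\infty$ and finiteness of the whole expression. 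Finally, a process with finite relative entropy with respect to the Wiener measure $W^{\epsilon}$ is a finite-energy diffusion satisfying an SDE of the form \eqref{eq:T_f-diff} — this is the characterization underlying \cite[Proposition~4.1]{leonard2013survey} — so $T^{\beta}\in\mathcal{D}(\sP_0)$.

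For the last equality, since $\mathcal{D}(\sP_0)\subset\mathcal{F}(\sP_0)$ and $T^{\beta}\in\mathcal{D}(\sP_0)$, the value $\inf_{T_{f}\in\mathcal{D}(\sP_0)}\mathcal{L}(\beta,T_f)$ is squeezed between $\inf_{T\in\mathcal{F}(\sP_0)}\mathcal{L}(\beta,T)=\mathcal{L}(\beta,T^{\beta})$ and $\mathcal{L}(\beta,T^{\beta})$, so it equals $\mathcal{L}(\beta,T^{\beta})$ and is attained, again uniquely by the vanishing-KL argument, at $T^{\beta}$. This gives $T^{\beta}=\arginf_{T\in\mathcal{F}(\sP_0)}\mathcal{L}(\beta,T)=\arginf_{T_f\in\mathcal{D}(\sP_0)}\mathcal{L}(\beta,T_f)$. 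I expect the only genuine subtlety to be the second step — namely, correctly invoking the finite-energy-diffusion characterization of laws with finite relative entropy to $W^{\epsilon}$, and verifying $\textnormal{KL}(T^{\beta}\|W^{\epsilon})<\infty$ from the mild growth/boundedness conditions defining $\mathcal{C}_{b,2}(\mathcal{Y})$; the rest is bookkeeping with the identity from Lemma~\ref{thm:inner-problem} and nonnegativity of KL.
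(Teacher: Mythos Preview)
Your proposal is correct. The first step --- reading off uniqueness of the minimizer $T^{\beta}$ from the identity \eqref{eq:lemma-inner-problems-kl-process} and nonnegativity of KL --- matches the paper exactly. The paper then takes a slightly different route to show $T^{\beta}\in\mathcal{D}(\sP_0)$: it sets $\sQ\defeq\pi^{T^{\beta}}_1$, observes that $T^{\beta}$ is the (unique) minimizer of $\textnormal{KL}(T\|W^{\epsilon})$ over $\mathcal{F}(\sP_0,\sQ)$ (since the $\beta$-integral becomes a constant under this constraint), and then invokes the known fact that the Schr\"odinger Bridge solution between $\sP_0$ and $\sQ$ with Wiener prior is a finite-energy diffusion (\wasyparagraph\ref{sec:sb}, \cite[Proposition~4.1]{leonard2013survey}). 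Your approach instead computes $\textnormal{KL}(T^{\beta}\|W^{\epsilon})=\textnormal{KL}(\pi^{\beta}\|\pi^{W^{\epsilon}})$ directly via disintegration, verifies finiteness from the $\mathcal{C}_{b,2}$ bounds on $\beta$, and then invokes the same diffusion characterization. Both routes ultimately rest on the same structural fact from \cite{leonard2013survey}; the paper's argument is shorter and treats the SB theory as a black box, while yours is more explicit about \emph{why} the relative entropy is finite --- which is arguably the more honest accounting, since the paper's black-box invocation still tacitly requires the SB between $\sP_0$ and $\sQ$ to have finite value. Your concluding squeeze argument for the equality of the two $\arginf$'s is also fine and essentially what the paper does implicitly.
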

\begin{proof}Thanks to \eqref{eq:lemma-inner-problems-kl-process}, we see that $T^{\beta}$ is the unique minimizer of \eqref{inner-optimization-general}.
Now let $\sQ \defeq \pi^{T^{\beta}}_1$. Then
\begin{gather}
    T^{\beta} = \arginf_{T \in \mathcal{F}(\sP_0)} \mathcal{L}(\beta, T) = \arginf_{T \in \mathcal{F}(\sP_0)} \Big[\textnormal{KL}(T || W^{\epsilon}) - \int_{\mathcal{Y}} \beta(y) d\pi^{T}_1(y)\Big] = 
    \nonumber
    \\ \arginf_{T \in \mathcal{F}(\sP_0, \sQ)} \Big[\textnormal{KL}(T || W^{\epsilon}) - \!\!\!\!\underbrace{\int_{\mathcal{Y}} \beta(y) d\pi^{T}_1(y)}_{=\text{Const, since } \pi^{T}_1 = \pi^{T^{\beta}}_1 =\sQ}\!\!\!\!\Big] = 
    \arginf_{T \in \mathcal{F}(\sP_0, \sQ)} \textnormal{KL}(T || W^{\epsilon}) =
    \nonumber
    \\
    \arginf_{T_{f} \in \mathcal{D}(\sP_0, \sQ)} \textnormal{KL}(T_f || W^{\epsilon}) = \arginf_{T_{f} \in \mathcal{D}(\sP_0, \sQ)} \frac{1}{2\epsilon} \mathbb{E}_{T_f}[\int_{0}^1 ||f(X_t, t)||^2 dt] \label{eq:proof-corollay-inner-problem-sol}.
\end{gather}
In transition to \eqref{eq:proof-corollay-inner-problem-sol}, we use the fact that the process solving the Schrödinger Bridge (this time between $\mathbb{P}_0$ and $\mathbb{Q}$) with the Wiener Prior is a diffusion process (see Dynamic SB problem in \wasyparagraph{\ref{sec:sb}} for details). As a result, we obtain $T^{\beta}\in\mathcal{D}(\sP_0, \sQ)\subset \mathcal{D}(\mathbb{P}_0)$ and finish the proof.
\end{proof}

Below we show that for a given $\beta$, minimization of the SB relaxed functional ${\mathcal{L}}(\beta, T_f)$ over $T_f$ is equivalent to the minimization of relaxed EOT functional $\widetilde{\mathcal{L}}(\beta, \pi)$ \eqref{eq:eot-reform-lemma} with the same $\beta$.
\begin{lemma}[Equivalence of the $\inf$ values of the relaxed functionals]\label{ot-dsb-link} It holds that
\begin{gather}
\inf_{T_f \in \mathcal{D}(\sP_0)}\mathcal{L}(\beta, T_f) = \inf_{\pi \in \Pi(\sP_0)} \widetilde{\mathcal{L}}(\beta, \pi) = - C_{\beta} + \int \beta(y) d\sP_1(y). \label{equivalence-inf}
\end{gather}
Moreover, the unique minimizers are given by $T^{\beta}\in\mathcal{D}(\mathbb{P}_0)$ and $\pi^{T^{\beta}}\in\Pi(\mathbb{P}_{0})$, respectively.
\end{lemma}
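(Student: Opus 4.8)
The plan is to read off both equalities in \eqref{equivalence-inf} directly from the decompositions of Lemma~\ref{thm:inner-problem} combined with Corollary~\ref{cor:inner-problem-sol-is-diffusion}; beyond non-negativity of the KL divergence, no new estimates are needed.

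First I would handle the dynamic side. Equation~\eqref{eq:lemma-inner-problems-kl-process} of Lemma~\ref{thm:inner-problem} is stated for $T\in\mathcal{F}(\sP_0)$, hence in particular holds for every $T_f\in\mathcal{D}(\sP_0)\subset\mathcal{F}(\sP_0)$: $\mathcal{L}(\beta,T_f)=\textnormal{KL}(T_f\|T^{\beta})-C_{\beta}+\int_{\mathcal{Y}}\beta(y)\,d\sP_1(y)$. The last two terms do not depend on $T_f$, and $\textnormal{KL}(T_f\|T^{\beta})\geq 0$ with equality iff $T_f=T^{\beta}$. By Corollary~\ref{cor:inner-problem-sol-is-diffusion} we have $T^{\beta}\in\mathcal{D}(\sP_0)$, so $T^{\beta}$ is an admissible competitor; therefore $\inf_{T_f\in\mathcal{D}(\sP_0)}\mathcal{L}(\beta,T_f)=-C_{\beta}+\int_{\mathcal{Y}}\beta(y)\,d\sP_1(y)$, attained uniquely at $T^{\beta}$.

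Next I would repeat the argument for the static side. By \eqref{inner-through-kl-plan}, $\widetilde{\mathcal{L}}(\beta,\pi)=\textnormal{KL}(\pi\|\pi^{T^{\beta}})-C_{\beta}+\int_{\mathcal{Y}}\beta(y)\,d\sP_1(y)$ for every $\pi\in\Pi(\sP_0)$. The reference plan $\pi^{T^{\beta}}=\pi^{\beta}$ has first marginal $\sP_0$, since by construction $d\pi^{\beta}(x,y)=d\pi^{\beta}(y|x)\,d\sP_0(x)$, so $\pi^{T^{\beta}}\in\Pi(\sP_0)$ is admissible; as above, $\textnormal{KL}(\pi\|\pi^{T^{\beta}})\geq 0$ vanishes only at $\pi=\pi^{T^{\beta}}$. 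Hence $\inf_{\pi\in\Pi(\sP_0)}\widetilde{\mathcal{L}}(\beta,\pi)=-C_{\beta}+\int_{\mathcal{Y}}\beta(y)\,d\sP_1(y)$, attained uniquely at $\pi^{T^{\beta}}$. Comparing the two evaluated infima gives the chain of equalities \eqref{equivalence-inf} and identifies the minimizers $T^{\beta}$ and $\pi^{T^{\beta}}$.

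I do not anticipate a real obstacle: the only genuinely analytic ingredient — that the inner optimum is a finite-energy diffusion, which is what makes the $\inf$ over the large class $\mathcal{F}(\sP_0)$ and over the diffusion class $\mathcal{D}(\sP_0)$ coincide — has already been isolated in Corollary~\ref{cor:inner-problem-sol-is-diffusion}. The only points needing a sentence of care are that the identity \eqref{eq:lemma-inner-problems-kl-process} restricts verbatim from $\mathcal{F}(\sP_0)$ to $\mathcal{D}(\sP_0)$, and that the two reference objects are admissible, i.e.\ $\pi^{\beta}\in\Pi(\sP_0)$ and $T^{\beta}\in\mathcal{D}(\sP_0)$, so the infima are achieved rather than merely approached.
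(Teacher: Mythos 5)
Your proposal is correct and follows exactly the paper's route: the paper's proof is a one-line appeal to Lemma~\ref{thm:inner-problem} and Corollary~\ref{cor:inner-problem-sol-is-diffusion}, and you have simply spelled out the same argument — restrict the KL decompositions to the admissible classes, use non-negativity and strictness of KL, and use $T^{\beta}\in\mathcal{D}(\sP_0)$, $\pi^{T^{\beta}}\in\Pi(\sP_0)$ to see the infima are attained at these points.
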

\color{black}
\begin{proof}[Proof of Lemma \ref{ot-dsb-link}] Follows from Lemma \ref{lemma-inner-through-kl} and Corollary \ref{cor:inner-problem-sol-is-diffusion}.
\end{proof}
Finally, we see that both the maximin problems are equivalent.
\begin{corollary}[Equivalence of EOT and DSB maximin problems]\label{cor:eqv-objectives} It holds that
    \begin{gather}
        \mathcal{L}^{*}=\sup_{\beta} \inf_{T_{f} \in \mathcal{D}(\sP_0)} \mathcal{L}(\beta, T_{f}) = \sup_{\beta} \inf_{\pi \in \Pi(\sP_0)} \widetilde{\mathcal{L}}(\beta, \pi)
    \end{gather}
\end{corollary}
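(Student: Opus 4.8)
The plan is to simply compose the two preceding results. Lemma~\ref{ot-dsb-link} has already done all the work: it asserts that for \emph{every fixed} potential $\beta\in\mathcal{C}_{b,2}(\mathcal{Y})$ the two inner infima agree,
\[
\inf_{T_f \in \mathcal{D}(\sP_0)}\mathcal{L}(\beta, T_f) \;=\; \inf_{\pi \in \Pi(\sP_0)} \widetilde{\mathcal{L}}(\beta, \pi),
\]
both being equal to the explicit quantity $-C_{\beta} + \int_{\mathcal{Y}} \beta(y)\, d\sP_1(y)$. Since these two functions of $\beta$ coincide at every point of the common domain $\mathcal{C}_{b,2}(\mathcal{Y})$, taking the supremum over $\beta\in\mathcal{C}_{b,2}(\mathcal{Y})$ on each side preserves the equality, giving
\[
\sup_{\beta}\inf_{T_f \in \mathcal{D}(\sP_0)}\mathcal{L}(\beta, T_f) \;=\; \sup_{\beta}\inf_{\pi \in \Pi(\sP_0)} \widetilde{\mathcal{L}}(\beta, \pi).
\]

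It remains to identify the right-hand side with $\mathcal{L}^{*}$. This is exactly the content of Lemma~\ref{eot-relaxation} (the relaxation of entropic OT), which states that $\mathcal{L}^{*}=\inf_{\pi \in \Pi(\sP_0, \sP_1)} \textnormal{KL}(\pi\|\pi^{W^{\epsilon}}) = \sup_{\beta}\inf_{\pi \in \Pi(\sP_0)} \widetilde{\mathcal{L}}(\beta, \pi)$. Chaining the three equalities yields
\[
\mathcal{L}^{*}=\sup_{\beta} \inf_{T_{f} \in \mathcal{D}(\sP_0)} \mathcal{L}(\beta, T_{f}) = \sup_{\beta} \inf_{\pi \in \Pi(\sP_0)} \widetilde{\mathcal{L}}(\beta, \pi),
\]
which is the claim.

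There is essentially no obstacle here: the corollary is a one-line consequence of Lemmas~\ref{eot-relaxation} and~\ref{ot-dsb-link}. The only point that warrants a moment's care is that the supremum in both places is taken over the \emph{same} class of potentials $\mathcal{C}_{b,2}(\mathcal{Y})$, so that passing from the pointwise-in-$\beta$ identity of Lemma~\ref{ot-dsb-link} to the identity of the suprema is legitimate; this is immediate since equal functions have equal suprema. All the genuine analytic difficulty — the weak-OT duality from \cite[Theorem 1.3]{backhoff2019existence} behind Lemma~\ref{eot-relaxation}, and the disintegration/diffusion argument behind Corollary~\ref{cor:inner-problem-sol-is-diffusion} and Lemma~\ref{ot-dsb-link} — has already been discharged in the earlier lemmas.
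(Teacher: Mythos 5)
Your proposal is correct and matches the paper's own argument: the paper likewise proves the corollary by taking $\sup_{\beta}$ of both sides of the identity in Lemma~\ref{ot-dsb-link}, with the identification of the right-hand side with $\mathcal{L}^{*}$ already supplied by Lemma~\ref{eot-relaxation}. Nothing further is needed.
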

\begin{proof}[Proof of Corollary \ref{cor:eqv-objectives}]
We take $\sup_{\beta}$ of both parts in equation \eqref{equivalence-inf}.
\end{proof}

Also, it follows that the maximization of $\inf_{T_f \in \mathcal{D}(\sP_0)}  \mathcal{L}(\beta, T_f)$ over $\beta$ allows to solve entropic OT.

\subsection{Proofs of main results}\label{app:main-text-proofs}
Finally, after long preparations, we prove our main Theorem~\ref{eot-as-dynamic-sb}.
\begin{proof}[Proof of Theorem \ref{eot-as-dynamic-sb} and Corollary \ref{cor:eot-as-dsb}]
From our Lemma~\ref{ot-dsb-link} and Corollary \ref{cor:eqv-objectives} it follows that
$$
{\beta^* \! \in \! \argsup_{\beta} \!\!\!\! \inf_{T_f \in \mathcal{D}(\sP_0)} \!\!\!\! \mathcal{L}(\beta, T_f)}\Leftrightarrow{\beta^* \! \in \! \argsup_{\beta} \!\!\!\! \inf_{\pi \in \Pi(\sP_0)}\!\!\!\! \widetilde{\mathcal{L}}(\beta, \pi)},
$$
i.e., both maximin problems share the same optimal $\beta^*$.
Thanks to our Lemma~\ref{ot-dsb-link}, we already know that the process $T^{\beta^*}\in\mathcal{D}(\mathbb{P}_0)$ and the plan $\pi^{T^{\beta^*}}\in\Pi(\mathbb{P}_0)$ are the unique minimizers of problems
$$\inf_{T_f \in \mathcal{D}(\sP_0)} \!\!\!\! \mathcal{L}(\beta^{*}, T_f)= \inf_{\pi \in \Pi(\sP_0)}\!\!\!\! \widetilde{\mathcal{L}}(\beta^{*}, \pi),$$
respectively. Therefore, $T_{f^{*}}=T^{\beta^{*}}$ and, in particular, $\pi^{T_{f^{*}}}=\pi^{T^{\beta^{*}}}$.
Moreover, since $(\beta^{*}, \pi^{T_{f^{*}}})$ is an optimal saddle point for $\widetilde{\mathcal{L}}$, from Lemma~\ref{entropic-ot-solve-relaxex-problem} we conclude that $\pi^{T_{f^{*}}}=\pi^{*}$, i.e., $\pi^{T_{f^{*}}}$ is the \textbf{EOT plan} between $\mathbb{P}_0$ and $\mathbb{P}_1$. In particular, $\pi^{T_{f^{*}}}\in\Pi(\mathbb{P}_0,\mathbb{P}_1)$ which also implies that $T_{f^{*}}\in\mathcal{D}(\mathbb{P}_0,\mathbb{P}_1)$. The last step is to derive
\begin{gather}
    \mathcal{L}^{*} = \mathcal{L}(\beta^*, T_{f^*})
    = \text{KL}(T_{f^*} || W^{\epsilon})
    + \underbrace{\int_{\mathcal{Y}} \beta^*(y) d\sP_1(y) - \int_{\mathcal{Y}} \beta^*(y) \overbrace{d\pi_1^{T_{f^*}}(y)}^{=d\sP_1(y)}}_{=0 \text{ since } T_{f^*} \in \mathcal{D}(\sP_0, \sP_1)} = \text{KL}(T_{f^*} || W^{\epsilon}). \nonumber
\end{gather}
which concludes that $T_{f^*}$ is the \textbf{solution to SB} \eqref{eq:general-SB}.
\end{proof}
\begin{proof}[Proof of Theorem \ref{thm:error-bounds-theorem}]
\underline{Part 1.}
From Lemma~\ref{thm:inner-problem} and Corollary~\ref{cor:inner-problem-sol-is-diffusion} it follows that that $\inf_{T_f} \mathcal{L}(\hat{\beta}, T_{f})$ has the unique minimizer $T^{\widehat{\beta}}$ whose conditional distributions are $T^{\widehat{\beta}}_{|x,y} = W^{\epsilon}_{|x,y}$. Therefore,
\begin{eqnarray} 
    \epsilon_1 =  \mathcal{L}(\hat{\beta}, T_{\hat{f}}) - \inf_{T_{f}} \mathcal{L}(\hat{\beta}, T_{f}) = 
    \nonumber
    \\
    \big[\text{KL}(T_{\hat{f}} || T^{\widehat{\beta}}) - C_{\hat{\beta}} + \int_{\mathcal{Y}}\widehat{\beta}(y) d\sP_1(y)\big] - \big[- C_{\widehat{\beta}} + \int_{\mathcal{Y}}\widehat{\beta}(y) d\sP_1(y)\big] = \text{KL}(T_{\hat{f}}|| T^{\widehat{\beta}}). \label{proof:thm3-eps-1}
\end{eqnarray}

\underline{Part 2.} Now we consider $\epsilon_2$. We know that
\begin{eqnarray}
    \mathcal{L}^* =
    \text{KL}(T_{f^*} || W^{\epsilon}) = 
    \nonumber
    \\
    \text{KL}(\pi^{T_{f^*}}|| \pi^{W^{\epsilon}}) + \int_{\mathcal{X} \times \mathcal{Y}} \text{KL}(T_{f^{*}|x,y} || W^{\epsilon}_{|x,y})d\pi^{T_{f^{*}}}(x,y) = \text{KL}(\pi^{T_{f^*}}||\pi^{W^{\epsilon}}).
    \nonumber
\end{eqnarray}
From Lemma~\ref{thm:inner-problem} and Corollary~\ref{cor:inner-problem-sol-is-diffusion}, we also know that
$$
\inf_{T_{f}} \mathcal{L}(\hat{\beta}, T_{f}) = - C_{\hat{\beta}} + \int \hat{\beta}(y)d\sP_1(y).
$$
Therefore:
\begin{eqnarray}
    \epsilon_2 = \mathcal{L}^* - \inf_{T_f}\mathcal{L}(\hat{\beta}, T_{f^*}) = \text{KL}(\pi^{T_{f^*}}|| \pi^{W^{\epsilon}}) + C_{\hat{\beta}} - \int \hat{\beta}(y) d\sP_1(y) = 
    \nonumber
    \\
    \text{KL}(\pi^{T_{f^*}}|| \pi^{W_{\epsilon}}) + \int_{\mathcal{X}}  \log C_{\hat{\beta}}^x d\sP_0(x) - \int \hat{\beta}(y) d\sP_1(y) = 
    \nonumber
    \\
    \int_{\mathcal{X}} \text{KL}(\pi^{T_{f^*}}(\cdot|x)|| \pi^{W^{\epsilon}}(\cdot|x)) d\sP_0(x) + \int_{\mathcal{X}}  \log C_{\hat{\beta}}^x d\sP_0(x) - \int \hat{\beta}(y) d\sP_1(y) = 
    \nonumber
    \\
    \int_{\mathcal{X}} \text{KL}(\pi^{T_{f^*}}(\cdot|x)|| \pi^{W^{\epsilon}}(\cdot|x)) d\sP_0(x) + \int_{\mathcal{X}}  \log C_{\hat{\beta}}^x d\sP_0(x) - \int \hat{\beta}(y) d\pi^{T_{f^*}}(y|x)d\sP_0(x) = 
    \nonumber
    \\
    \int_{\mathcal{X}} \Big\{\text{KL}(\pi^{T_{f^*}}(\cdot|x)|| \pi^{W^{\epsilon}}(\cdot|x)) +  \log C_{\hat{\beta}}^x - \int_{\mathcal{Y}} \hat{\beta}(y) d\pi^{T_{f^*}}(y|x) \Big\} d\sP_0(x) = 
    \nonumber
    \\
    \int_{\mathcal{X}} \Big\{\text{KL}(\pi^{T_{f^*}}(\cdot|x)|| \pi^{T^{\hat{\beta}}}(\cdot|x)) - \log C_{\hat{\beta}}^x +  \log C_{\hat{\beta}}^x \Big\} d\sP_0(x) = 
    \nonumber
    \\
    \int_{\mathcal{X}} \text{KL}(\pi^{T_{f^*}}(\cdot|x)|| \pi^{T^{\hat{\beta}}}(\cdot|x)) d\sP_0(x) = \text{KL}(\pi^{T_{f^*}}|| \pi^{T^{\hat{\beta}}}) =
    \nonumber
    \\
    \text{KL}(\pi^{T_{f^*}}|| \pi^{T^{\hat{\beta}}}) + \underbrace{\int_{\mathcal{X} \times \mathcal{Y}} \text{KL}(T_{f^*|x,y}|| T^{\hat{\beta}}_{|x,y})d\pi^{T_{f^*}}(x,y)}_{=0 \text{, since } T_{f^*|x,y} = T^{\hat{\beta}}_{|x,y} = W^{\epsilon}_{|x,y}} = \text{KL}(T_{f^*}|| T^{\widehat{\beta}}). \label{proof:thm3-eps-2}
\end{eqnarray}
Thus, we obtain $\epsilon_2 = \text{KL}(T_{f^*}||T^{\widehat{\beta}}).$

\underline{Part 3.} By summing \eqref{proof:thm3-eps-1} and \eqref{proof:thm3-eps-2} and using the Pinsker inequality, we obtain
\begin{eqnarray}
    \epsilon_1 + \epsilon_2 = \text{KL}(T_{\hat{f}}|| T^{\widehat{\beta}}) + \text{KL}(T_{f^*}|| T^{\widehat{\beta}}) \geq
2\rho_{\text{TV}}^2(T_{\hat{f}}, T^{\widehat{\beta}}) + 2 \rho_{\text{TV}}^2(T_{f^*},  T^{\widehat{\beta}}) \geq
\nonumber
\\
    \big[\rho_{\text{TV}}(T_{\hat{f}},  T^{\widehat{\beta}}) + \rho_{\text{TV}}(T_{f^{*}},  T^{\widehat{\beta}})\big]^2 \geq \rho_{\text{TV}}^2(T_{\hat{f}}, T_{f^*}). \label{proof:triangle-ineq}
\end{eqnarray}
Here we use the triangle inequality in line \eqref{proof:triangle-ineq}. Therefore, $\rho_{\text{TV}}(T_{\hat{f}}, T_{f^*}) \leq \sqrt{\epsilon_1 + \epsilon_2}$.

\underline{Part 4.} By summing \eqref{proof:thm3-eps-1} and \eqref{proof:thm3-eps-2} and using the Pinsker inequality, we obtain
\begin{eqnarray}
    \epsilon_1 + \epsilon_2 = \text{KL}(T_{\hat{f}}|| T^{\widehat{\beta}}) + \text{KL}(T_{f^{*}}|| T^{\widehat{\beta}}) = 
    \nonumber
    \\
    \text{KL}(\pi^{T_{\hat{f}}}|| \pi^{T^{\widehat{\beta}}}) + \int_{\mathcal{X} \times \mathcal{Y}} \text{KL}(T_{\hat{f}|x,y} || T^{\widehat{\beta}}_{|x,y})d\pi^{T_{\hat{f}}}(x,y) + 
    \nonumber
    \\
    \text{KL}(\pi^{T_{f^*}}|| \pi^{T^{\widehat{\beta}}}) + \int_{\mathcal{X} \times \mathcal{Y}} \text{KL}(T_{f^*|x,y} || T^{\widehat{\beta}}_{|x,y})d\pi^{T_{f^*}}(x,y) \geq 
    \nonumber
    \\
    \text{KL}(\pi^{T_{\hat{f}}}|| \pi^{T^{\widehat{\beta}}}) + \text{KL}(\pi^{T_{f^*}}|| \pi^{T^{\widehat{\beta}}}) \geq 
    2 \rho_{\text{TV}}^2(\pi^{T_{\hat{f}}},\pi^{T^{\widehat{\beta}}}) + 2\rho_{\text{TV}}^2(\pi^{T_{f^*}},\pi^{T^{\widehat{\beta}}}) \geq 
    \nonumber
    \\
    \big[\rho_{\text{TV}}(\pi^{T_{\hat{f}}}, \pi^{T^{\widehat{\beta}}}) + \rho_{\text{TV}}(\pi^{T_{f^*}},\pi^{T^{\widehat{\beta}}})\big]^2 \geq \rho_{\text{TV}}^2(\pi^{T_{\hat{f}}}, \pi^{T_{f^*}})
    \nonumber.
\end{eqnarray}
Thus, $\rho_{\text{TV}}(\pi^{T_{f^*}}, \pi^{T^{\widehat{\beta}}}) \leq \sqrt{\epsilon_1 + \epsilon_2}$.
\end{proof}

\section{Euler-Maruyama}\label{app:euler-maruyama}
In our Algorithm~\ref{neural-schrodinger-bridge}, at both the training and the inference stages, we use the Euler-Maruyama Algorithm~\ref{euler-maruyama} to solve SDE. 

\begin{algorithm}[h]
\caption{Euler-Maruyama algorithm} \label{euler-maruyama}
\SetKwInOut{Input}{Input}\SetKwInOut{Output}{Output}
\Input{batch of initial states $X_0$ at time moment $t=0$; \\
SDE drift network  $f_{\theta}$ : $\sR^D \times [0,1] \rightarrow \sR^D$; \\
number of steps for the SDE solver $N \geq 1$; \\
noise variance $\epsilon \geq 0$. \\
}
\Output{batches $\{X_n\}_{n=0}^N$ of intermediate states at $t=\frac{n}{N}$ simulating the proccess $dX_t \! = \! f(X_t, t)dt \!+\! \sqrt{\epsilon}dW_t$; \\
batches $\{f_n\}_{n=0}^{N}$ of drift values $f(X_n, t_n)$ at $t=\frac{n-1}{N}$ simulating the process;}
$\Delta t \leftarrow \frac{1}{N}$ \;
\For{$t=1,2,\dots, N$}{
    \For{$i=1,2,\dots, |X_0|$}{
        Sample noise $W$ from $\mathcal{N}(0, I)$ \;
        $f_{t-1, i} \leftarrow f(X_{t-1}, t-1)$ \;
        $X_{t, i} \leftarrow X_{t-1, i} + f_{t-1, i}\Delta t + \sqrt{\epsilon \Delta t}W$ \;
    }
}
\end{algorithm}

\section{Drift Norm Constant Multiplication Invariance}\label{sec:app-constant-multiply}

Our Algorithm~\ref{neural-schrodinger-bridge} aims to solve the following optimization problem: 
$$
\sup_{\beta} \inf_{T_f \in \mathcal{D}(\sP_0)} \underbrace{\bigg\lbrace\mathbb{E}_{T_f}[\int_{0}^1 C ||f(X_t, t)||^2 dt] + \int_{\mathcal{Y}} \beta(y) d\sP_1(y) - \int_{\mathcal{Y}} \beta(y) d\sP_1^{T_f}(y)\bigg\rbrace}_{\defeq \mathcal{L}^{C}(\beta,T_{f})},
$$
with $C=1$. At the same time, we use $C=\frac{1}{2\epsilon}$ in our theoretical derivations \eqref{maxmin-lagrange-obj}. We emphasize that the actual value of $C>0$ \textbf{does not affect} the optimal solution $T_{f^{*}}$ to this problem. Specifically, if $(\beta^{*}, T_{f^{*}})$ is the optimal point for the problem with $C=1$, then $(\widetilde{C}\beta^{*}, T_{f^{*}})$ is the optimal point for $C=\widetilde{C}$. Indeed, for a pair $(\beta,T_{f})$ it holds that

\begin{eqnarray}
\mathcal{L}^{1}(\beta,T_{f})=
 \mathbb{E}_{T_f}[\int_{0}^1 ||f(X_t, t)||^2 dt] + \int_{\mathcal{Y}} \beta(y) d\sP_1(y) - \int_{\mathcal{Y}} \beta(y) d\sP_1^{T_f}(y) =
\nonumber
\\
\frac{1}{\widetilde{C}}\bigg\lbrace\mathbb{E}_{T_f}[\int_{0}^1 \widetilde{C} ||f(X_t, t)||^2 dt] + \int_{\mathcal{Y}} \widetilde{C} \beta(y) d\sP_1(y) - \int_{\mathcal{Y}} \widetilde{C} \beta(y) d\sP_1^{T_f}(y) \bigg\rbrace=\nonumber
\\
\frac{1}{\widetilde{C}}\bigg\lbrace\mathbb{E}_{T_f}[\int_{0}^1 \widetilde{C} ||f(X_t, t)||^2 dt] + \int_{\mathcal{Y}} \widetilde{\beta}(y) d\sP_1(y) - \int_{\mathcal{Y}} \widetilde{\beta}(y) d\sP_1^{T_f}(y)\bigg\rbrace=
\frac{1}{\widetilde{C}}\mathcal{L}^{\widetilde{C}}(\widetilde{\beta},T_{f}),
\end{eqnarray}
where we use $\widetilde{\beta}\defeq \widetilde{C}\beta$. Hence problems $\sup_{\beta}\inf_{T_{f}}\mathcal{L}^{1}(\beta,T_{f})$ and $\sup_{\widetilde{\beta}}\inf_{T_{f}}\mathcal{L}^{\widetilde{C}}(\widetilde{\beta},T_{f})$ can be viewed as \textbf{equivalent} in the sense that one can be derived one from the other via the change of variables and multiplication by $\widetilde{C}>0$. For completeness, we also note that the change of variables $\beta\leftrightarrow \widetilde{\beta}$ actually preserves the functional class of $\beta$, i.e., $\beta\in\mathcal{C}_{b,2}(\mathcal{Y})$ $\Longleftrightarrow$ $\widetilde{\beta}\in\mathcal{C}_{b,2}(\mathcal{Y})$.

For convenience, we get rid of dependence on $\epsilon$ in the objective \eqref{maxmin-lagrange-obj} and consider $\mathcal{L}^{1}$ for optimization, i.e., use $C=1$ in Algorithm~\ref{neural-schrodinger-bridge}. Still the dependence on $\epsilon$ remains in $\sup_{\beta}\inf_{T_{f}}\mathcal{L}^{1}(\beta,T_{f})$ as $T_{f}\in\mathcal{D}(\mathbb{P}_0)$ is a diffusion process with volatility $\epsilon$. Interestingly, this point of view (optimizing $\mathcal{L}^{1}$ instead of $\mathcal{L}^{\frac{1}{2\epsilon}}$) \textbf{technically} allows to consider even $\epsilon=0$. In this case, the optimization is performed over \textit{deterministic} trajectories $T_{f}$ determined by the velocity field $f(X_{t},t)$. The problem $\sup_{\beta}\inf_{T_{f}}\mathcal{L}^{1}(\beta,T_{f})$ may be viewed as a saddle point reformulation of the \textbf{unregularized} OT with the quadratic cost in the \textit{dynamic} form, also known as the Benamou-Brenier formula \cite[\wasyparagraph 6.1]{santambrogio2015optimal}. This particular case is out of scope of our paper (it is not EOT/SB) and we do not study the properties of $\mathcal{L}^{1}$ in this case. However, for completeness, we provide experimental results for $\eps=0$.

\section{ENOT for Toy Experiments and High-dimensional Gaussians}
\label{sec-extra-experiments}
In 2D toy experiments, we consider 2 tasks: $\textit{Gaussian} \rightarrow \textit{8 gaussians}$ and $\textit{Gaussian} \rightarrow \textit{Swiss roll}$. Results for the last one (Figure~\ref{swiss-roll}) are qualitatively similar to results of the first one (Figure~\ref{8-gaussians}), which we discussed earlier (\wasyparagraph{\ref{sec-toy-2d-exp}}). For both tasks, we parametrize the SDE drift function in Algorithm~\ref{neural-schrodinger-bridge} by a feedforward neural network $f_{\theta}$ with 3 inputs, 3 linear layers (100 hidden neurons and ReLU activations) and 2 outputs. As inputs, we use 2 coordinates and time value $t$ (as is). Analogically, we parametrize the potential by a feedforward neural network $\beta_{\phi}$ with 2 inputs, 3 linear layers (100 hidden neural and ReLU activations) and 2 outputs. In all the cases, we use ${N=10}$ discretization steps for solving SDE by Euler-Maruyama Algorithm~\ref{euler-maruyama}, Adam with $\text{lr}=10^{-4}$, batch size 512. We train the model for $20000$ total iterations of $\beta_{\phi}$, and on each of them, we do $K_f = 10$ updates for the SDE drift function $f_{\theta}$. 

\begin{figure*}[!h]
\begin{subfigure}[b]{0.245\linewidth}
\centering
\includegraphics[width=0.995\linewidth]{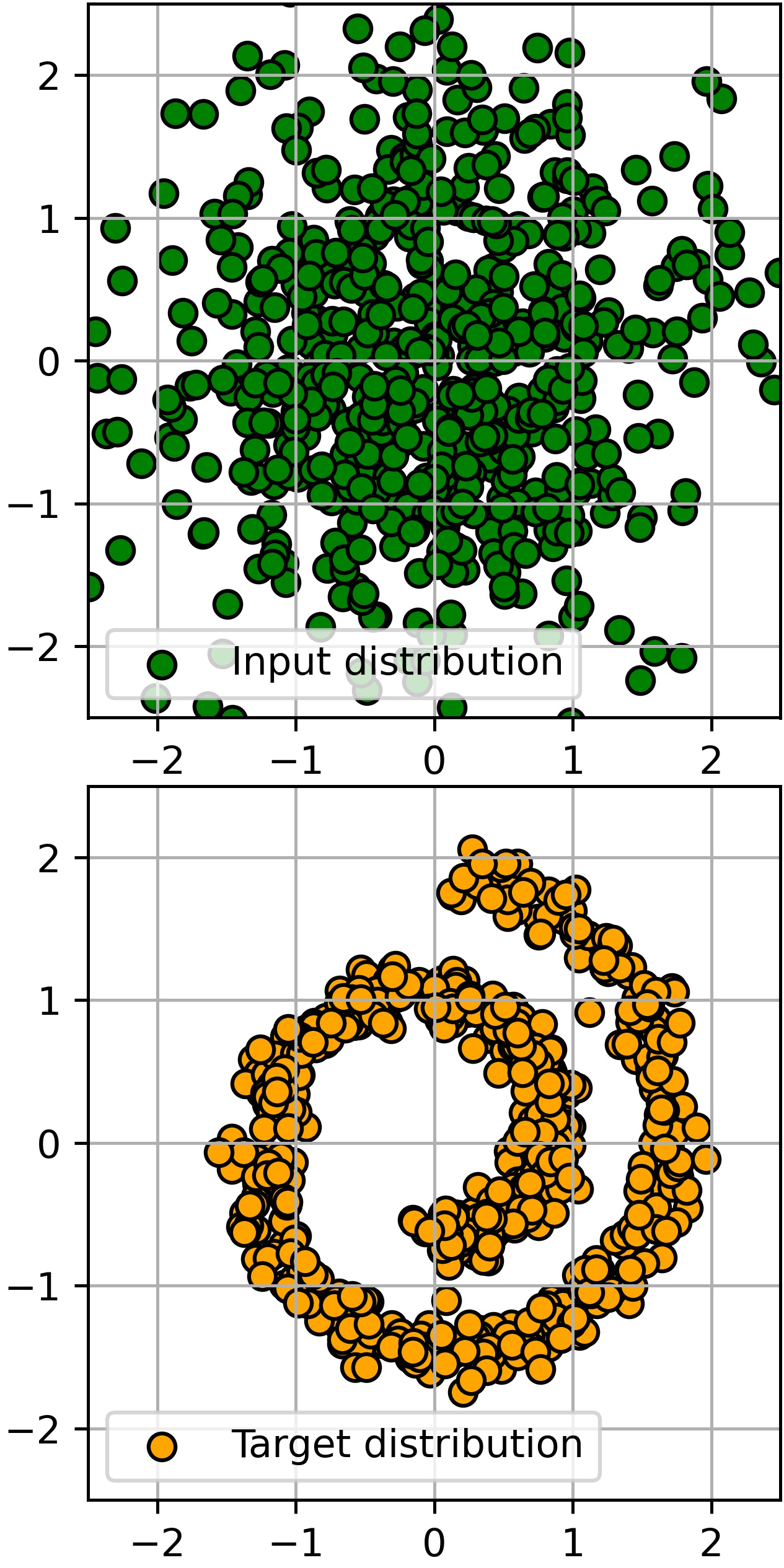}
\caption{\vspace{-1mm}\centering \scriptsize Input and target samples}
\vspace{-1mm}
\end{subfigure}
\vspace{-1mm}\hfill\begin{subfigure}[b]{0.245\linewidth}
\centering
\includegraphics[width=0.995\linewidth]{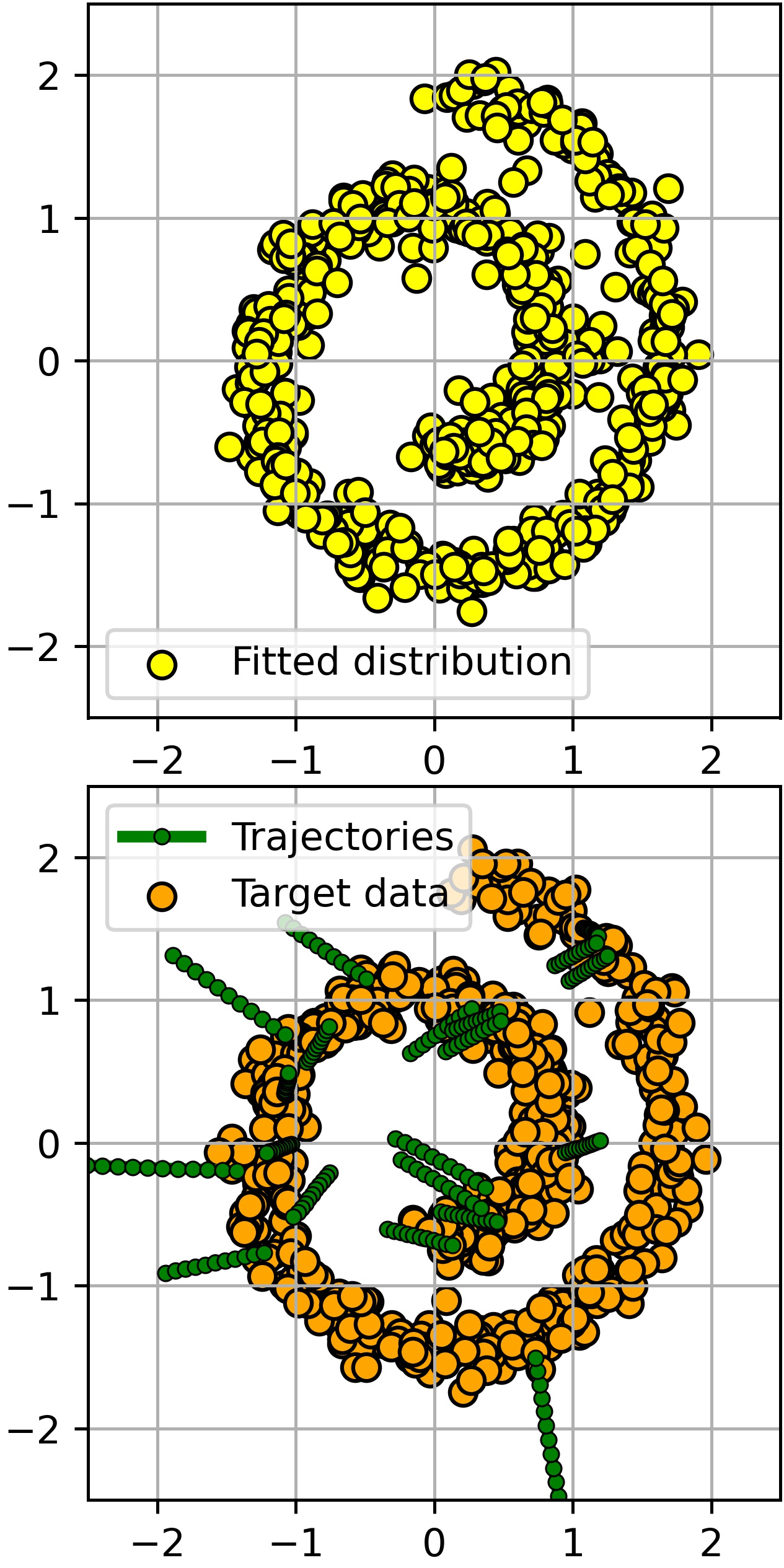}
\caption{\vspace{-1mm}\centering \scriptsize ENOT \textbf{(ours)}, $\epsilon=0$}
\vspace{-1mm}
\end{subfigure}
\hfill\begin{subfigure}[b]{0.245\linewidth}
\centering
\includegraphics[width=0.995\linewidth]{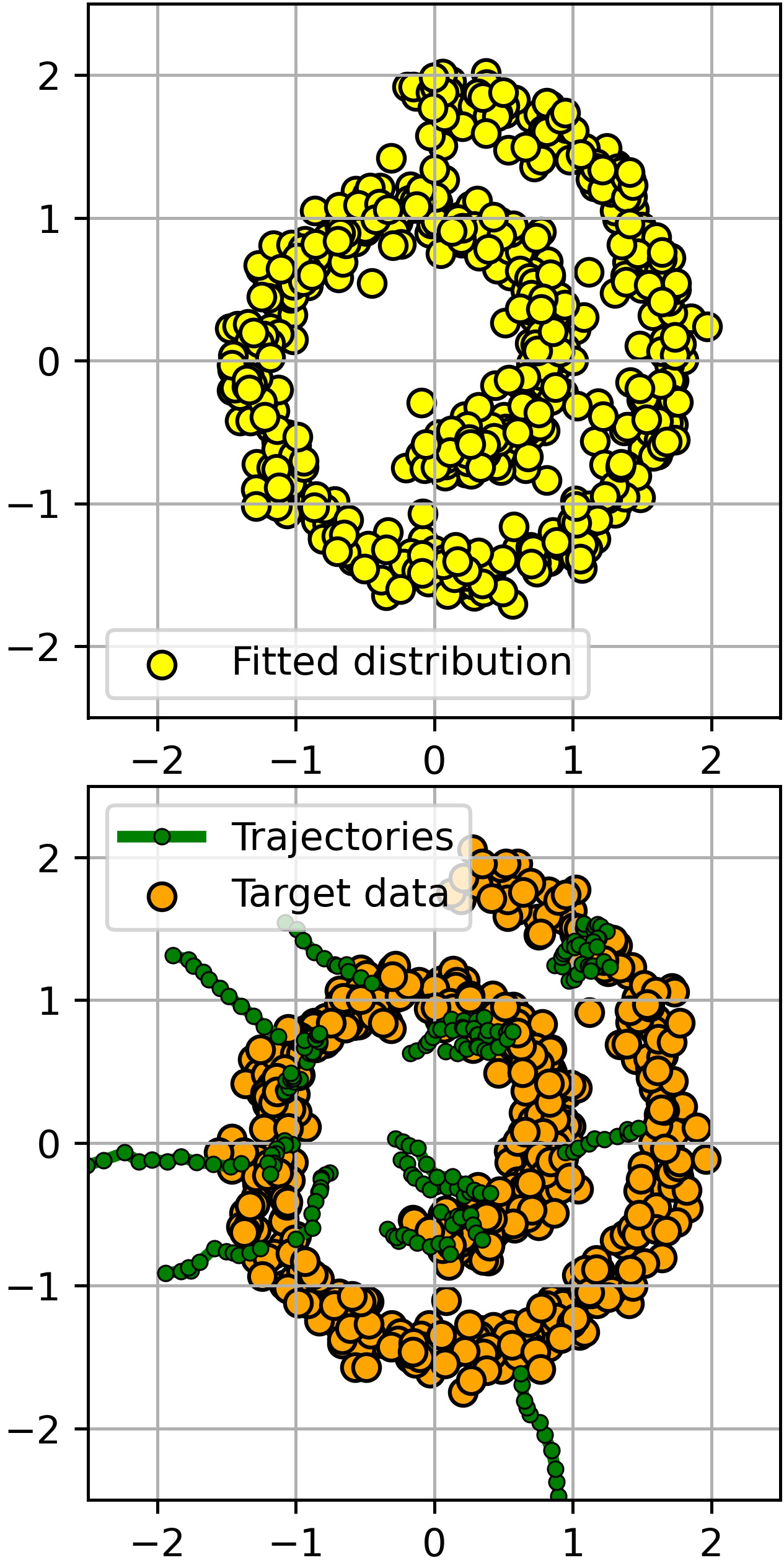}
\caption{\vspace{-1mm}\centering \scriptsize ENOT \textbf{(ours)}, $\epsilon=0.01$}
\vspace{-1mm}
\end{subfigure}
\hfill\begin{subfigure}[b]{0.245\linewidth}
\centering
\includegraphics[width=0.995\linewidth]{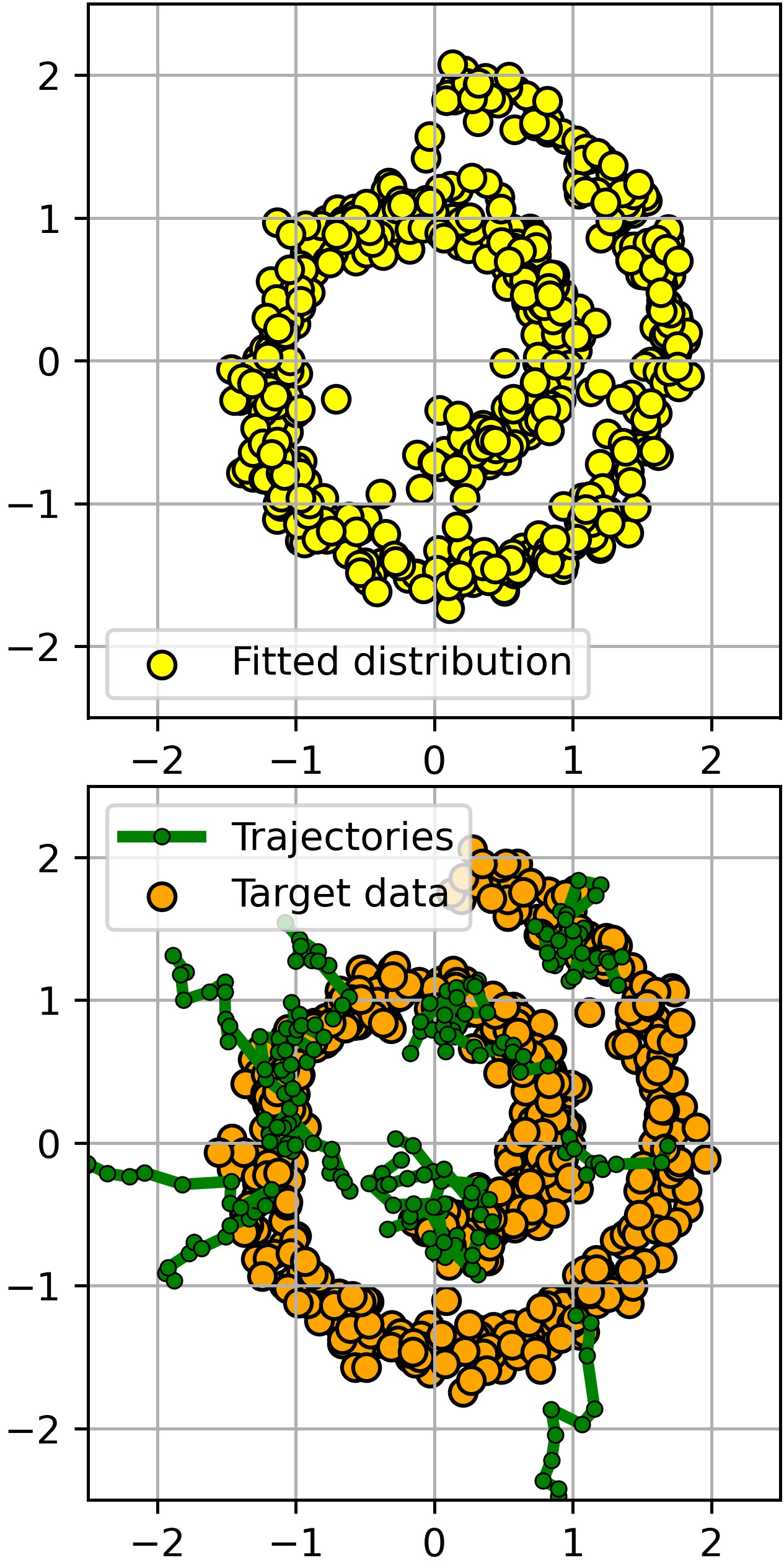}
\caption{\vspace{-1mm}\centering \scriptsize ENOT \textbf{(ours)}, $\epsilon=0.1$}
\vspace{-1mm}
\end{subfigure}
\vspace{2mm}\caption{Gaussian $\rightarrow$ Swiss roll, learned stochastic process with ENOT \textbf{(ours)}.}
\label{swiss-roll}
\end{figure*}

In the experiments with high-dimensional Gaussians, we use exactly the same setup as for toy 2D experiments but chose $N=200$ discretization steps for SDE, all hidden sizes in neural networks are 512, and we train our model for 10000 iterations. \color{black} To illustrate the stability of the algorithm, we provide the plot of $\text{BW}_2^2\text{-UVP}$ (\%) between the ground truth EOT plan $\pi^{*}$ and the learned plan $\pi$ of ENOT during training for $DIM=128$ in Figure~\ref{fig:enot-stability}.

\begin{figure*}[!h]
\vspace{-2mm}
\centering
\includegraphics[width=0.6\linewidth]{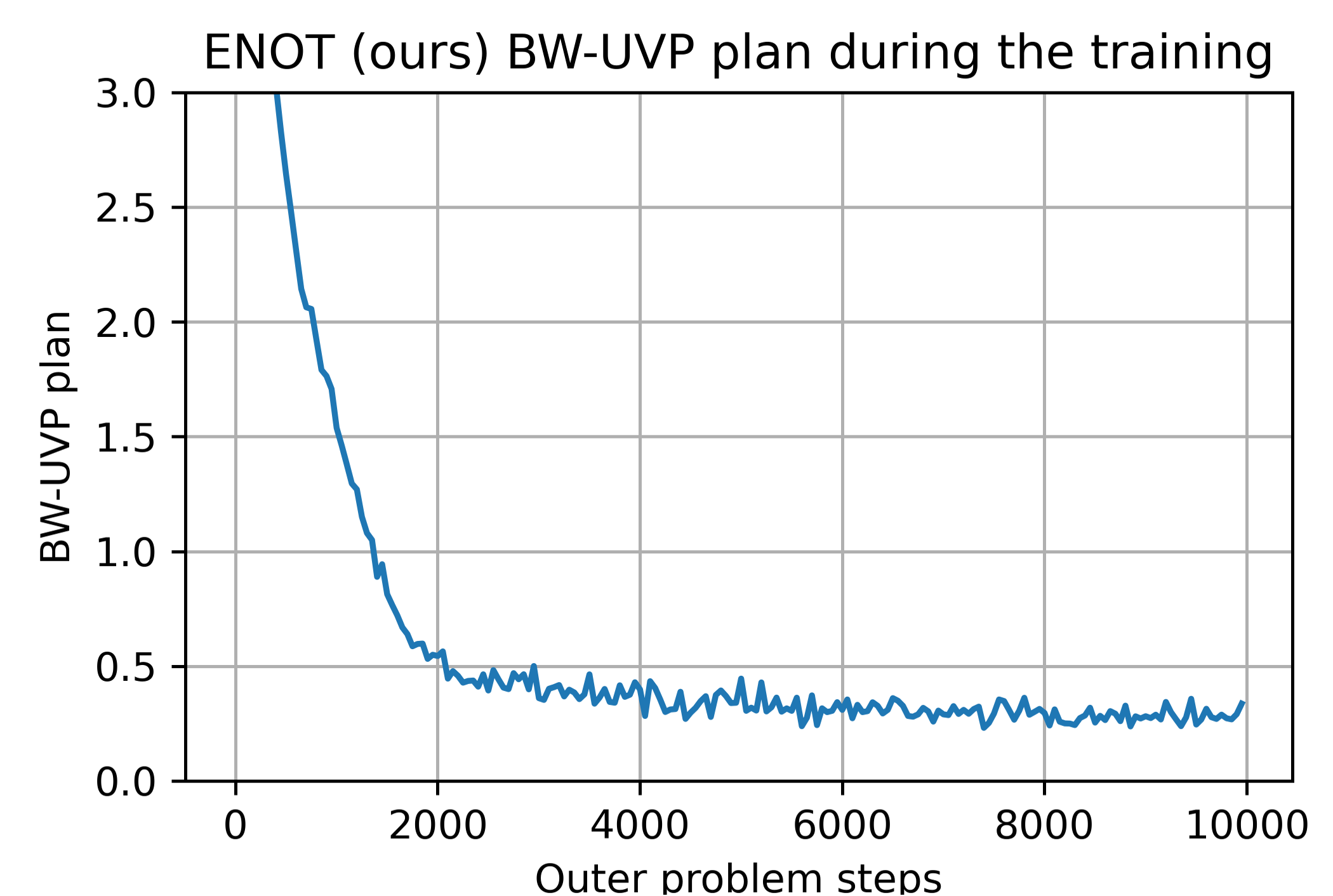}
\caption{\centering $\text{BW}_2^2\text{-UVP}\downarrow$ (\%) between the the EOT plan $\pi^{*}$ and the learned plan $\pi$ of ENOT and MLE-SB during the training (DIM = 128).} \label{fig:enot-stability}
\vspace{-5mm}
\end{figure*}
\color{black}
\section{ENOT for Colored MNIST and Unpaired Super-resolution of Celeba Faces}\label{sec-appendix-images-tasks}

\color{black}
For the image tasks (\wasyparagraph{\ref{sec-colored-mnist}}, \wasyparagraph{\ref{sec-celeba}}), we find out that using the following reparametrization of Euler-Maruyama Algorithm~\ref{euler-maruyama} considerably improves the quality of our Algorithm~\ref{neural-schrodinger-bridge}. In the Euler-Maruyama Algorithm \ref{euler-maruyama}, instead of using a neural network to parametrize drift function $f(X_t, t)$ and calculating the next state as ${X_{t+1} = X_{t} + f(X_t, t)\Delta t + \sqrt{\epsilon \Delta t}}$, we parametrize $g(X_t, t) = X_{t} + f(X_t, t)\Delta t$ by a neural network $g_{\theta}$, and calculate the next state as $X_{t+1} = g_{\theta}(X_t, t) + \sqrt{\epsilon \Delta t}$. In turn, the drift function is given by $f(X_t, t) = \frac{1}{\Delta t}g(X_t, t) - X_{t}$. Also, we do not add noise at the last step of the Euler-Maruyama simulation because we find out that it provides better empirical performance.
\color{black}

\begin{figure*}[!t]
    \vspace{-6mm}
     \centering
     \begin{subfigure}[b]{0.85\linewidth}
        \includegraphics[width=0.995\linewidth]{{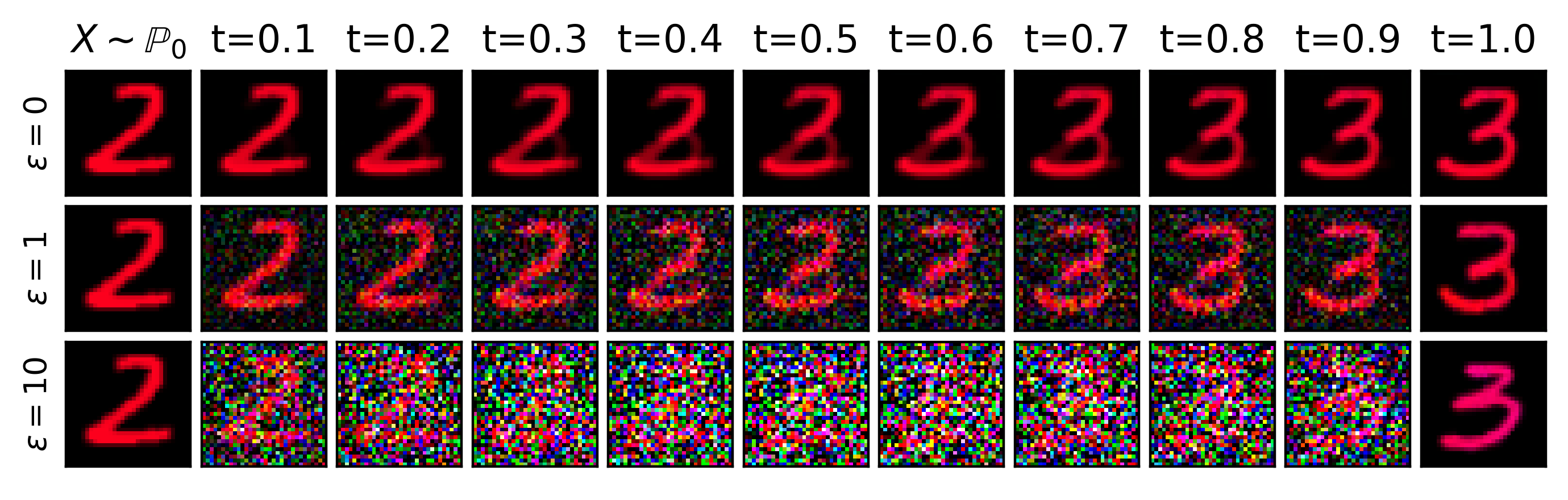}}
     \end{subfigure}
     \caption{Trajectories from our learned ENOT \textbf{(ours)} for colored MNIST for different $\epsilon$.}
     \label{fig:MNIST-trajectories}
\end{figure*}

We use WGAN-QC discriminator’s ResNet architecture \footnote{github.com/harryliew/WGAN-QC} for the potential $\beta$.  We use UNet \footnote{github.com/milesial/Pytorch-UNet} as $g_{\theta}(X_t, t)$ of SDE in our model. To condition it on $t$, we first obtain the embedding of $t$ by using the positional embedding \footnote{github.com/rosinality/denoising-diffusion-pytorch}. Then we add conditional instance normalization (CondIN) layers after each UNet’s upscaling block \footnote{github.com/kgkgzrtk/cUNet-Pytorch}. We use Adam with $\text{lr}=10^{-4}$, batch size 64 and 10:1 update ratio for $f_{\theta}/\beta_{\phi}$. For $\epsilon=0$ and $\epsilon=1$ our model converges in $\approx 20000$ iterations, while for $\epsilon=10$ it takes $\approx 70000$ iteration to convergence. The last setup takes more iterations to converge because adding noise with higher variance during solving SDE by Euler-Maruyama Algorithm~\ref{euler-maruyama} increases the variance of stochastic gradients.


In the unpaired super-resolution of Celeba faces, we use the same experimental setup as for the colored MNIST experiment. It takes $\approx 40000$ iterations for $\epsilon=0$ and $\approx 70000$ iterations for $\epsilon=1$ and $\epsilon=10$ to converge. \color{black}In Figures~\ref{fig:MNIST-trajectories}, \ref{fig:celeba-trajectories} we present trajectories provided by our algorithm for Colored MNIST and Celeba experiments.
\color{black}

\textbf{Computational complexity.}
In the most challenging task (\wasyparagraph\ref{sec-celeba}), ENOT converges in one week on $2 \times$ A100 GPUs.

\color{black}

\section{Details of the baseline methods}\label{baseline-details}

In this section, we discuss details of the baseline methods with which we compare our method.

\subsection{Gaussian case (\wasyparagraph \ref{sec-exp-gauss}).}

\textbf{SCONES} \cite{daniels2021score}. We use the code from the authors' repository 
\begin{center}
\url{https://github.com/mdnls/scones-synthetic}
\end{center}
for their evaluation in the Gaussian case. We employ their configuration \texttt{blob/main/config.py}. 

\textbf{LSOT} \cite{seguy2018large}. We use the part of the code of SCONES corresponding to learning dual OT potentials \texttt{blob/main/cpat.py} and the barycentric projection \texttt{blob/main/bproj.py} in the Gaussian case with configuration \texttt{blob/main/config.py}.

\textbf{FB-SDE-J} \cite{chen2021likelihood}. We utilize the official code from 
\begin{center}
\url{https://github.com/ghliu/SB-FBSDE}
\end{center}
with their configuration \texttt{blob/main/configs/default\_checkerboard\_config.py} for the checkerboard-to-noise toy experiment, changing the number of steps of dynamics from 100 to 200 steps. Since their hyper-parameters are developed for their 2-dimensional experiments, we increase the number of iterations for dimensions 16, 64 and 128 to 15 000.

\textbf{FB-SDE-A} \cite{chen2021likelihood}. We also take the code from the same repository as above. We base our configuration on the authors' one (\texttt{blob/main/configs/default\_moon\_to\_spiral\_config.py}) for the moon-to-spiral experiment. As earlier, we increase the number of steps of dynamics up to 200. Also, we change the number of training epochs for dimensions 16, 64 and 128 to 2,4 and 8 correspondingly.

\textbf{DiffSB \cite{de2021diffusion}}. We utilize  the official code from 
\begin{center}
    \url{https://github.com/JTT94/diffusion_schrodinger_bridge}
\end{center}
with their configuration \texttt{blob/main/conf/dataset/2d.yaml} for toy problems. We increase the amount of steps of dynamics to 200 and the number of steps of IPF procedure for dimensions 16, 64 and 128 to 30, 40 and 60, respectively. 

\textbf{MLE-SB} \cite{vargas2021solving}. We use the official code from 
\begin{center}
\url{https://github.com/franciscovargas/GP_Sinkhorn} 
\end{center}
with hyper-parameters from \texttt{blob/main/notebooks/2D Toy Data/2d\_examples.ipynb}. We set the number of steps to 200. As earlier, we increase the number of steps of IPF procedure for dimensions 16, 64 and 128 to 1000, 3500 and 5000, respectively. 

\subsection{Colored MNIST (\wasyparagraph \ref{sec-colored-mnist})}

\textbf{SCONES }\cite{daniels2021score}. In order to prepare a score-based model, we use the code from 
\begin{center}
\url{https://github.com/ermongroup/ncsnv2}
\end{center}
with their configuration \texttt{blob/master/configs/cifar10.yml}. Next, we utilize the code of SCONES from the official repository for their unpaired Celeba super-resolution experiment (\texttt{blob/main/scones/configs/superres\_KL\_0.005.yml}). We adapt it for 32$\times$32 ColorMNIST images instead of 64$\times$64 celebrity faces. 

\textbf{DiffSB} \cite{de2021diffusion}. We use  the official code  
with their configuration \texttt{blob/main/conf/mnist.yaml} adopting it for three-channel ColorMNIST images instead of one-channel MNIST digits.

\subsection{CelebA (\wasyparagraph \ref{sec-celeba})}

\textbf{SCONES} \cite{daniels2021score}. For the SCONES, we use their exact code and configuration from \texttt{blob/main/scones/configs/superres\_KL\_0.005.yml}. As for the score-based model for celebrity faces, we pick the pre-trained model from
\begin{center}
    \url{https://github.com/ermongroup/ncsnv2} 
\end{center}
It is the one used by the authors of SCONES in their paper.

\textbf{Augmented Cycle GAN} \cite{almahairi2018augmented}. We use the official code from
\begin{center}
\url{https://github.com/NathanDeMaria/AugmentedCycleGAN}
\end{center}
with their default hyper-parameters.

\textbf{ICNN} 
\cite{makkuva2020optimal}. We utilize the reworked implementation by
\begin{center}
\url{https://github.com/iamalexkorotin/Wasserstein2Benchmark}.
\end{center}
which is a non-minimax version \cite{korotin2019wasserstein} of ICNN-based approach \cite{makkuva2020optimal}. 
That is, we use \texttt{blob/main/notebooks/W2\_test\_images\_benchmark.ipynb} and only change the dataloaders.

\section{Mean-Field Games}\label{sec:app-mf-games}
This appendix discusses the relation between the Mean-Field Game problem and Schrödinger Brdiges.

\subsection{Intro to the Mean-Field game.}

Consider a game with infinitely many small players. At time moment $t=0$, they are distributed according to $X_0 \sim \rho_0$. Every player controls its behavior through drift $\alpha$ of the SDE:
$$
dX_t = \alpha(X_t, t, \rho_t) dt + \sqrt{2\nu}dW_t
$$
Here $\rho_t$ is the distribution of all the players at the time moment $t$. When we consider a specific player, we consider $\rho_t$ as a parameter.
Each player aims to minimize the quantity:
$$
\mathbb{E} [\int_{0}^{T} (L(X_t, \alpha_t, \rho_t) + f(X_t, \rho_t))dt + g(X_T, \rho_T)].
$$

Here $L(x, \alpha, \rho)$ is similar to the Lagrange function in physics and describes the cost of moving in some direction given the current position and the other players' distribution. The additional function $f(X_t, \rho_t)$ is interpreted as the cost of the player's interaction at coordinate $x$ with all the others. Now we can introduce the value function $\phi(x, t)$, which for position $x$ and start time $t$ returns the cost in case of the optimal control:
$$
\phi(x, t) \defeq \inf_{\alpha} \mathbb{E} [\int_{t}^{T} (L(X_t, \alpha_t, \rho_t) + f(X_t, \rho_t))dt + g(X_T, \rho_T)].
$$
Before considering the Mean-Field game, we need to define an additional function $H(x, p, \rho)$. It is similar to the Hamilton function and is defined as the Legendre transform of Lagrange function $L$:
$$H(x, p, \rho) \defeq \sup_{\alpha} [-\alpha p - L(x, \alpha, \rho)].$$

\textit{Mean-Field game implies finding the Nash equilibrium for all players of such the game.} It is known \cite{achdou2021mean} that the Nash equilibrium is the solution of the system of Hamilton-Jacobi-Bellman (HJB) and Fokker-Planck (FP) PDE equations. For two functions $H(x, p, \rho)$ and $f(x, \rho)$, Mean-Field game formulates as a system of two PDE with two constraints:
$$-\partial_t \phi -\nu \Delta \phi + H(x, \nabla \phi, \rho) = f(x, \rho) \text{ (HJB)}$$
$$-\partial_t \rho -\nu \Delta \rho - \textbf{div}(\rho \nabla_{p}H(x, \nabla\phi)) = 0 \text{ (FP)} $$
$$\text{s.t. }\rho(x, 0) = \rho_0\text{ , } \phi(x, T) = g(x, \rho(\cdot,  T))$$

The solution of this system is two functions $\rho(x, t)$ and $\phi(x, t)$, which describe all players' dynamics. Also, in Nash equilibrium, the specific player's behavior is described by the following SDE:
$$
dX_t = -\nabla_p H(X_t, \nabla \phi(X_t, t), \rho) dt + \sqrt{2\nu}dW_t.
$$

\subsection{Relation to our work.}
In recent work \cite{liu2022deep}, the authors show that the Schrodinger Bridger problem could be formulated as a Mean-Field game with hard constraints on distribution $\rho(\cdot, T) = \rho_{target}(\cdot, T)$ via choosing proper function $g(x, \rho(\cdot, T))$ such as:

\[
    g(x, \rho(\cdot, T))= 
\begin{cases}
    \infty,& \text{if } \rho(\cdot, T) \neq \rho_{target}(\cdot, T) \\
    0,              & \rho(\cdot, T) = \rho_{target}(\cdot, T)
\end{cases}
\]

Also, the authors proposed an extension of DiffSB \cite{liu2022deep} algorithm for the Mean-Field game problem.

In \cite{lin2021alternating}, the authors in their experiments \textbf{consider only soft constraints} on the target density. More precisely, they consider only simple constraints such as $g(x, \rho) = ||x - x_{target}||_2$, where $x_{target}$ is a given shared target point for every player, and every player is penalized for being far from this. Such soft constraint force players to have delta distribution at point $x_{target}$.

To solve the Mean-Field problem, the authors parameterize value function $\phi(x, t)$ by a neural network and use different neural network $N_{\theta}$ to sample from $\rho_t$. The authors penalize the violation of Mean-Field game PDEs for optimizing these networks. After the convergence, one can sample from the distribution $\rho_t$ by using neural network $N_{\theta}$. \textit{Approach \cite{lin2021alternating}  has the advantage that authors do not need to use SDE solvers, which require more steps with growing parameter $\nu$ of diffusion operator}. However, computation of Laplacian and divergence for high-dimensional spaces (e.g., space 12228-dimensional space of 3x64x64 images) at each iteration of the training step may be computationally hard, restricting the applicability of their method to large-scale setups.

\textbf{In our approach}, we initially work with the SDE:
$$
dX_t = \alpha(X_t, t, \rho_t) dt + \sqrt{2\nu}dW_t,
$$
which describes the player's behavior and use a neural network to parametrize the drift $\alpha$. We consider only \textbf{hard constraints} on the target distribution, $f(X_t, \rho_t) = 0$ and $L(X_t, \alpha_t, \rho_t) = \frac{1}{2}||\alpha_t||^2$ since this variant of Mean-Field game is also the particular case of Schrodinger Bridge problem and is equivalent to the entropic optimal transport. \textit{Since we do not need to compute Laplacian or divergence, our approach scales better with the dimension}. However, for high values of diffusion parameter $\nu$ (which is equal to the $\frac{1}{2} \epsilon$ in our notation, where $\epsilon$ is the entropic regularization strength), our approach needs more steps for accurate solving of the SDE to provide samples, as we mentioned in limitations.

\color{black}
\section{Extending ENOT to other costs}\label{sec:general-cost}
In the main text, we focus only on EOT with the quadratic cost $c(x,y)=\frac{1}{2}\|x-y\|^{2}$ which coincides with SB with the Wiener prior $W^{\epsilon}$. However, one could use a different prior $Q_v$ instead of $W^{\epsilon}$ in \eqref{eq:general-SB}: $$ Q_v: dX_t = v(X_t, t)dt + \sqrt{\epsilon}dW_t, $$ and solve the problem
$$ \inf_{T_f \in \mathcal{D}(\mathbb{P}_0, \mathbb{P}_1)} \text{KL}(T_f || Q_v) = \inf_{T_f \in \mathcal{D}(\mathbb{P}_0, \mathbb{P}_1)} \frac{1}{2\epsilon} \mathbb{E}_{T_f}[\int_{0}^1 ||f(X_t, t) - v(X_t, t)||^2 dt]. $$
Here we just use the known expression \eqref{eq:disintegration} for $\text{KL}(T_f|| Q_v)$ between two diffusion processes through their drift functions. Using the same derivation as in the main text \wasyparagraph{\ref{sec:sb}}, it can be shown that this new problem is equivalent to solving the EOT with cost $c(x,y) = -\log \pi^{Q_v}(y|x)$, where $\pi^{Q_v}(y|x)$ is a conditional distribution of the stochastic process $Q_{v}$ at time $t=1$ given the starting point $x$ at time $t=0$. For example, for $W^{\epsilon}$ (which we consider in the main text) we have 
$$c(x,y) = -\log \pi^{W^{\epsilon}}(y|x) = \frac{1}{2 \epsilon}(y-x)^T(y-x) + \text{Const},$$ i.e., we get the quadratic cost. Thus, using different priors for the Schrodinger bridge problem makes it possible to solve Entropic OT for other costs. We conjecture that most of our proofs and derivations can be extended to arbitrary prior process $Q_v$ just by slightly changing the minimax functional \eqref{maxmin-lagrange-obj}:
$$ \sup_{\beta} \inf_{T_{f}} \left(\frac{1}{2\epsilon} \mathbb{E}_{T_f}[\int_{0}^1 ||f(X_t, t) - v(X_t, t)||^2 dt] + \int_{\mathcal{Y}} \beta_{\phi}(y) d\mathbb{P}_1(y) - \int_{\mathcal{Y}} \beta_{\phi}(y) d\pi_1^{T_f}(y)\right). $$
We conduct a toy experiment to support this claim and consider $Q_v$ with $\epsilon=0.01$ and $v(x, t) = \nabla \log p(x)$, where $\log p(x)$ is a 2D distribution with a wave shape, see Figure~\ref{fig:general-prior}. Intuitively, it means that trajectories should be concentrated in the regions with a high density of $p$. In Figure e~\ref{fig:general-prior}, there the grey-scale color map represents the density of $p$, start points ($\mathbb{P}_0$) are green, target points ($\mathbb{P}_1$) are red, obtained trajectories are pink and mapped points are blue.

\begin{figure*}[!h]
\centering
\includegraphics[width=0.7\linewidth]{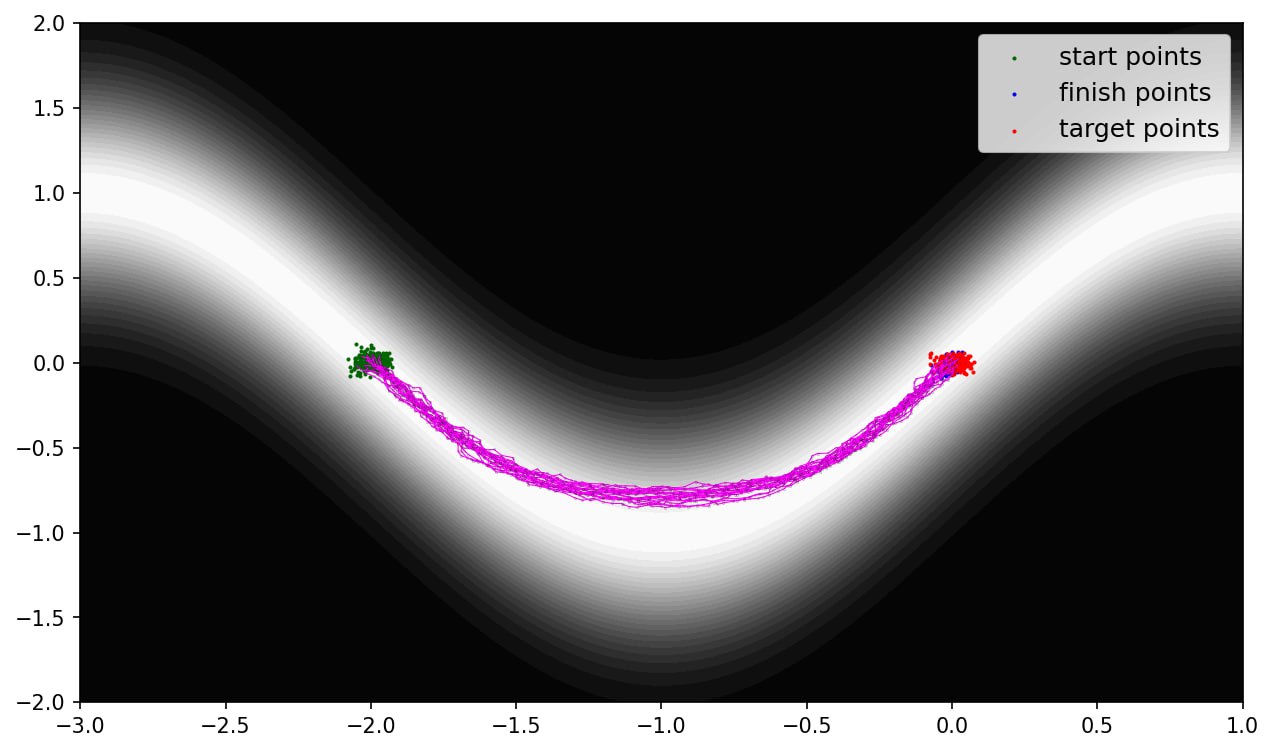}
\caption{\centering Toy example with ENOT (\textbf{ours}) for the complex prior $Q_v: dX_t = v(X_t, t)dt + \sqrt{\epsilon}dW_t$. }\label{fig:general-prior}
\vspace{-2mm}
\end{figure*}

\section{ENOT for the unregularized OT ($\epsilon=0$)}
\label{sec:comparison-on-benchmark}

Our proposed algorithm is designed to solve entropic OT and the equivalent SB problem. This implies that $\epsilon > 0$. Nevertheless, our algorithm \textit{technically} allows using even $\epsilon=0$, in which case it presumably computes the unegularized OT map for the quadratic cost. Here we present some empirical evidence supporting this claim as well some theoretical insights.

\textsc{Empirical evidence.} We consider the experimental setup with images from the continuous Wasserstein-2 benchmark \cite[\wasyparagraph 4.4]{korotin2021neural}. The images benchmark provides 3 pairs of distributions (Early, Mid, Late) for which the ground truth unregularized OT map for the quadratic cost is known by the construction. Hence, we may compare the map learned with our method ($\epsilon=0$) with the true one.

We train our method with $\epsilon=0$ on each of 3 benchmark pairs and present the quantitative results  in Table~\ref{fig:unreg-bench}. We use the same $\mathcal{L}^{2}\text{-UVP}$ metric \cite[\wasyparagraph 4.2]{korotin2021neural} as the authors of the benchmark. As the baselines, we include the results of MM:R method from \cite{korotin2021neural} and the method from \cite{amos2022amortizing}. Both methods are minimax and have some similarities with our approach. As we can see, ENOT with $\epsilon = 0$ works better than the MM:R solver but slightly underperforms compared to \cite{amos2022amortizing}. This evaluation demonstrates that our method recovers the unregularized OT map for the quadratic cost with the comparable quality to the existing saddle point OT methods.

\begin{table}[!h]
\vspace{1mm}
\centering
\normalsize
\begin{tabular}{ |c|c|c|c|c|c| } 
\hline
 Benchmark & Early & Mid & Late \\ 
 \hline
 \cite{korotin2021neural}* & $1.4$ & $0.4$ & $0.22$ \\ 
 \hline
  \cite{amos2022amortizing}* & $0.61$ & $0.20$  &  $0.09$  \\ 
 \hline
 ENOT (\textbf{ours}) & $0.77$ &  $0.21$  & $0.09$  \\ 
 \hline
\end{tabular}
\vspace{3mm}
\captionsetup{justification=centering}
 \caption{\normalsize Comparison on W2 benchmark. *Results are taken from \cite[Table 2]{amos2022amortizing}.}\label{fig:unreg-bench}
\vspace{-5mm}
\end{table}


\textsc{Theoretical insights.} We see that empirically our method with $\epsilon=0$ recovers the unregularized OT map. At the same time, this is not supported by our theoretical results as they work exclusively for $\epsilon>0$ and rely on the properties of the KL divergence. 

Overall, it seems like for $\epsilon=0$ our method yields a saddle point reformulation of the Benamou-Brenier (BB) \cite{benamou2000computational} problem which is also known as the dynamic version of the unregularized OT ($\epsilon=0$) with the quadratic cost. This problem can be formulated as follows:
\begin{equation}
    \inf_{T_{f}}\bigg\{ \frac{1}{2} \mathbb{E}_{T_f}[\int_{0}^1 ||f(X_t, t)||^2 dt]\bigg\}\hspace{4mm}{\text{s.t.}}\hspace{4mm} T_f: dX_t = f(X_t, t)dt, \quad X_0 \sim \sP_0, X_1 \sim \sP_1,
    \label{benamou-brenier}
\end{equation}
i.e., the goal is to find the process $T_{f}$ of the minimal energy which moves the probability mass of $\mathbb{P}_0$ to $\mathbb{P}_1$. BB \eqref{benamou-brenier} is very similar to DSB \eqref{dynamic-schrodinger} but there is no multiplier $\frac{1}{\epsilon}$, and the stochastic process $T_{f}$ is restricted to be deterministic ($\epsilon=0$). It is governed by a vector field $f$. Just like the DSB \eqref{dynamic-schrodinger} is equivalent to EOT \eqref{entropy-reg-ot}, it is known that BB \eqref{benamou-brenier} is \textbf{equivalent} to unregularized OT with the quadratic cost ($\epsilon=0$). Namely, the distribution $\pi^{T_{f^{*}}}$ is the unregularized OT plan between $\mathbb{P}_0$ and $\mathbb{P}_{1}$.

In turn, our Algorithm \ref{neural-schrodinger-bridge} for $\epsilon=0$ optimizes the following saddle point objective:
\begin{equation}
\sup_{\beta}\inf_{T_{f}}\mathcal{L}(\beta, T_f)\defeq\sup_{\beta}\inf_{T_{f}}\bigg\lbrace \frac{1}{2}\mathbb{E}_{T_f}[\int_{0}^1 ||f(X_t, t)||^2 dt] \!+ \!\!\int_{\mathcal{Y}} \beta(y) d\sP_1(y)\! -\!\! \int_{\mathcal{Y}} \!\beta(y) d\sP_1^{T_f}(y) \bigg\rbrace,
\label{max-min-bb}
\end{equation}
where $T_f: dX_t = f(X_t, t)dt$ with $X_0 \sim \sP_0$ (the constraint $X_1 \sim \sP_1$ here is lifted) and $\beta\in\mathcal{C}_{2,b}(\mathcal{Y})$. Just like in the Entropic case, functional $\mathcal{L}$ can be viewed as the Lagrangian for BB \eqref{benamou-brenier} with $\beta$ playing the role of the Lagrange multiplier for the constraint $d\pi^{T_{f}}_{1}(y)=d\mathbb{P}_{1}(y)$. Naturally, it is expected that the value \eqref{benamou-brenier} coincides with \eqref{max-min-bb}, and we provide a \textit{sketch of the proof} of this fact.

Overall, the proof logic is analogous to the Entropic case but the actual proof is much more technical as we can not use the $KL$-divergence machinery which helps to avoid non-uniqueness, etc.

\textbf{Step 1 (Auxiliary functional, analog of Lemma~\ref{eot-relaxation}).} We introduce an auxiliary functional 
$$\widetilde{\mathcal{L}}(\beta,H)\defeq\int_{\mathcal{X}} \frac{1}{2}\|x-H(x)\|^{2}d\mathbb{P}_{0}(x)-\int_{\mathcal{X}} \beta(H(x))d\mathbb{P}_0(x)+\int_{\mathcal{Y}} \beta(y)d\mathbb{P}_1(y),$$
where $\beta$ is a potential and $H:\mathbb{R}^{D}\rightarrow\mathbb{R}^{D}$ is a measurable map. This functional is nothing but the well-known max-min reformulation of static OT problem (in Monge's form) with the quadratic cost \cite[Eq. 4]{amos2022amortizing}, \cite[Eq.9]{korotin2021neural}. Hence,
$$\sup_{\beta}\inf_{H}\widetilde{\mathcal{L}}(\beta,H)= \underbrace{\inf_{H\sharp \mathbb{P}_0 = \mathbb{P}_1} \int_{\mathcal{X}}  \frac{1}{2}||x - H(x)||^2 d\mathbb{P}_0(x)}_{\defeq \mathcal{L}^{*}}.$$

\textbf{Step 2 (Solution of the inner problem is always an OT map).} An existence of some minimizer $H=H^{\beta}$ in $\inf_{H}\widetilde{L}(\beta,H)$ can be deduced from the measurable argmin selection theorem, e.g., \cite[Theorem 18.19]{guide2006infinite}. For this $H^{\beta}$ we consider $\mathbb{P}'\stackrel{def}{=}H^{\beta}\sharp \mathbb{P}_0$. Recall that
$$H^{\beta}\in\arginf_{H}\widetilde{L}(\beta,H)=\arginf_{H}\int_{\mathcal{X}} \big\lbrace\frac{\|x-H(x)\|^{2}}{2}- \beta(H(x))\big\rbrace d\mathbb{P}_0(x).$$
Here we may add the fictive constraint $H\sharp\mathbb{P}_0=\mathbb{P}'$ which is anyway satisfied by $H^{\beta}$ and get
$$H^{\beta}\in \arginf_{H\sharp\mathbb{P}_0=\mathbb{P}'}\int_{\mathcal{X}} \big\lbrace \frac{\|x-H(x)\|^{2}}{2}- \beta(H(x))\big\rbrace d\mathbb{P}_0(x)= \arginf_{H\sharp\mathbb{P}_0=\mathbb{P}'}\int_{\mathcal{X}} \frac{\|x-H(x)\|^{2}}{2} d\mathbb{P}_0(x).$$
The last equality holds since $\int \beta(H(x)) d\mathbb{P}_0(x)=\int \beta(y) d\mathbb{P}'(y)$ does not depend on the choice of $H$ due to the constraint $H\sharp\mathbb{P}_0=\mathbb{P}'$. The latter is the OT problem between $\mathbb{P}$ and $\mathbb{P}'$ and we see that $H^{\beta}$ is its solution.

\textbf{Step 3 (Equivalence for inner objective values).} Since $H^{\beta}$ is the OT map between $\mathbb{P}_0,\mathbb{P}'$ (it is unique as $\mathbb{P}_0$ is absolutely continuous \cite{santambrogio2015optimal}), it can be represented as an ODE solution $T_{f^{\beta}}$ to the Benamour Brenier problem between $\mathbb{P}_0,\mathbb{P}'$, i.e., $T_{f^{\beta}}: dX_t = f^{\beta}(X_t, t)dt$ and $H^{\beta}(X_0) = X_0+\int_{0}^{1}f^{\beta}(X_t, t)dt$. Furthermore, in this case, $\|X_0-H^{\beta}(X_0)\|^{2}=\int_{0}^1 ||f^{\beta}(X_t, t)||^2 dt$. Hence, it can be derived that 
$$\inf_{H}\widetilde{\mathcal{L}}(\beta,H) = \inf_{T_f}\mathcal{L}(\beta, T_f).$$

\textbf{Step 4 (Equivalence of the saddle point objective).}  Take $\sup$ over $\beta\in\mathcal{C}_{b,2}(\mathcal{Y})$ and get the final equivalence:
$$\sup_{\beta}\inf_{H}\widetilde{\mathcal{L}}(\beta,H)= \sup_{\beta}\inf_{H}\mathcal{L}(\beta,T_{f})= \mathcal{L}^{*}.$$

\textbf{Step 5 (Dynamic OT solutions are contained in optimal saddle points).} Pick any optimal $\beta^{*}\in \argsup_{\beta}\inf_{H}\mathcal{L}(\beta,T_{f^*})$  and let $T_{f^*}$ be any solution to the Benamou-Brenier problem. Checking that $T^{*}\in \inf_{H}\mathcal{L}(\beta^{*},T_{f})$ can be done analogously to \cite[Lemma 4]{korotin2023neural}, \cite[Lemma 4.1]{rout2022generative}. \hfill$\square$

The derivation above shows the equivalence of objective values of dynamic unregularized OT \eqref{benamou-brenier} and our saddle point reformulation of BB \eqref{benamou-brenier}. Additionally, it shows that solutions $T_{f^*}$ can be recovered from \textit{some} optimal saddle points $(\beta^{*},T_{f^*})$ of our problem. At the same time, \textit{unlike the EOT case} ($\epsilon>0$), it is not guaranteed that for all the optimal saddle points $(\beta^{*},T_{f^*})$ it holds that $T_{f^*}$ is the solution to the BB problem. This aspect seems to be closely related to the \textit{fake solutions} issue in the saddle point methods of OT \cite{korotin2023kernel} and may require further studies.


\color{black}

\end{document}